\documentclass{article}

\makeatletter
\@namedef{ver@algorithm.sty}{2000/12/24}
\@namedef{ver@algorithmic.sty}{2000/12/24}
\makeatother

\usepackage{microtype}
\usepackage{graphicx}
\usepackage{subcaption}
\usepackage{booktabs} 

\usepackage{hyperref}
\usepackage{titletoc}
\usepackage{minitoc}   %

\usepackage[preprint]{icml2026}

\usepackage{amsmath}
\usepackage{amssymb}
\usepackage{mathtools}
\usepackage{amsthm}

\usepackage[capitalize,noabbrev]{cleveref}

\theoremstyle{plain}

\usepackage[textsize=tiny]{todonotes}

\usepackage{caption}

\usepackage{mathtools}
\usepackage{amsthm}
\usepackage{enumitem}

\usepackage[ruled,vlined,linesnumbered]{algorithm2e}
\SetAlCapHSkip{0.8em}
\SetKwInput{Input}{Input}
\SetKwInput{Output}{Output}
\SetKwProg{Fn}{Function}{}{:}{}

\SetCommentSty{mycommfont}

\newcommand{\algcomment}[1]{%
    \CommentSty{$\triangleright$~#1}
}

\usepackage{cleveref}
\crefname{theorem}{Theorem}{Theorems}
\crefname{lemma}{Lemma}{Lemmas}
\crefname{assumption}{Assumption}{Assumptions}
\crefname{proposition}{Proposition}{Propositions}
\crefname{property}{Property}{Properties}
\crefname{corollary}{Corollary}{Corollaries}
\crefname{figure}{Figure}{Figures}
\crefname{section}{Section}{Sections}
\crefname{definition}{Definition}{Definitions}
\crefname{table}{Table}{Tables}
\crefname{algorithm}{Algorithm}{Algorithms}
\crefname{algocf}{Algorithm}{Algorithms}
\Crefname{algocf}{Algorithm}{Algorithms}
\crefname{example}{Example}{Examples}
\crefname{remark}{Remark}{Remarks}
\crefname{appendix}{Appendix}{Appendices}

\usepackage[dvipsnames]{xcolor}        

\usepackage[textsize=tiny]{todonotes}

\newtheorem{theorem}{Theorem}

\newtheorem{lemma}{Lemma}
\newtheorem{proposition}{Proposition}
\newtheorem{property}{Property}

\newtheorem{remark}{Remark}
\newtheorem{definition}{Definition}

\theoremstyle{definition}
\newtheorem{example}{Example}

\usepackage{float}

\usepackage{multirow}

\usepackage{centernot}

\newcommand{\CI}{\mathrel{\perp\mspace{-10mu}\perp}}
\newcommand{\nCI}{\centernot{\CI}}

\newdimen\arrowsize

\pgfarrowsdeclare{arcsq}{arcsq}
{
	\arrowsize=0.3pt
	\advance\arrowsize by .5\pgflinewidth
	\pgfarrowsleftextend{-10\arrowsize-.10\pgflinewidth}
	\pgfarrowsrightextend{.10\pgflinewidth}
}
{
	\arrowsize=0.5pt
	\advance\arrowsize by .5\pgflinewidth
	\pgfsetdash{}{0pt} 
	\pgfsetroundjoin   
	\pgfsetroundcap    
	\pgfpathmoveto{\pgfpoint{0\arrowsize}{0\arrowsize}}
	\pgfpatharc{-90}{-130}{6\arrowsize}
	\pgfusepathqstroke
	\pgfpathmoveto{\pgfpointorigin}
	\pgfpatharc{90}{130}{6\arrowsize}
	\pgfusepathqstroke
}
\pgfarrowsdeclare{arcsqblue}{arcsqblue}
{
	\arrowsize=0.5pt 
	\advance\arrowsize by .5\pgflinewidth
	\pgfarrowsleftextend{-12\arrowsize-.10\pgflinewidth}
	\pgfarrowsrightextend{.10\pgflinewidth}
}
{
	\arrowsize=0.8pt 
	\advance\arrowsize by .5\pgflinewidth
	\pgfsetdash{}{0pt} 
	\pgfsetroundjoin   
	\pgfsetroundcap    
	\pgfsetcolor{blue} 
	\pgfsetlinewidth{1.5pt} 
	\pgfpathmoveto{\pgfpoint{0\arrowsize}{0\arrowsize}}
	\pgfpatharc{-90}{-130}{6\arrowsize}
	\pgfusepathqstroke
	\pgfpathmoveto{\pgfpointorigin}
	\pgfpatharc{90}{130}{6\arrowsize}
	\pgfusepathqstroke
}

\definecolor{nodefill}{RGB}{240,240,240} 
\definecolor{nodeborder}{RGB}{100,100,100} 
\tikzset{
  every path/.style={line width=0.75pt},
  mag_node/.style={
        circle, 
        draw=nodeborder, 
        fill=nodefill, 
        thick, 
        minimum size=6mm, 
        inner sep=0pt,
        font=\small\sffamily
    }
}

\makeatletter
\pgfarrowsdeclare{circle}{circle}
{
    \pgfarrowsleftextend{0pt}
    \pgfarrowsrightextend{6\pgflinewidth}
}
{
    \pgfpathcircle{\pgfqpoint{1.5pt}{0pt}}{1.5pt} 
    \pgfusepathqstroke 
}
\makeatother

\tikzstyle{latent}=[circle, draw=black, dashed, fill=black!20, minimum size=5.5mm, line width=0.7pt, inner sep=0pt]

\tikzstyle{obs}=[circle, draw=Gray, fill=Gray!20,
minimum size=5.5mm, inner sep=0pt]

\newenvironment{nospaceflalign*}
 {\setlength{\abovedisplayskip}{1pt}\setlength{\belowdisplayskip}{1pt}%
  \csname flalign*\endcsname}
 {\csname endflalign*\endcsname\ignorespacesafterend}

\newcommand{\eg}{{e.g.~}}           
\newcommand{\ie}{{i.e.}}           

\DeclareMathOperator{\MAG}{\mathcal{M}}
\DeclareMathOperator{\PAG}{\mathcal{P}}
\DeclareMathOperator{\DAG}{\mathcal{D}}

\newcommand{\vars}[1][V]{\mathbf{#1}}
\newcommand{\e}[1][E]{\mathbf{#1}}
\newcommand{\g}[1][G]{\mathcal{#1}}  

\DeclareMathOperator{\An}{\mathrm{An}}
\DeclareMathOperator{\Anplus}{\mathrm{An^{+}}}
\DeclareMathOperator{\Pa}{\mathrm{Pa}}

\DeclareMathOperator{\MB}{\mathrm{MB}}

\icmltitlerunning{A Recursive Decomposition Framework for Causal Structure Learning in the Presence of Latent Variables}

\begin{document}

\twocolumn[
\icmltitle{A Recursive Decomposition Framework for Causal Structure Learning \\in the Presence of Latent Variables}



  \icmlsetsymbol{equal}{*}

  \begin{icmlauthorlist}
    \icmlauthor{Zheng Li}{siat,btbu,fd}
    \icmlauthor{Feng Xie}{btbu}
    \icmlauthor{Shenglan Nie}{btbu}
    \icmlauthor{Xichen Guo}{btbu}
    \icmlauthor{Ruxin Wang}{siat}
    \icmlauthor{Hao Zhang}{siat}
  \end{icmlauthorlist}

  \icmlaffiliation{siat}{Shenzhen Institute of Advanced Technology, Chinese Academy of Sciences, Shenzhen, China}
  \icmlaffiliation{btbu}{Department of Applied Statistics, Beijing Technology and Business University, Beijing, China}
  \icmlaffiliation{fd}{College of Computer Science and Artificial Intelligence, Fudan University, Shanghai, China}

  \icmlcorrespondingauthor{Feng Xie}{fengxie@btbu.edu.cn}
  \icmlcorrespondingauthor{Ruxin Wang}{rx.wang@siat.ac.cn}
  \icmlcorrespondingauthor{Hao Zhang}{h.zhang10@siat.ac.cn}

  \icmlkeywords{Machine Learning, ICML}

  \vskip 0.3in
]



\printAffiliationsAndNotice{}  

\begin{abstract}
Constraint-based causal discovery is widely used for learning causal structures, but heavy reliance on conditional independence (CI) testing makes it computationally expensive in high-dimensional settings.
To mitigate this limitation, many divide-and-conquer frameworks have been proposed, but most assume causal sufficiency, i.e., no latent variables.
In this paper, we show that divide-and-conquer strategies can be theoretically generalized beyond causal sufficiency to settings with latent variables. 
Specifically, we propose a recursive decomposition framework, termed \textsc{DiCoLa}, that enables divide-and-conquer causal discovery in the presence of latent variables. It recursively decomposes the global learning task into smaller subproblems and integrates their solutions through a principled reconstruction step to recover the global structure.
We theoretically establish the soundness and completeness of the proposed framework. Extensive experiments on synthetic data demonstrate that our approach significantly improves computational efficiency across a range of causal discovery algorithms, while experiments on a real-world dataset further illustrate its practical effectiveness.
Our source code is available at \href{https://github.com/zhengli0060/DiCoLa-ICML2026}{github.com/zhengli0060/DiCoLa-ICML2026}.

\end{abstract}

\section{Introduction}
\label{sec:introduction}

Causal discovery, \ie, inferring causal relations from data, is a fundamental problem across many disciplines, including computer science \cite{peters2017elements,pearl2018theoretical,scholkopf2022causality}, social science \cite{spirtes2000causation}, epidemiology \cite{hernan2010causal}, biology \cite{glymour2019review}, and neuroscience \cite{smith2011network,sanchez2019estimating}. The resulting causal relationships are essential for predicting how a system would respond to external interventions, which is central to both understanding and manipulating complex systems \cite{pearl2009causality}. 
For instance, in medical diagnosis, understanding the causal relationships between symptoms and diseases enables clinicians to identify root causes more efficiently and deliver more targeted interventions.
However, learning such relationships from purely observational data is inherently challenging, and the presence of latent variables further exacerbates this difficulty by inducing complex dependencies among observed variables \cite{spirtes2016causal}.

The seminal FCI (Fast Causal Inference) algorithm, proposed by \citet{spirtes1995fci}, can recover causal structures in the presence of latent variables by leveraging conditional independence (CI) relations derived from observational data.
Subsequent work has focused on improving its efficiency by reducing the number of required CI tests. For example, RFCI~\cite{colombo2012rfci} accelerates learning by relaxing certain CI search steps, making it more suitable for sparse, high-dimensional systems.
\citet{rohekar2021iterative} proposed the ICD algorithm, which restricts CI conditioning sets based on the topological distance between tested variables, thereby reducing the number of CI tests.
Additionally, \citet{akbari2021recursive} introduced L-MARVEL, which identifies and recursively removes a specific class of variables to further reduce CI testing complexity.
Other related developments include~\cite{pellet2008finding,claassen2011logical,claassen2013learning,mokhtarian2023novel,mokhtarian2025recursive}.
Despite these algorithmic improvements, the computational cost of CI testing remains prohibitively high, especially in large-scale systems.


Divide-and-conquer frameworks provide an effective strategy for reducing the cost of CI testing, particularly in high-dimensional settings.
A series of such approaches~\cite{xie2006decomposition,xie2008recursive,cai2013sada,cai2017sada} and their variants~\cite{liu2017new,cai2017sophisticated,zhang2019recursively,zhang2020learning,rahman2021accelerating,zhang2024towards} have been developed.
At a high level, as illustrated in \cref{fig:example_decompose}, these methods decompose the global causal structure learning task over the full variable set $\vars[V]$ into subproblems defined on smaller subsets ${\vars[V]_1, \vars[V]_2, \dots, \vars[V]_m}$. Each subproblem is solved independently using existing learners such as IC~\cite{verma1990equivalence} or PC~\cite{spirtes1991PC}, and the resulting substructures are then merged into a global solution.
By replacing exhaustive CI testing over $\vars[V]$ with localized tests on subsets $\vars[V]_i$, these methods significantly shrink the search space and avoid redundant computations~\cite{xie2008recursive,zhang2024towards}.
However, all existing divide-and-conquer approaches rely critically on the assumption of \emph{causal sufficiency}—that no latent variable is a common cause of two or more observed variables—which fundamentally limits their applicability in real-world systems where latent variables are often present.
\begin{figure}[t!]
    \centering
    \includegraphics[trim=12.0cm 14.3cm 10cm 0.5cm, clip, width=0.45\textwidth, page=1]{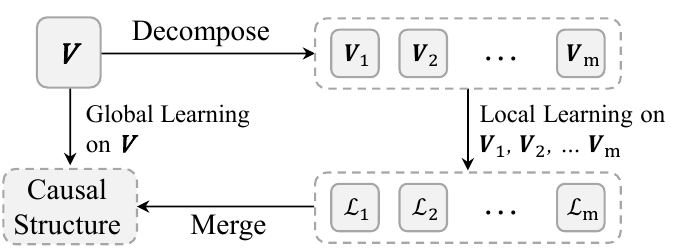}
  
    \caption{The divide-and-conquer for the full variable set $\vars[V]$.}
    \label{fig:example_decompose}
    \vspace{-0.7cm}
\end{figure}

This naturally leads to an important theoretical question:

\vspace{-0.1cm}
\begin{center}
\begin{minipage}{0.92\linewidth}
\emph{Can divide-and-conquer strategies for causal structure learning be theoretically generalized beyond causal sufficiency to settings with latent variables?}
\end{minipage}
\end{center}


In this paper, inspired by the recursive divide-and-conquer framework of~\citet{xie2008recursive} under causal sufficiency, we provide a positive answer to this question.
Our main contributions are summarized as follows:
\begin{itemize}[leftmargin=14pt,itemsep=2pt,topsep=2pt,parsep=0pt]
    \item We establish the first theoretical foundation for divide-and-conquer causal structure learning in the presence of latent variables. Notably, our theoretical results (\cref{theorem:recursive-m-separation,theorem:recursive-m-separation-second,theorem:merged-skeleton}) naturally reduce to those of~\citet{xie2008recursive} under causal sufficiency.
    
    \item We propose a novel recursive decomposition framework, termed \textbf{\textsc{DiCoLa}}, which improves the computational efficiency of causal structure learning in the presence of latent variables. We further prove that \textsc{DiCoLa} is sound and complete.
    
    \item We conduct extensive experiments on random and benchmark structures, showing that our framework substantially accelerates FCI~\cite{spirtes1995fci}, RFCI~\cite{colombo2012rfci}, FCI$^{+}$~\cite{claassen2013learning},  L-MARVEL~\cite{akbari2021recursive}, and ICD~\cite{rohekar2021iterative}. Experiments on real-world datasets further demonstrate its practical effectiveness.
\end{itemize}

\section{Related Work}
\label{sec:related-work}

We briefly review constraint-based causal structure learning methods most relevant to our work. For comprehensive surveys, we refer the reader to
\citet{spirtes2016causal,heinze2018causal,glymour2019review,kitson2023survey,mokhtarian2025recursive}.

\textbf{In the absence of latent variables.}
Classical constraint-based methods such as the IC algorithm~\cite{verma1990equivalence} and the PC algorithm~\cite{spirtes1991PC} recover causal structures by testing conditional independences. A number of extensions have been proposed to improve efficiency and robustness~\cite{harris2013pc,colombo2014order,le2016multipc,rohekar2018bayesian,mokhtarian2021recursive,mokhtarian2022learning,shiragur2024causal,zhang2024membership}. 
To address scalability in high-dimensional systems, a significant line of work develops divide-and-conquer frameworks~\cite{xie2006decomposition,xie2008recursive,cai2013sada,cai2017sada}, which decompose the global learning problem into smaller subproblems. Subsequent variants further refine the decomposition strategy to improve efficiency and accuracy~\cite{liu2017new,cai2017sophisticated,zhang2019recursively,zhang2020learning,rahman2021accelerating,zhang2024towards}. 
However, all these methods rely on the assumption of causal sufficiency, \ie, that no latent variable acts as a common cause of multiple observed variables—an assumption not required by our framework.

\textbf{In the presence of latent variables.}
When latent confounders may be present, the seminal FCI algorithm~\cite{spirtes1995fci,zhang2008completeness} extends PC to this setting and is sound and complete under standard assumptions. Building on FCI, several algorithms aim to improve efficiency. For example, RFCI~\cite{colombo2012rfci} reduces the number of CI tests by relaxing certain search steps, making it more suitable for high-dimensional sparse graphs. Other developments include logical, hybrid, and local approaches~\cite{pellet2008finding,claassen2011logical,claassen2013learning,ogarrio2016hybrid,tsirlis2018scoring,chen2023causal}, as well as more recent iterative and recursive techniques~\cite{rohekar2021iterative,rohekar2023temporal,akbari2021recursive,mokhtarian2025recursive}. 
While these algorithms can handle latent variables, they still suffer from the high computational cost of CI testing, particularly in high-dimensional settings, as also reflected in the number of CI tests reported in our experiments.

\textbf{Gap to our work.}
To the best of our knowledge, no prior work has established a theoretically justified divide-and-conquer framework for causal structure learning in the presence of latent confounders.

\section{Preliminaries}
\label{sec:preliminaries}
\subsection{Terminology}

We adopt standard terminology from causal graphical models with latent variables~\cite{pearl2009causality,richardson2002ancestral,zhang2008causal}. 
Individual variables (vertices) are denoted by uppercase letters (e.g., $V$), and sets of variables by bold uppercase letters (e.g., $\mathbf{V}$). We use standard notions of adjacency, parents, children, spouses, ancestors, descendants, and colliders.
Below we introduce several frequently used definitions; additional terminology and formal definitions are provided in Appendix~\ref{Appendix:Preliminaries},
and a summary of the main symbols is provided in Table~\ref{table:list-symbols}.

\textbf{Mixed graph.} 
A \emph{directed mixed graph} contains directed ($\rightarrow$) and bi-directed ($\leftrightarrow$) edges.
An \emph{undirected graph} contains only undirected edges ($\--$).
A \emph{directed path} from $X$ to $Y$ is a path composed of directed edges pointing towards $Y$, \ie, $X\rightarrow \ldots \rightarrow Y$.
A non-endpoint vertex $V$ on a path is a \emph{collider} if the edges preceding and succeeding $V$ on the path both have arrowheads into $V$; otherwise, it is a \emph{non-collider}.
A path is a \emph{collider path} if every non-endpoint vertex on it is a collider along the path, \eg, $X\rightarrow A \leftrightarrow \ldots \leftrightarrow B \leftarrow Y$. Two vertices $X$ and $Y$ are \emph{collider connected} if there exists a collider path between $X$ and $Y$, including the case where $X$ and $Y$ are adjacent.
An \emph{almost directed cycle} occurs when $X$ is a spouse of $Y$, and there exists a directed path from $X$ to $Y$.
A \emph{directed cycle} occurs when $X$ is a child of $Y$, and there exists a directed path from $X$ to $Y$.

\textbf{M-separation.}
In a directed mixed graph, a path between vertices $X$ and $Y$ is said to be \emph{m-connecting} (active) relative to a set of vertices $\mathbf{Z}$ ($X,Y \notin \mathbf{Z}$) if (i) every non-collider on the path is not in $\mathbf{Z}$, and (ii) every collider on the path has a descendant in $\mathbf{Z}$.  
Vertices $\mathbf{X}$ and $\mathbf{Y}$ are \emph{m-separated} by $\mathbf{Z}$ if there is no m-connecting path between any $X \in \mathbf{X}$ and $Y \in \mathbf{Y}$ relative to $\mathbf{Z}$, and $\mathbf{Z}$ called a \emph{separating set} for $\mathbf{X}$ and $\mathbf{Y}$.
The notion of m-separation generalizes the classical d-separation criterion for DAGs to directed mixed graphs.
In an undirected graph, vertices $\mathbf{X}$ are said to be \emph{separated} from vertices $\mathbf{Y}$ by a set of vertices $\mathbf{Z}$ if every path between $\mathbf{X}$ and $\mathbf{Y}$ contains at least one vertex in $\mathbf{Z}$.

\textbf{Ancestral Graph.}
A directed mixed graph is \emph{ancestral} if it doesn’t contain a directed or almost directed cycle.
An ancestral graph is called \emph{maximal} (MAG, denoted by $\MAG$) if for any two non-adjacent vertices, there exists a set of vertices that m-separates them. A MAG is a \emph{directed acyclic graph} (DAG) if it has only directed edges.
Two MAGs are \emph{Markov equivalent} if they share the same m-separations.
A class of Markov equivalent MAGs $[\MAG]$ can be represented as a \emph{partially ancestral graph} (PAG), where a tail `$\--$' or arrowhead `$>$' occurs if the corresponding mark is tail or arrowhead in every $\MAG \in [\MAG]$, and a circle `$\circ$' occurs if the corresponding mark is not invariant in every $\MAG \in [\MAG]$.


\textbf{Local skeleton.}
Given a MAG $\MAG$ and a vertex subset $\vars[K]$, the \emph{local skeleton} over $\vars[K]$, denoted by $\g[L]_{\vars[K]}$, is the undirected graph that connects $X$ and $Y$ if and only if no subset of $\vars[K]$ m-separates $X$ and $Y$ in $\mathcal{G}$.
When $\vars[K] = \vars[O]$, the local skeleton coincides with the skeleton of $\MAG$.
A \emph{tripartition} of a set $\vars[K]$ is a collection $(\vars[A], \vars[B], \vars[C])$ of three pairwise disjoint subsets whose union is $\vars[K]$.

\textbf{Conditional independence.}
For three disjoint variable sets $\vars[X]$, $\vars[Y]$, and $\vars[Z]$, we write $\mathbf{X} \CI \mathbf{Y} \mid \mathbf{Z}$ to denote that $\mathbf{X}$ is statistically independent of $\mathbf{Y}$ given $\mathbf{Z}$, and $\mathbf{X} \nCI \mathbf{Y} \mid \mathbf{Z}$ otherwise.

\textbf{Markov blanket.}
Given a variable set $\vars[V]$, the \emph{Markov blanket (MB)} of a variable $X \in \vars[V]$, denoted by $\MB_{\vars[V]}(X)$, is the minimal subset of $\vars[V]\setminus\{X\}$ such that $X$ is conditionally independent of all other variables given $\MB_{\vars[V]}(X)$.

\subsection{Problem Setup}

We consider a causal graphical model in which the full variable set $\mathbf{V}$ consists of observed variables $\mathbf{O}$ and latent variables $\mathbf{L}$. The underlying causal structure is represented by a DAG over $\mathbf{V}$, and we assume no selection bias.
We further assume the standard causal Markov and Faithfulness conditions, under which conditional independences in the observed distribution are equivalent to m-separations in the underlying structure ~\cite{pearl2009causality,zhang2008causal}.

\textbf{Goal.} Given data over $\mathbf{O}$, our goal is to develop a divide-and-conquer framework that improves the computational efficiency of causal structure learning while correctly recovering the partial ancestral graph (PAG) representing the Markov equivalence class of the underlying MAG.


\section{Foundations of Recursive Decomposition}
\label{sec:theoretical-results}


In this section, we establish the theoretical foundations that justify divide-and-conquer causal structure learning in the presence of latent variables.
Specifically, we address the following fundamental question:


\begin{center}
\begin{minipage}{0.92\linewidth}
\emph{Under what structural conditions can global causal structure learning be decomposed into smaller subproblems while preserving m-separation relations?}
\end{minipage}
\end{center}

Before formalizing the answer, we first outline the recursive decomposition procedure.
Intuitively, given a set of observed variables $\vars[O]$, the procedure identifies a tripartition $(\vars[A], \vars[B], \vars[C])$ such that $\vars[A] \CI \vars[B] \mid \vars[C]$ (we describe how to identify such tripartitions in detail in~\cref{sec:find-decompositions}).
The global learning problem on $\vars[O]$ is then decomposed into two smaller subproblems defined on $\vars[A]\cup\vars[C]$ and $\vars[B]\cup\vars[C]$. 
This process is applied recursively until no further decomposition is possible, yielding a hierarchical structure that can be represented as a binary \emph{decomposition tree} (see \cref{fig:figure-main}(d)).

We now answer the fundamental question by establishing the following results on the behavior of m-separation in a MAG (\cref{theorem:recursive-m-separation,theorem:recursive-m-separation-second}). 
These results reveal a form of \emph{top-down transitivity} of m-separation relations across the decomposition tree.




\begin{theorem}
    \label{theorem:recursive-m-separation}
    Let $\vars[A]$, $\vars[B]$, and $\vars[C]$ be three disjoint subsets of vertices in a MAG $\MAG$ such that $\vars[A] \CI \vars[B] \mid \vars[C]$. For any $X\in \vars[A]$ and $Y \in \vars[A]\cup\vars[C]$, $X$ and $Y$ are m-separated by a subset of $\vars[A]\cup\vars[B]\cup\vars[C]$ if and only if they are m-separated by a subset of $\vars[A]\cup\vars[C]$.
\end{theorem}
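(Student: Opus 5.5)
The direction ``if'' is immediate, since a subset of $\vars[A]\cup\vars[C]$ is also a subset of $\vars[A]\cup\vars[B]\cup\vars[C]$. For ``only if'' I will translate both sides into adjacency statements in marginal MAGs.

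\textbf{Reduction to adjacencies.} Let $\vars[W]=\vars[A]\cup\vars[B]\cup\vars[C]$. Let $\MAG_{\vars[W]}$ be the MAG obtained from $\MAG$ by marginalizing out all vertices outside $\vars[W]$. Let $\MAG_{\vars[A]\cup\vars[C]}$ be the MAG obtained from $\MAG_{\vars[W]}$ by further marginalizing out $\vars[B]$. I will use the standard facts of Richardson and Spirtes on this marginalization:
\begin{enumerate}[leftmargin=14pt,itemsep=1pt,topsep=2pt]
\item It preserves all m-separations among the retained vertices. In particular $\vars[A]\CI\vars[B]\mid\vars[C]$ still holds in $\MAG_{\vars[W]}$.
\item It is transitive.
\item Two retained vertices are adjacent in the marginal iff there is an inducing path between them relative to the removed set.
\end{enumerate}
Since MAGs are maximal, ``$X,Y$ are m-separated by some subset of $\vars[W]$'' means $X,Y$ are non-adjacent in $\MAG_{\vars[W]}$. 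Likewise, ``$X,Y$ are m-separated by some subset of $\vars[A]\cup\vars[C]$'' means $X,Y$ are non-adjacent in $\MAG_{\vars[A]\cup\vars[C]}$. So it suffices to prove the contrapositive: if $X\in\vars[A]$ and $Y\in\vars[A]\cup\vars[C]$ are adjacent in $\MAG_{\vars[A]\cup\vars[C]}$, then they are adjacent in $\MAG_{\vars[W]}$.

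\textbf{Inducing path.} Suppose $X,Y$ are adjacent in $\MAG_{\vars[A]\cup\vars[C]}$ but not in $\MAG_{\vars[W]}$. Then there is an inducing path $\pi$ in $\MAG_{\vars[W]}$ between $X$ and $Y$ relative to $\vars[B]$ that contains at least one vertex of $\vars[B]$. By definition of an inducing path:
\begin{enumerate}[leftmargin=14pt,itemsep=1pt,topsep=2pt]
\item every non-endpoint of $\pi$ outside $\vars[B]$ is a collider on $\pi$;
\item every collider on $\pi$ is an ancestor of $X$ or $Y$.
\end{enumerate}
Let $W_1$ be the first vertex of $\pi$ in $\vars[B]$ and let $P$ be its predecessor on $\pi$, so $P\in\vars[A]\cup\vars[C]$. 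If $P\in\vars[A]$, then $P$ and $W_1$ are adjacent in $\MAG_{\vars[W]}$, which contradicts $\vars[A]\CI\vars[B]\mid\vars[C]$. Hence $P\in\vars[C]$. Symmetrically, the vertex $Q$ that follows the last maximal $\vars[B]$-block lies in $\vars[C]$. Note that $P$ and $Q$ are non-endpoints of $\pi$ outside $\vars[B]$, hence colliders on $\pi$. Every collider is an ancestor of $X$ or $Y$, so all of them lie in $\An(\vars[A]\cup\vars[C])$.

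\textbf{Contradiction via an m-connecting path.} From $\pi$ I will build an m-connecting path between some vertex of $\vars[A]$ and some vertex of $\vars[B]$ given $\vars[C]$, contradicting the hypothesis. I would try two routes.
\begin{enumerate}[leftmargin=14pt,itemsep=1pt,topsep=2pt]
\item \emph{Collider at $P$.} Since $P\in\vars[C]$ is a collider on $\pi$, it has an arrowhead from $W_1$. Together with the edge on the other side of $P$, this suggests concatenating: the segment of $\pi$ from $X$ to $P$, whose interior consists of colliders in $\vars[A]\cup\vars[C]$, followed by the edge $P\ast\!\!\to W_1$. Each interior collider must be in $\vars[C]$ or have a descendant in $\vars[C]$. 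Colliders in $\vars[A]$ only have $X$ or $Y$ as descendants, so this is the delicate point. To handle them, I would pick among all such configurations one minimizing the path length, and show that any collider in $\vars[A]$ on the prefix lets me reroute through its directed path to $X$ or $Y$, yielding a shorter bad configuration.
\item \emph{Moralization criterion (fallback).} In $\MAG_{\vars[W]}$, $\vars[A]\CI\vars[B]\mid\vars[C]$ means $\vars[A]$ and $\vars[B]$ are separated by $\vars[C]$ in the augmented (moral) graph of $\An(\vars[W])$. The vertices of $\pi$ are all in this ancestral set. Consecutive collider-connected vertices are adjacent in the augmented graph. The $\vars[B]$-blocks attach only to $\vars[C]$ vertices. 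Using ancestral graph structure, the endpoints of each block are joined by a collider path through $\vars[B]$. I would then show that this forces either an $\vars[A]$–$\vars[B]$ edge in the augmented graph avoiding $\vars[C]$, or else that $P$ and $Q$ can be bypassed. The bypass would produce an inducing path relative to $\emptyset$, i.e.\ an adjacency of $X,Y$ already in $\MAG_{\vars[W]}$.
\end{enumerate}

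\textbf{Main obstacle.} I expect the hard step to be this last one. One must show that the $\vars[B]$-segments of an inducing path, all of whose attachment points lie in $\vars[C]$, cannot be essential, i.e.\ that they can be excised while keeping an inducing path between $X$ and $Y$. The collider/ancestor bookkeeping for $\vars[A]$-colliders, which are ancestors only of $X$ or $Y$ and not of $\vars[C]$, is where the argument is most likely to need care.
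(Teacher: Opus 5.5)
\paragraph{Where the proposal stops short.} Your reduction is sound. Passing to the marginal MAG $\MAG_{\vars[W]}$, which preserves $\vars[A]\CI\vars[B]\mid\vars[C]$, turns both sides into adjacency statements. You then extract an inducing path $\pi$ in $\MAG_{\vars[W]}$ relative to $\vars[B]$ that contains a $\vars[B]$-vertex. Your observation that the predecessor $P$ of the first $\vars[B]$-vertex $W_1$ must lie in $\vars[C]$ is also correct. (A small slip: when $Y\in\vars[C]$, the vertex $Q$ may be $Y$ itself and need not be a collider, but you never need $Q$.)

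The genuine gap is the final step, which you explicitly leave open, so as written the proposal does not prove the theorem. Neither route is carried out. Route~1 stalls because you insist on an m-connecting path that starts at $X$ and conditions only on $\vars[C]$. That forces you to worry about colliders in $\vars[A]$ that are ancestors only of $X$ or $Y$. The minimal-length ``rerouting'' is only gestured at, and it is unclear how it would preserve the collider structure. Route~2's excision of $\vars[B]$-blocks turns out to be unnecessary.

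\paragraph{The missing idea.} Generalize your own $P$-argument instead of starting at $X$. Let $U$ be the \emph{last} vertex of $\vars[A]$ on $\pi$ before $W_1$; it exists because $X\in\vars[A]$. Each interior vertex of $\pi(U,W_1)$ lies in $\vars[W]$. It is not in $\vars[B]$ (it precedes $W_1$) and not in $\vars[A]$ (it follows $U$), so it lies in $\vars[C]$. As a non-endpoint of the inducing path outside $\vars[B]$, it is a collider. Hence $\pi(U,W_1)$ m-connects $U\in\vars[A]$ and $W_1\in\vars[B]$ given $\vars[C]$ in $\MAG_{\vars[W]}$, contradicting $\vars[A]\CI\vars[B]\mid\vars[C]$. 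Your case $P\in\vars[A]$ is exactly the case where this subpath is a single edge.

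So $\pi$ contains no $\vars[B]$-vertex. It is then an inducing path relative to $\emptyset$ in the MAG $\MAG_{\vars[W]}$, and $X,Y$ are adjacent there. Equivalently, you can keep $X$ as the start and use weak union: $\vars[A]\CI\vars[B]\mid\vars[C]$ gives $X\CI W_1\mid\vars[C]\cup\vars[A]'$ for any $\vars[A]'\subseteq\vars[A]\setminus\{X\}$, so the $\vars[A]$-colliders can simply be added to the conditioning set.

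\paragraph{Comparison with the paper.} The weak-union device is exactly what the paper uses, but the paper stays in $\MAG$ rather than passing to a marginal MAG. It takes the canonical separator $\vars[Z]=\An(\{X,Y\},\MAG)\cap(\vars[A]\cup\vars[B]\cup\vars[C])$ and shows that $\vars[S]=\An(\{X,Y\},\MAG)\cap(\vars[A]\cup\vars[C])$ still separates $X$ and $Y$. Paths leaving $\Anplus(\{X,Y\},\MAG)$ are handled by a separate lemma. Any other path that is m-connecting given $\vars[S]$ but blocked by $\vars[Z]$ must hit some $V\in\vars[B]$. The prefix up to the first such $V$ then m-connects $X$ and $V$ given $\vars[C]\cup\vars[S]$, contradicting $X\CI V\mid\vars[C]\cup(\vars[S]\cap\vars[A])$.

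Once completed, your inducing-path version avoids the ancestral-set lemmas and handles $\vars[A]\cup\vars[B]\cup\vars[C]$ being a proper subset of the vertices cleanly. In exchange, it relies on the Richardson--Spirtes marginalization and inducing-path theorems.
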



\cref{theorem:recursive-m-separation} formalizes the intuition that, once a valid tripartition $(\vars[A], \vars[B], \vars[C])$ is identified with $\vars[A] \CI \vars[B] \mid \vars[C]$, variables in $\vars[B]$ become irrelevant when determining separating sets for pairs $(X,Y)$ with $X \in \vars[A]$ and $Y \in \vars[A] \cup \vars[C]$. 
If a separating set exists in the full set $\vars[O]$, one must also exist within $\vars[A]\cup\vars[C]$, and conversely.

\begin{theorem}
    \label{theorem:recursive-m-separation-second}
    Let $\vars[A]$, $\vars[B]$, and $\vars[C]$ be three disjoint subsets of vertices in a MAG $\MAG$ such that $\vars[A] \CI \vars[B] \mid \vars[C]$. For any distinct vertices $X, Y \in \vars[C]$, they are m-separated by a subset of $\vars[A]\cup\vars[B]\cup\vars[C]$ if and only if they are m-separated by a subset of $\vars[A]\cup\vars[C]$ or by a subset of $\vars[B]\cup\vars[C]$.
\end{theorem}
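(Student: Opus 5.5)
The ``if'' direction is immediate, since a subset of $\vars[A]\cup\vars[C]$ or of $\vars[B]\cup\vars[C]$ is also a subset of $\vars[A]\cup\vars[B]\cup\vars[C]$. For ``only if'', I would follow the route of the DAG result of \citet{xie2008recursive} and replace d-separation by the augmented (moral) graph criterion for m-separation. Recall that $X$ and $Y$ are m-separated by $\vars[Z]$ in $\MAG$ if and only if $\vars[Z]$ separates $X$ and $Y$ in the augmented graph of the ancestral set $\An(\{X,Y\}\cup\vars[Z])$. In the augmented graph, two vertices are joined whenever they are collider connected within that ancestral set.

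\textbf{Step 1: a separator inside $\An(\{X,Y\})$.} If $X$ and $Y$ are m-separated by some subset of $\vars[O]=\vars[A]\cup\vars[B]\cup\vars[C]$, then they are non-adjacent. By maximality and the standard ancestral-graph result, they are then m-separated by some $\vars[S]_0\subseteq \An(\{X,Y\})\setminus\{X,Y\}$. Let $\vars[T]=\An(\{X,Y\})$ and let $\g[H]$ be the augmented graph of $\MAG_{\vars[T]}$. Then $\vars[S]_0$ separates $X$ from $Y$ in $\g[H]$.

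\textbf{Step 2: transferring $\vars[A]\CI\vars[B]\mid\vars[C]$ to $\g[H]$.} The hypothesis says that $\vars[C]$ separates $\vars[A]$ from $\vars[B]$ in the augmented graph of $\An(\vars[O])$. Every collider path inside the smaller ancestral set $\vars[T]$ is also a collider path inside $\An(\vars[O])$. Hence $\g[H]$ is a subgraph of that larger augmented graph, and $\vars[C]\cap\vars[T]$ separates $\vars[A]\cap\vars[T]$ from $\vars[B]\cap\vars[T]$ in $\g[H]$. Consequently, after deleting $\vars[C]$, every connected piece of $\g[H]$ lies entirely on the $\vars[A]$-side or entirely on the $\vars[B]$-side. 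In particular, any path of $\g[H]$ between $X,Y\in\vars[C]$ splits into $\vars[C]$-vertices together with ``excursions'' into $\vars[A]$ or into $\vars[B]$, and each excursion starts and ends at a vertex of $\vars[C]$.

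\textbf{Step 3: choosing the candidate separators.} Set $\vars[S]_A=\vars[S]_0\cap(\vars[A]\cup\vars[C])$ and $\vars[S]_B=\vars[S]_0\cap(\vars[B]\cup\vars[C])$. Both still lie in $\vars[T]$, so the relevant ancestral set for testing them is again $\vars[T]$ and the same graph $\g[H]$ applies. It therefore suffices to show that $\vars[S]_A$ or $\vars[S]_B$ separates $X$ from $Y$ in $\g[H]$.

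\textbf{Step 4 (main obstacle): the splicing argument.} Suppose, for contradiction, that neither separates. Then there is a path $p_1$ in $\g[H]$ from $X$ to $Y$ avoiding $\vars[S]_A$, and a path $p_2$ avoiding $\vars[S]_B$. Because $\vars[S]_0$ blocks both, $p_1$ must use a $\vars[B]$-excursion that meets $\vars[S]_0\cap\vars[B]$, and $p_2$ must use an $\vars[A]$-excursion that meets $\vars[S]_0\cap\vars[A]$. All $\vars[C]$-vertices on $p_1$ and $p_2$ avoid $\vars[S]_0$.

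The goal is to splice the $\vars[A]$-excursions of $p_1$ (which avoid $\vars[S]_0$) with the $\vars[B]$-excursions of $p_2$ (which also avoid $\vars[S]_0$). This would yield an $X$--$Y$ path avoiding $\vars[S]_0$, a contradiction. The difficulty is that the two paths visit different $\vars[C]$-vertices, so the excursions do not line up directly.

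My first attempt would be an extremal choice: pick $\vars[S]_0$ minimal, or pick $p_1,p_2$ minimizing the number of excursions, and track the connected components of $\g[H]\setminus\vars[S]_0$ containing $X$ and $Y$, intersected with $\vars[C]$. If splicing fails, I would fall back on a different candidate. Namely, I would use $\vars[S]_A'=\vars[T]\cap(\vars[A]\cup\vars[C])$ minus the $\vars[C]$-vertices reachable from $X$ via $\vars[A]$-excursions, and symmetrically $\vars[S]_B'$. I would then show that one of them separates, using that $\vars[C]$-vertices reachable from $X$ on both sides cannot all reach $Y$ without contradicting the separation by $\vars[S]_0$.
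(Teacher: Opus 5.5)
\textbf{There is a genuine gap.} Step 3 reduces the theorem to a claim that is false, and Step 4 is left open.

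Take the DAG (hence MAG) with edges $Y\to a\to W\to X$ and $Y\to b\to X$, and set $\mathbf{A}=\{a\}$, $\mathbf{B}=\{b\}$, $\mathbf{C}=\{X,Y,W\}$. The only $a$--$b$ paths have the non-colliders $Y$ and $W$ in $\mathbf{C}$, so $\mathbf{A}\CI\mathbf{B}\mid\mathbf{C}$. The set $\mathbf{S}_0=\{a,b\}\subseteq\An(\{X,Y\})$ m-separates $X$ and $Y$, and it is even minimal. Yet $\mathbf{S}_A=\{a\}$ leaves $Y\to b\to X$ open, and $\mathbf{S}_B=\{b\}$ leaves $Y\to a\to W\to X$ open. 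The theorem holds here only via $\{W,b\}$, which uses a $\mathbf{C}$-vertex that $\mathbf{S}_0$ does not contain. So intersecting an arbitrary (or minimal) $\mathbf{S}_0$ with the two sides gives the wrong candidates. You need the largest ones, $\mathbf{T}\cap(\mathbf{A}\cup\mathbf{C})\setminus\{X,Y\}$ and $\mathbf{T}\cap(\mathbf{B}\cup\mathbf{C})\setminus\{X,Y\}$, as the paper does. Enlarging inside $\mathbf{T}$ keeps the same $\mathcal{H}$ and can only help separation. Your fallback starts from this set but then deletes $\mathbf{C}$-vertices, which can only hurt.

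More fundamentally, no splicing argument in $\mathcal{H}$ can work if it uses only what you retain in Step 2, namely that $\mathbf{C}$ separates $\mathbf{A}$ from $\mathbf{B}$ and $\mathbf{S}_0$ separates $X$ from $Y$. This fails even with the maximal candidates. The undirected $4$-cycle $X-a-Y-b-X$ with $\mathbf{A}=\{a\}$, $\mathbf{B}=\{b\}$, $\mathbf{C}=\{X,Y\}$ satisfies both facts (with $\mathbf{S}_0=\{a,b\}$), yet no subset of $\{a\}$ or of $\{b\}$ separates $X$ from $Y$.

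The missing ingredient is the MAG-specific asymmetry the paper uses: without loss of generality $X\notin\An(Y)$. Let $\mathbf{T}=\An(\{X,Y\})\cup\{X,Y\}$. Then every edge at $X$ on a path from $X$ inside $\mathbf{T}$ is into $X$. Indeed, with no selection bias a non-arrowhead at $X$ means $X\to q$ with $q\in\mathbf{T}$, which forces either a directed cycle or $X\in\An(Y)$. The paper uses this to make $X\in\mathbf{C}$ a collider. At $X$ it glues the segment of $\pi_1$ up to its first $\mathbf{B}$-vertex to the segment of $\pi_2$ up to its first $\mathbf{A}$-vertex, which yields an $\mathbf{A}$--$\mathbf{B}$ connection given $\mathbf{C}$.

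In your augmented-graph language, the same fact says that any two $\mathcal{H}$-neighbours of $X$ are collider connected through $X$. So the neighbourhood $N_{\mathcal{H}}(X)$ is a clique, and by Step 2 it misses $\mathbf{A}$ or misses $\mathbf{B}$. Since $X$ and $Y$ are non-adjacent in $\mathcal{H}$, the set $N_{\mathcal{H}}(X)\subseteq\mathbf{T}$ is itself the required separator, lying inside $\mathbf{B}\cup\mathbf{C}$ or inside $\mathbf{A}\cup\mathbf{C}$. Adding this observation would repair your route. Then $\mathbf{S}_0$ is needed only to guarantee that $X$ and $Y$ are non-adjacent in $\mathcal{H}$.
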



\cref{theorem:recursive-m-separation-second} addresses pairs of vertices within $\vars[C]$. 
For any distinct $X,Y \in \vars[C]$, if a separating set exists in the full variable set $\vars[O]$, then one can always be found within either $\vars[A]\cup\vars[C]$ or $\vars[B]\cup\vars[C]$. 
Conversely, if neither subset contains a separating set, then $X$ and $Y$ remain m-connected given any subset of $\vars[O]$.


\begin{remark}
\label{remark:flexibility-ABC}
It is worth emphasizing that if $\vars[A]\cup\vars[B]\cup\vars[C] \subseteq \vars[O]$, the conclusions of \cref{theorem:recursive-m-separation,theorem:recursive-m-separation-second} still hold. This flexibility enables the recursive application of the decomposition scheme while preserving the transitivity of m-separation relations.
\end{remark}

\begin{figure*}[hpt!]
    \centering
    \includegraphics[trim=0cm 4.8cm 0cm 0.0cm, clip, width=0.95\textwidth, page=1]{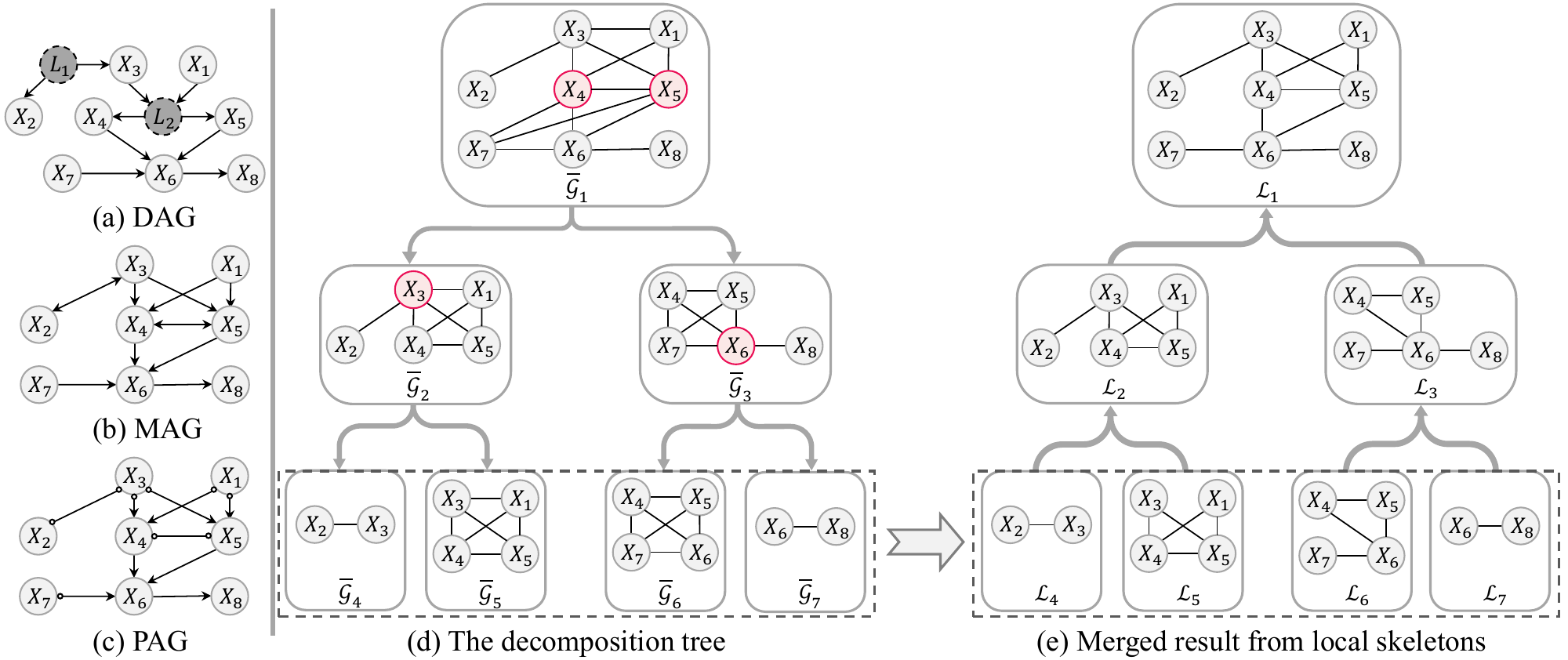}
    
    \caption{(a) An underlying causal structure (adapted from MILDEW network~\cite{jensen1996midas}), where $L_1$ and $L_2$ are latent variables. (b) The MAG characterizes the causal relations over the observed variables in (a). (c)  The inferred PAG from observed variables. (d) The decomposition tree generated by \textsc{DiCoLa}, where each node $\overline{\g}_{i}$ represents a sub-problem. (e) Merging process of local skeletons.}
    \label{fig:figure-main}

\end{figure*}

Based on \cref{theorem:recursive-m-separation,theorem:recursive-m-separation-second}, we obtain a unified characterization for any subset $\vars[K] \subseteq \vars[O]$ that admits a tripartition $(\vars[A], \vars[B], \vars[C])$ with $\vars[A] \CI \vars[B] \mid \vars[C]$: for any distinct variables $X,Y \in \vars[K]$, the pair $(X,Y)$ is m-separated by a subset of $\vars[K]$ if and only if it is m-separated by a subset of $\vars[A]\cup\vars[C]$ or $\vars[B]\cup\vars[C]$. 
This characterization directly leads to a reconstruction principle: the skeleton of the MAG over $\vars[O]$ can be recovered by recursively merging the local skeletons obtained from its subproblems, which we formalize in the following theorem.



\begin{theorem}
    \label{theorem:merged-skeleton}
Let $\vars[A]$, $\vars[B]$, and $\vars[C]$ be three disjoint subsets of $\vars[O]$ such that $\vars[A] \CI \vars[B] \mid \vars[C]$, and let $\vars[K] = \vars[A] \cup \vars[B] \cup \vars[C]$. 
Let $\g[L]_{\vars[K]}$ denote the local skeleton over $\vars[K]$, and let $\g[L]_{\vars[A]\cup\vars[C]}$ and $\g[L]_{\vars[B]\cup\vars[C]}$ denote the local skeletons over $\vars[A]\cup\vars[C]$ and $\vars[B]\cup\vars[C]$, respectively.
    Then, the local skeleton $\g[L]_{\vars[K]}$ can be constructed by merging $\g[L]_{\vars[A]\cup\vars[C]}$ and $\g[L]_{\vars[B]\cup\vars[C]}$ through the following steps:
    \begin{itemize}[leftmargin=35pt,itemsep=2pt,topsep=0pt,parsep=0pt]
        \item [Step 1:] Initialize the edge set of $\g[L]_{\vars[K]}$ as the union of the edge sets of $\g[L]_{\vars[A]\cup\vars[C]}$ and $\g[L]_{\vars[B]\cup\vars[C]}$.

        \item[Step 2:] For any pair of distinct vertices $X, Y \in \vars[C]$, remove the edge $\langle X,Y \rangle$ from $\g[L]_{\vars[K]}$ if it is absent from either $\g[L]_{\vars[A]\cup\vars[C]}$ or $\g[L]_{\vars[B]\cup\vars[C]}$.
        
    \end{itemize}
\end{theorem}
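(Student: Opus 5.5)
We prove \cref{theorem:merged-skeleton} by checking, pair by pair, that the graph produced by Steps 1--2 has exactly the edges of $\g[L]_{\vars[K]}$. Recall that $X$ and $Y$ are adjacent in $\g[L]_{\vars[S]}$ iff no subset of $\vars[S]$ m-separates them. By \cref{remark:flexibility-ABC}, \cref{theorem:recursive-m-separation,theorem:recursive-m-separation-second} apply with $\vars[K] \subseteq \vars[O]$ as the ambient set of candidate separating sets. We also use the symmetry of the hypothesis: $\vars[A] \CI \vars[B] \mid \vars[C]$ is equivalent to $\vars[B] \CI \vars[A] \mid \vars[C]$. So \cref{theorem:recursive-m-separation} holds with the roles of $\vars[A]$ and $\vars[B]$ swapped.

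Every pair $\{X,Y\} \subseteq \vars[K]$ falls into one of four cases.
\begin{enumerate}
\item[(i)] $X \in \vars[A]$ and $Y \in \vars[B]$. The pair is absent from both local skeletons, since neither $\vars[A]\cup\vars[C]$ nor $\vars[B]\cup\vars[C]$ contains both vertices. Step 1 therefore does not add the edge. It must also be absent from $\g[L]_{\vars[K]}$, and here one needs care: the hypothesis $\vars[A] \CI \vars[B] \mid \vars[C]$, read via faithfulness as m-separation, gives that $\vars[C] \subseteq \vars[K]$ m-separates $X$ and $Y$. Hence they are nonadjacent in $\g[L]_{\vars[K]}$, matching the merge.
\item[(ii)] $X \in \vars[A]$ and $Y \in \vars[A]\cup\vars[C]$. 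The pair lies only in $\vars[A]\cup\vars[C]$, so the merged graph contains the edge iff $\g[L]_{\vars[A]\cup\vars[C]}$ does. By \cref{theorem:recursive-m-separation}, a separating set exists in $\vars[K]$ iff one exists in $\vars[A]\cup\vars[C]$, which gives agreement.
\item[(iii)] $X \in \vars[B]$ and $Y \in \vars[B]\cup\vars[C]$. This is symmetric to (ii), using \cref{theorem:recursive-m-separation} with $\vars[A]$ and $\vars[B]$ interchanged.
\item[(iv)] $X, Y \in \vars[C]$. Step 1 includes the edge if it appears in either local skeleton, and Step 2 removes it unless it appears in both. So the final graph contains the edge iff $X$ and $Y$ are adjacent in both $\g[L]_{\vars[A]\cup\vars[C]}$ and $\g[L]_{\vars[B]\cup\vars[C]}$. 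Equivalently, the edge is present iff no subset of $\vars[A]\cup\vars[C]$ and no subset of $\vars[B]\cup\vars[C]$ separates the pair. By \cref{theorem:recursive-m-separation-second}, this holds iff no subset of $\vars[K]$ separates them, that is, iff $X$ and $Y$ are adjacent in $\g[L]_{\vars[K]}$.
\end{enumerate}

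The case analysis is mechanical; the main obstacle is making sure the earlier theorems apply when $\vars[K]$ is a proper subset of $\vars[O]$. The local skeleton over $\vars[K]$ is still defined through m-separation in the full MAG over $\vars[O]$, with separating sets restricted to $\vars[K]$. This is exactly the setting that \cref{remark:flexibility-ABC} asserts is covered, and case (i) relies on interpreting the hypothesis $\vars[A] \CI \vars[B] \mid \vars[C]$ as an m-separation in that same MAG. Once this is in place, the four cases together show that the merged graph coincides with $\g[L]_{\vars[K]}$.
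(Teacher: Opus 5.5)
Your proof is correct and follows essentially the same route as the paper's. Both run a case analysis on where $X$ and $Y$ lie, using \cref{theorem:recursive-m-separation} (and its $\mathbf{A}\leftrightarrow\mathbf{B}$ symmetric version) for pairs involving $\mathbf{A}$ or $\mathbf{B}$, and \cref{theorem:recursive-m-separation-second} for pairs inside $\mathbf{C}$. Your version is somewhat tighter in two places the paper leaves implicit: you explicitly rule out $\mathbf{A}$--$\mathbf{B}$ pairs via $\mathbf{A}\CI\mathbf{B}\mid\mathbf{C}$, and you show that a $\mathbf{C}$--$\mathbf{C}$ edge survives exactly when it appears in both local skeletons, which the paper's adjacency direction only states loosely.
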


\cref{theorem:merged-skeleton} provides a constructive rule for recovering the global skeleton from the local skeletons of its subproblems. Edges between variables outside the separator set $\vars[C]$ are inherited directly from the corresponding subproblems, while edges within $\vars[C]$ are retained only when supported by both sides of the decomposition.

Taken together, the above results (Theorems~\ref{theorem:recursive-m-separation}–\ref{theorem:merged-skeleton}) reveal the following intuition. 
M-separation relations propagate in a top-down manner during decomposition: any separating relation that holds globally can be captured within an appropriate subproblem. Conversely, during the merging phase, m-connection relations exhibit a bottom-up consistency: if two variables cannot be separated in the relevant subproblems, they remain connected in the global structure. This dual behavior underlies the correctness of our recursive decomposition framework.


\section{Implementation of Recursive Decomposition}
\label{sec:algorithm}
In this section, we describe how to operationalize the recursive decomposition framework. We first present a practical method for identifying valid decompositions in~\cref{sec:find-decompositions}, based on structural properties of certain undirected graphs. We then formalize the \textsc{DiCoLa} framework in~\cref{sec:DiCoLa-framework}.

\subsection{Identifying Valid Decompositions}
\label{sec:find-decompositions}
While \cref{sec:theoretical-results} establishes the theoretical foundation for recursive decomposition, its success hinges on the ability to find a 
\emph{valid decomposition}—that is, a tripartition $(\vars[A],\vars[B],\vars[C])$ of a variable set $\vars[K]$ satisfying $\vars[A] \CI \vars[B] \mid \vars[C]$.
This raises a practical question: 
\begin{center}
\begin{minipage}{0.92\linewidth}
\emph{How can such a tripartition be efficiently constructed from observational data?}
\end{minipage}
\end{center}

To address this, we introduce the notion of an \emph{undirected independence graph} (UIG), adapted from \citet{lauritzen1996graphical,xie2008recursive}. While their formulation is defined for DAGs under causal sufficiency, we extend this concept to MAGs to accommodate latent variables. The UIG serves as a convenient surrogate structure for identifying valid tripartitions of a given variable set.

\begin{definition}[\textbf{Undirected Independence Graph}]
    \label{def:undirected-ind-graph}
    Let $\MAG$ be a MAG over $\vars[O]$. An \emph{undirected independence graph (UIG)} of $\MAG$ relative to a subset $\vars[K] \subseteq \vars[O]$, denoted by $\overline{\g}_{\vars[K]}$, is an undirected graph with vertex set $\vars[K]$ such that for any disjoint sets $\vars[X], \vars[Y], \vars[Z] \subseteq \vars[K]$, if $\vars[Z]$ separates $\vars[X]$ and $\vars[Y]$ in $\overline{\g}_{\vars[K]}$, then
    $\vars[X]$ and $\vars[Y]$ are m-separated by $\vars[Z]$ in $\MAG$.

\end{definition}

This definition leads to a key observation: vertex separation in the UIG provides a sufficient condition for m-separation in the underlying MAG. Consequently, identifying a valid tripartition $(\vars[A], \vars[B], \vars[C])$ reduces to constructing an appropriate UIG $\overline{\g}_{\vars[K]}$ of $\MAG$.
To this end, we introduce the \emph{augmented graph}, which we show to be a particular UIG relative to $\vars[O]$, as established in~\cref{prop:augmented-graph,prop:augmented-graph-mb}.


\begin{definition}[\textbf{Augmented Graph}]
\label{def:augmented-graph}
Let $\MAG$ be a MAG. The \emph{augmented graph} of $\MAG$, denoted by $(\MAG)^a$, is the undirected graph over the same vertex set as $\MAG$ in which two vertices are adjacent if and only if they are \emph{collider connected} in $\MAG$.
\end{definition}

Two vertices $X$ and $Y$ are \emph{collider connected} if there exists a collider path connecting them, including the degenerate case where $X$ and $Y$ are adjacent in $\MAG$.
As illustrated in \cref{fig:figure-main}(b), $\overline{\mathcal{G}}_1$ corresponds to the augmented graph of the MAG shown there.

\begin{proposition}
\label{prop:augmented-graph}
Let $\MAG$ be a MAG over $\vars[O]$. The augmented graph $(\MAG)^a$ is a UIG of $\MAG$ relative to $\vars[O]$. Moreover, $(\MAG)^a$ is \emph{minimal} in the sense that removing any edge from $(\MAG)^a$ would introduce a separation relation that does not correspond to any m-separation in $\MAG$.
\end{proposition}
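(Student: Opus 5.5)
Both parts rest on the standard link between m-separation in an ancestral graph and separation in the moralized graph of an ancestral subgraph (Richardson and Spirtes, 2002). For disjoint $\vars[X],\vars[Y],\vars[Z]$, $\vars[X]$ and $\vars[Y]$ are m-separated by $\vars[Z]$ in $\MAG$ if and only if $\vars[Z]$ separates them in $(\MAG_{\An(\vars[X]\cup\vars[Y]\cup\vars[Z])})^a$. Here $\MAG_{\vars[S]}$ is the induced subgraph on the anterior/ancestral set $\vars[S]$, and $(\cdot)^a$ joins collider-connected vertices. I would use this characterization throughout.

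\textbf{UIG property.} Suppose $\vars[Z]$ separates $\vars[X]$ and $\vars[Y]$ in $(\MAG)^a$. Set $\vars[S]=\An(\vars[X]\cup\vars[Y]\cup\vars[Z])$. I claim every edge of $(\MAG_{\vars[S]})^a$ is an edge of $(\MAG)^a$. An edge of $(\MAG_{\vars[S]})^a$ comes from a collider path lying inside $\MAG_{\vars[S]}$, and that path is also a collider path in $\MAG$. So the augmented graph of the ancestral subgraph is a subgraph of $(\MAG)^a$ restricted to $\vars[S]$. Any path in it between $\vars[X]$ and $\vars[Y]$ is therefore a path in $(\MAG)^a$, and it must hit $\vars[Z]$. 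Hence $\vars[Z]$ separates $\vars[X]$ and $\vars[Y]$ in $(\MAG_{\vars[S]})^a$, and the criterion gives m-separation.

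A more self-contained alternative avoids the moralization theorem. Take an m-connecting path $\pi$ relative to $\vars[Z]$ and extract a walk in $(\MAG)^a$ that avoids $\vars[Z]$. Every maximal run of consecutive colliders on $\pi$, together with the non-colliders bounding it, forms a collider path, so those two non-colliders are adjacent in $(\MAG)^a$. The non-colliders of $\pi$ are not in $\vars[Z]$, so consecutive non-colliders and the endpoints give a path in $(\MAG)^a$ avoiding $\vars[Z]$. This contradicts separation. I would present this argument because it is elementary. The subtle point is that colliders may lie in $\vars[Z]$, but they are skipped, so they do no harm.

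\textbf{Minimality.} Remove an edge $X - Y$ of $(\MAG)^a$. In the reduced graph, $X$ and $Y$ are separated by $\vars[O]\setminus\{X,Y\}$, since every other path passes through some other vertex. It remains to show that $X$ and $Y$ are not m-separated by $\vars[O]\setminus\{X,Y\}$ in $\MAG$. If $X$ and $Y$ are adjacent, this is immediate. Otherwise take a collider path $X * \!\!\to V_1 \leftrightarrow \cdots \leftrightarrow V_k \leftarrow\!\! * Y$. Every $V_i$ is a collider and lies in the conditioning set, so it is trivially an ancestor of it. Every non-endpoint is a collider, and there are no non-colliders to block. Hence the path is m-connecting, so the new separation has no counterpart m-separation.

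The main obstacle is confirming that the conditioning set $\vars[O]\setminus\{X,Y\}$ is the right witness, and that vertices of the collider path are distinct from $X$ and $Y$, which holds because it is a path. Everything else is bookkeeping.
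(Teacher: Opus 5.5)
Your proposal is correct. Your first argument for the UIG property is exactly the paper's proof. It uses three steps:
\begin{itemize}
\item the containment of $(\MAG[\Anplus(\vars[S],\MAG)])^a$ in $(\MAG)^a$ (\cref{lemma:subgraph-augmented-induced});
\item the fact that separation passes to subgraphs (\cref{lemma:undirected-separated});
\item the Richardson--Spirtes criterion (\cref{lemma:augmented-graph-separated}).
\end{itemize}
Your minimality argument, with witness $\vars[O]\setminus\{X,Y\}$, is also the paper's. You additionally justify a step the paper only asserts: a collider path is m-connecting given $\vars[O]\setminus\{X,Y\}$, because every interior vertex is a collider that lies in the conditioning set.

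The elementary argument you say you would present is a genuinely different route. You do not restrict to an ancestral set or invoke an external theorem. Instead, you contract each maximal run of colliders on an m-connecting path into a single edge of $(\MAG)^a$. This is legitimate because collider status at a vertex depends only on its two incident path edges. Hence the sub-path between consecutive non-colliders (or endpoints) is itself a collider path, or a single edge. The result is an $\vars[X]$--$\vars[Y]$ path in $(\MAG)^a$ whose interior vertices are exactly the non-colliders of the original path, so none lies in $\vars[Z]$.

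This route is self-contained and proves slightly more. It never uses the requirement that colliders have descendants in $\vars[Z]$. So separation in $(\MAG)^a$ forces every path in $\MAG$ between $\vars[X]$ and $\vars[Y]$ to contain a non-collider in $\vars[Z]$. It also shows why no ancestral restriction is needed here. The global augmented graph already contains every collider-connection edge. The ancestral subgraph in the moralization criterion matters only for the converse direction (m-separation implies separation), which the UIG property does not require.

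What the paper's route buys is uniformity with the rest of the appendix. The same if-and-only-if criterion drives \cref{lemma:ancestral-m-separates}, and there its converse direction is genuinely needed.

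One notational caution: in this paper $\MAG_{\vars[S]}$ denotes the marginal MAG over $\vars[S]$ (\cref{lemma:local-MAG-m-separates}), and $\An$ excludes the vertex itself. The object in your first argument should therefore be written $\MAG[\Anplus(\vars[S],\MAG)]$.
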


\cref{prop:augmented-graph} establishes that $(\MAG)^a$ is the sparsest UIG, which exposes more separation opportunities.
Intuitively, $(\MAG)^a$ captures the maximum number of admissible decompositions among all possible UIGs relative to $\vars[O]$. 
Importantly, the structure of the augmented graph is intimately linked to the statistical properties of the variables via Markov blankets, as shown below~\cref{prop:augmented-graph-mb}.

\begin{proposition}
    \label{prop:augmented-graph-mb}
    For any two distinct vertices $X, Y \in \vars[O]$, $X$ and $Y$ are adjacent in $(\MAG)^{a}$ if and only if $Y$ belongs to the Markov blanket of $X$ relative to $\vars[O]$.
\end{proposition}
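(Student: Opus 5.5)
We prove the two directions separately. The key fact is that, under the Markov and faithfulness assumptions with no selection bias, the Markov blanket of $X$ relative to $\vars[O]$ is unique. By faithfulness it is the smallest set $\vars[S]\subseteq\vars[O]\setminus\{X\}$ such that $X$ is m-separated from $\vars[O]\setminus(\vars[S]\cup\{X\})$ given $\vars[S]$ in $\MAG$. Write $\mathrm{N}^a(X)$ for the set of neighbours of $X$ in $(\MAG)^a$, i.e.\ the vertices collider connected to $X$. The plan is to show $\MB_{\vars[O]}(X)=\mathrm{N}^a(X)$. This immediately gives the statement, since adjacency in $(\MAG)^a$ is symmetric.

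\textbf{Step 1: $\mathrm{N}^a(X)$ is a blanket.} We show $\mathrm{N}^a(X)$ m-separates $X$ from every $W\in\vars[O]\setminus(\mathrm{N}^a(X)\cup\{X\})$. The quickest route is \cref{prop:augmented-graph}: $(\MAG)^a$ is a UIG relative to $\vars[O]$. In $(\MAG)^a$ the neighbour set of $X$ separates $X$ from all non-neighbours, so the UIG property gives the required m-separation.

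For a self-contained check we could argue directly instead. Take an m-connecting path $X=V_0,V_1,\dots,V_k=W$ given $\mathrm{N}^a(X)$, and let $j$ be the first index with $V_j$ a non-collider. Then $V_0,\dots,V_j$ is a collider path, so $V_j\in\mathrm{N}^a(X)$. A non-collider in the conditioning set blocks the path, which is a contradiction. If no such $j$ exists, the whole path is a collider path and $W\in\mathrm{N}^a(X)$, again a contradiction.

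\textbf{Step 2: minimality (the main obstacle).} We must show that every $Y\in\mathrm{N}^a(X)$ lies in every blanket. Equivalently: for any $\vars[S]\subseteq\vars[O]\setminus\{X,Y\}$ with $Y\notin\vars[S]$, if $\vars[S]$ separates $X$ from all of $\vars[O]\setminus(\vars[S]\cup\{X\})$, we derive a contradiction. Uniqueness of the minimal blanket then follows from the intersection property implied by faithfulness to a graph.

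\begin{itemize}[leftmargin=14pt,itemsep=2pt,topsep=2pt,parsep=0pt]
\item If $Y$ is adjacent to $X$, then $X$ and $Y$ are never m-separated, so $Y$ must be in $\vars[S]$.
\item Otherwise fix a collider path $X, C_1,\dots,C_m, Y$ with $m\ge1$. Consider two cases:
\begin{itemize}[leftmargin=14pt,itemsep=2pt,topsep=2pt,parsep=0pt]
\item If every $C_i\in\vars[S]$, the path is m-connecting given $\vars[S]$, since every collider is itself in the conditioning set. So $X\nCI Y\mid\vars[S]$, a contradiction.
\item If some $C_i\notin\vars[S]$, take the smallest such $i$. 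Then $X,C_1,\dots,C_{i-1},C_i$ is a collider path whose interior lies in $\vars[S]$, or it is a direct edge when $i=1$. Since $C_i\notin\vars[S]\cup\{X\}$, we get $X\nCI C_i\mid\vars[S]$, again contradicting the blanket property.
\end{itemize}
\end{itemize}

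This case analysis is the delicate point: one must take care that the sub-path remains a valid path and that its endpoint $C_i$ is outside the conditioning set. Choosing the first index avoids both problems. Hence $\mathrm{N}^a(X)\subseteq\vars[S]$ for every blanket $\vars[S]$.

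\textbf{Conclusion.} By Step 1, $\mathrm{N}^a(X)$ is a blanket. By Step 2, it is contained in every blanket. Therefore it is the unique minimal one, $\MB_{\vars[O]}(X)=\mathrm{N}^a(X)$, and $Y\in\MB_{\vars[O]}(X)$ if and only if $X$ and $Y$ are adjacent in $(\MAG)^a$.
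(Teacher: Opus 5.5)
Your proof is correct, but it takes a different route from the paper. The paper's proof is two lines long. It invokes the known graphical characterization of Markov blankets in MAGs (\cref{def:app-MB-MAG}, due to Richardson and to Pellet and Elisseeff), asserts that membership in that set is equivalent to collider connectedness, and concludes. The set in question consists of the parents, the children, the parents of children, the districts of $X$ and of its children, and the parents of vertices in those districts.

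You instead derive the characterization directly from the definition of m-separation. Step 1 shows that $\mathrm{N}^a(X)$ is a blanket, either via the UIG property of \cref{prop:augmented-graph} or via your first-non-collider argument. Using \cref{prop:augmented-graph} is not circular, since its proof never uses the present proposition. Step 2 shows that $\mathrm{N}^a(X)$ is contained in every blanket, by taking the first interior vertex of a collider path that falls outside $\vars[S]$.

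The paper's version is shorter, but it relies on the external characterization. It also leaves implicit the translation between the district-based description and collider paths. That translation requires checking that a collider path out of $X$ begins with $X \rightarrow C_1$ or $X \leftrightarrow C_1$, continues with bidirected edges, and ends with an edge into $C_m$. Your version is self-contained and additionally shows that $\MB_{\vars[O]}(X)$ is well defined, as the least blanket under inclusion.

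For the same reason, your appeal to the intersection property is unnecessary. Steps 1 and 2 together already force every inclusion-minimal blanket to equal $\mathrm{N}^a(X)$.
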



\cref{prop:augmented-graph-mb} yields a direct way to construct the minimal UIG over $\vars[O]$: by learning the MB of each variable relative to $\vars[O]$, we can reconstruct the augmented graph $(\MAG)^a$. The resulting procedure is summarized in \cref{alg:constructuig}.

\begin{algorithm}[hpt!]
    \caption{\textsc{ConstructUIG}}
    \label{alg:constructuig}
    \Input{ Observed variable set $\vars[K]$}
    \Output{ The undirected independence graph $\overline{\g}_{\vars[K]}$}

    Initialize $\overline{\g}_{\vars[K]}$ with vertice set $\vars[K]$ and no edges\;
    
    \For{$X \in \vars[K]$}{
    \algcomment{Learn MB of $X$ relative to $\vars[K]$}\\
    $\MB_{\vars[K]}(X) \leftarrow \textsc{MBLearn}(\vars[K], X)$\;
    Add edges $\{\langle X,Y\rangle : Y \in \MB_{\vars[K]}(X)\}$ to $\overline{\g}_{\vars[K]}$\;
    }
    \Return $\overline{\g}_{\vars[K]}$
\end{algorithm}


It is worth noting that when $\vars[K] = \vars[O]$, \cref{alg:constructuig} exactly recovers the augmented graph $(\MAG)^a$. Furthermore, when applied to any subset $\vars[K] \subset \vars[O]$ during recursion, the Markov-blanket-based construction still yields a valid and minimal UIG, as formalized in \cref{prop:local-undirected-ind-graph}.

\begin{proposition}
    \label{prop:local-undirected-ind-graph}
    Let $\MAG$ be a MAG over $\vars[O]$, and let $\overline{\g}_{\vars[K]}$ be the UIG constructed by \cref{alg:constructuig} over $\vars[K] \subset \vars[O]$. Then, $\overline{\g}_{\vars[K]}$ is an UIG of $\MAG$ relative to $\vars[K]$. 
    Furthermore, $\overline{\g}_{\vars[K]}$ is a minimal UIG of $\MAG$ relative to $\vars[K]$.
\end{proposition}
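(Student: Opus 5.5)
The plan is to reduce the statement to \cref{prop:augmented-graph,prop:augmented-graph-mb}, applied not to $\MAG$ but to the marginal MAG over $\vars[K]$. Let $\MAG_{\vars[K]}$ denote the MAG obtained from $\MAG$ by marginalizing out $\vars[O]\setminus\vars[K]$, treated as additional latent variables, using the standard MAG latent projection of Richardson and Spirtes. I will rely on the standard fact that for disjoint $\vars[X],\vars[Y],\vars[Z]\subseteq\vars[K]$, $\vars[X]$ and $\vars[Y]$ are m-separated by $\vars[Z]$ in $\MAG_{\vars[K]}$ if and only if they are m-separated by $\vars[Z]$ in $\MAG$. By Markov and faithfulness, the distribution over $\vars[K]$ is then Markov and faithful to $\MAG_{\vars[K]}$. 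This setting is exactly the setting of the earlier propositions, with $\vars[K]$ playing the role of the full observed set $\vars[O]$.

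\textbf{Step 1 (identify the output).} \cref{alg:constructuig} on $\vars[K]$ joins $X$ and $Y$ iff $Y\in\MB_{\vars[K]}(X)$. Applying \cref{prop:augmented-graph-mb} to $\MAG_{\vars[K]}$ gives $Y\in\MB_{\vars[K]}(X)$ iff $X,Y$ are collider connected in $\MAG_{\vars[K]}$. Hence $\overline{\g}_{\vars[K]}=(\MAG_{\vars[K]})^a$. This also shows the symmetric edge rule is well defined.

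\textbf{Step 2 (UIG property).} By \cref{prop:augmented-graph} applied to $\MAG_{\vars[K]}$, separation of $\vars[X],\vars[Y]$ by $\vars[Z]$ in $(\MAG_{\vars[K]})^a$ implies m-separation in $\MAG_{\vars[K]}$. By the marginalization equivalence, this is m-separation in $\MAG$. So $\overline{\g}_{\vars[K]}$ is a UIG of $\MAG$ relative to $\vars[K]$ in the sense of \cref{def:undirected-ind-graph}.

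\textbf{Step 3 (minimality).} Remove an edge $X$–$Y$ from $\overline{\g}_{\vars[K]}$. Then $X$ and $Y$ become separated by $\vars[Z]=\vars[K]\setminus\{X,Y\}$, since every other path passes through $\vars[Z]$. I must show $X$ and $Y$ are not m-separated by $\vars[Z]$ in $\MAG$. Since $X,Y$ are collider connected in $\MAG_{\vars[K]}$, take a collider path between them. Every interior vertex is a collider lying in $\vars[Z]$, so the path is m-connecting given $\vars[Z]$ in $\MAG_{\vars[K]}$. By the marginalization equivalence, $X$ and $Y$ are then m-connected given $\vars[Z]$ in $\MAG$. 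Equivalently, this is the minimality part of \cref{prop:augmented-graph} transported through the equivalence.

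The main obstacle is justifying the marginal MAG: that $\MAG_{\vars[K]}$ exists, is a MAG, and preserves m-separations among subsets of $\vars[K]$ exactly. If I prefer to avoid citing this, I would instead argue directly in $\MAG$. The claim to prove would be that $Y\in\MB_{\vars[K]}(X)$ iff $X$ and $Y$ are m-connected given $\vars[K]\setminus\{X,Y\}$. The "only if" direction uses that removing an m-connected $Y$ from the blanket breaks the required independence; the "if" direction uses the intersection-type property valid for graphical m-separation. Separation in the resulting graph then implies m-separation by the pairwise-to-global Markov argument for m-separation (a graphoid satisfying intersection and composition), which is essentially how \cref{prop:augmented-graph} is established.
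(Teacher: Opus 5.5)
Your proposal is correct and follows essentially the same route as the paper. You identify the output of \cref{alg:constructuig} with $(\MAG_{\vars[K]})^a$ via \cref{prop:augmented-graph-mb} applied to the marginal MAG, then carry the UIG property of \cref{prop:augmented-graph} and the collider-path minimality argument back to $\MAG$ through the marginal m-separation equivalence, which is exactly the paper's \cref{lemma:local-MAG-m-separates}; your checks that faithfulness passes to $\MAG_{\vars[K]}$ and that the collider path's interior lies in $\vars[K]\setminus\{X,Y\}$ supply details the paper leaves implicit. One small slip, only in your optional fallback sketch: the direction labels are swapped, since the weak-union argument (an m-connected $Y$ cannot be left out of a blanket) proves the ``if'' direction, while intersection together with minimality of $\MB_{\vars[K]}(X)$ proves the ``only if'' direction.
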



\cref{prop:local-undirected-ind-graph} guarantees that the decomposition procedure remains valid when applied recursively to subproblems. Moreover, $\overline{\g}_{\vars[K]}$ preserves the maximal separation information available within $\vars[K]$.


\begin{remark}
    \label{rmk:find-decomposition}
    Based on \cref{prop:augmented-graph,prop:augmented-graph-mb,prop:local-undirected-ind-graph}, a valid tripartition of any subset $\vars[K] \subseteq \vars[O]$ can be identified through Markov blanket learning. 
    The \textsc{FindDecomposition} procedure (\cref{alg:FindDecomposition}) works in two steps. It first calls \textsc{ConstructUIG} to build the minimal UIG $\overline{\g}_{\vars[K]}$. It then applies an undirected graph partitioning method (e.g., junction trees~\cite{jensen1994optimal}) to find a vertex separator $\mathbf{C}$, which partitions $\vars[K]$ into $(\mathbf{A}, \mathbf{B}, \mathbf{C})$ with no edges between $\mathbf{A}$ and $\mathbf{B}$ in the UIG.
\end{remark}

\begin{algorithm}[hpt!]
\caption{\textsc{FindDecomposition}}
\label{alg:FindDecomposition}
\Input{Observed variable set $\vars[K]$}
\Output{$(\vars[A],\vars[B],\vars[C],\textit{flag})$}

$\overline{\g}_{\vars[K]} \leftarrow \textsc{ConstructUIG}(\vars[K])$\;
$(\vars[A],\vars[B],\vars[C]) \leftarrow \text{VertexCut}(\overline{\g}_{\vars[K]})$\;
$\textit{flag} \leftarrow \text{whether } \overline{\g}_{\vars[K]} \text{ has a tripartition } (\vars[A],\vars[B],\vars[C])$\;
\Return $(\vars[A],\vars[B],\vars[C],\textit{flag})$
\end{algorithm}


When multiple valid tripartitions are available for a vertex set $\vars[K]$, we follow the selection strategy of \citet{zhang2024towards}. In particular, we choose the tripartition that minimizes the following balancing score:
\[
\text{score} = \frac{|\vars[C]|}{\min\bigl(|\vars[A] \cup \vars[C]|,\; |\vars[B] \cup \vars[C]|\bigr)}.
\]
This criterion favors balanced subproblems while keeping the separator set $\vars[C]$ small.

    

    
    
    
    
    


\subsection{The DiCoLa Framework}
\label{sec:DiCoLa-framework}

The \textsc{DiCoLa} framework, formalized in~\cref{alg:DiCoLa}, follows a systematic ``divide-learn-merge'' paradigm. It first applies a top-down recursive decomposition to break the global problem into smaller subproblems, then uses a base structure learning algorithm to solve each subproblem, and finally performs a bottom-up merging step to reconstruct the global causal skeleton. For clarity, we illustrate the workflow with an example in \cref{example:DiCoLa}.

\begin{example}[Illustration of \textsc{DiCoLa}]
\label{example:DiCoLa}

Consider the graphs shown in \cref{fig:figure-main}. The underlying system contains eight observed variables $\{X_1, \dots, X_8\}$ and two latent variables $\{L_1, L_2\}$, as illustrated by the ground-truth DAG in \cref{fig:figure-main}(a). The corresponding MAG after marginalizing out the latent variables is shown in \cref{fig:figure-main}(b). \textsc{DiCoLa} proceeds through the following phases to recover the final PAG (\cref{fig:figure-main}(c)):
    \begin{itemize}[leftmargin=10pt,itemsep=2pt,topsep=0pt,parsep=0pt]
        \item
\textbf{Phase 1: Top-down decomposition.} 
\textsc{DiCoLa} starts from the full variable set $\mathbf{O}$. 
The procedure \textsc{FindDecomposition} (\cref{alg:FindDecomposition}) first calls \textsc{ConstructUIG} (\cref{alg:constructuig}) to learn the UIG $\overline{\mathcal{G}}_1$ over $\mathbf{O}$, and then identifies a vertex separator that yields a valid tripartition $(\mathbf{A}, \mathbf{B}, \mathbf{C})$. 
In this example, $\mathbf{C}=\{X_4, X_5\}$ (highlighted in red in \cref{fig:figure-main}(d)) disconnects the graph, producing two subproblems $\{X_1,\dots,X_5\}$ and $\{X_4,\dots,X_8\}$. 
The process continues recursively: $\{X_1,\dots,X_5\}$ is further decomposed into $\{X_1,X_3,X_4,X_5\}$ and $\{X_2,X_3\}$ using $X_3$ as the separator, while $\{X_4,\dots,X_8\}$ is split into $\{X_4,X_5,X_6,X_7\}$ and $\{X_6,X_8\}$ via $X_6$. 
This produces the decomposition tree shown in \cref{fig:figure-main}(d).

        \item 
\textbf{Phase 2: Local skeleton learning.} 
Once the decomposition reaches leaf subproblems that admit no further tripartition ($\overline{\mathcal{G}}_4, \overline{\mathcal{G}}_5, \overline{\mathcal{G}}_6, \overline{\mathcal{G}}_7$), \textsc{DiCoLa} applies a base structure learning algorithm (e.g., FCI) on each subset to obtain the corresponding local skeletons $\g[L]_4, \g[L]_5, \g[L]_6, \g[L]_7$.

        \item
\textbf{Phase 3: Bottom-up merging.} 
The local skeletons are merged in a bottom-up manner using \textsc{MergeSkeletons} (illustrated in \cref{fig:figure-main}(e)). 
Specifically, $\mathcal{L}_4$ and $\mathcal{L}_5$ are merged into $\mathcal{L}_2$, with no edge removal since $|\vars[C]|=1$ for $\overline{\mathcal{G}}_2$. 
Similarly, $\mathcal{L}_6$ and $\mathcal{L}_7$ are merged to form $\mathcal{L}_3$. 
Finally, $\mathcal{L}_2$ and $\mathcal{L}_3$ are merged into the global skeleton $\mathcal{L}_1$. 
Because the separator of $\overline{\mathcal{G}}_1$ is $\vars[C]=\{X_4,X_5\}$ and the edge $\langle X_4,X_5\rangle$ appears in both $\mathcal{L}_2$ and $\mathcal{L}_3$, it is retained in $\mathcal{L}_1$.

    \end{itemize}
Applying the orientation rules for v-structures and the orientation rules in \citet{zhang2008completeness} to the resulting skeleton $\mathcal{L}_1$, \textsc{DiCoLa} produces the PAG shown in \cref{fig:figure-main}(c).

\end{example}

\begin{algorithm}[t!]
    \caption{\textsc{DiCoLa}}
    \label{alg:DiCoLa}

    \Input{Observed variable set $\vars[O]$}
    \Output{The PAG $\PAG$ over $\vars[O]$}
    

    \Fn{\textsc{RecursiveLearn}($\vars[K]$)}{
        $(\vars[A],\vars[B],\vars[C],\textit{flag}) \leftarrow \textsc{FindDecomposition}(\vars[K])$\;
        \algcomment{flag indicates whether the valid decomposition is found}
        
        \If{$\textit{flag} = \textsc{False}$}{
            $\g[L]_{\vars[K]} \leftarrow \textsc{StructureLearning}(\vars[K])$\;
            \Return $\g[L]_{\vars[K]}$\;
        }
        
        $\g[L]_{\vars[A] \cup \vars[C]} \leftarrow \textsc{RecursiveLearn}(\vars[A] \cup \vars[C])$\;
        $\g[L]_{\vars[B] \cup \vars[C]} \leftarrow \textsc{RecursiveLearn}(\vars[B] \cup \vars[C])$\;
        $\g[L]_{\vars[K]} \leftarrow \textsc{MergeSkeletons}(\g[L]_{\vars[A] \cup \vars[C]},\g[L]_{\vars[B] \cup \vars[C]})$\;
        \Return $\g[L]_{\vars[K]}$\;
    }
    

    \Fn{\textsc{MergeSkeletons}($\g[L]_{\vars[A] \cup \vars[C]},\g[L]_{\vars[B] \cup \vars[C]}$)}{
        $\g[L]_{\vars[A] \cup \vars[B] \cup \vars[C]} \leftarrow \g[L]_{\vars[A] \cup \vars[C]} \cup \g[L]_{\vars[B] \cup \vars[C]}$\;
        
        \ForEach{pair $(X,Y) \in \vars[C]$}{
            \If{$\langle X,Y\rangle \notin \g[L]_{\vars[A] \cup \vars[C]}$ \textbf{or} $\langle X,Y\rangle \notin \g[L]_{\vars[B] \cup \vars[C]}$}{
                Remove edge $\langle X,Y\rangle$ from $\g[L]_{\vars[A] \cup \vars[B] \cup \vars[C]}$\;
            }
        }
        \Return $\g[L]_{\vars[A] \cup \vars[B] \cup \vars[C]}$\;
    }
    \algcomment{Main procedure}\\
    $\g[L]_{\vars[O]} \leftarrow \textsc{RecursiveLearn}(\vars[O])$\;
    $\PAG \leftarrow$ orient $\g[L]_{\vars[O]}$ using V-structures and the orientation rules of \citet{zhang2008completeness}\;
    \Return $\PAG$
\end{algorithm}

\begin{theorem}[Soundness and Completeness]
\label{theorem:sound-complete}
    Assume access to an oracle for conditional independence tests.
    If the base structure learning algorithm is sound and complete (e.g., FCI), then \textsc{DiCoLa} is also sound and complete; that is, it correctly recovers the PAG representing the Markov equivalence class of the underlying MAG.
\end{theorem}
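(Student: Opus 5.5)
Two things need to be shown. First, \textsc{RecursiveLearn}$(\vars[O])$ returns exactly the skeleton of $\MAG$, i.e.\ $\g[L]_{\vars[O]}$. Second, orienting that skeleton produces the correct PAG. The first is proved by strong induction on $|\vars[K]|$ over the nodes of the decomposition tree. The inductive claim is: for every $\vars[K]\subseteq\vars[O]$ on which \textsc{RecursiveLearn} is called, the returned graph equals the local skeleton $\g[L]_{\vars[K]}$, the undirected graph joining $X,Y\in\vars[K]$ iff no subset of $\vars[K]$ m-separates them in $\MAG$. Recursion terminates because each child $\vars[A]\cup\vars[C]$ or $\vars[B]\cup\vars[C]$ is strictly smaller than $\vars[K]$, given that a valid tripartition has $\vars[A],\vars[B]\neq\emptyset$.

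\emph{Base case (leaf).} When \textsc{FindDecomposition} returns $\textit{flag}=\textsc{False}$, the base learner runs on data over $\vars[K]$ alone, i.e.\ on the marginal distribution over $\vars[K]$. This marginal is faithful to the marginal MAG $\MAG[\vars[K]]$ obtained by further marginalizing $\vars[O]\setminus\vars[K]$. Marginalization of MAGs preserves m-separations among the retained vertices (Richardson and Spirtes). Hence $X,Y$ are adjacent in $\MAG[\vars[K]]$ iff no subset of $\vars[K]$ m-separates them in $\MAG$. A sound and complete learner such as FCI returns the PAG of $\MAG[\vars[K]]$, and its skeleton is therefore exactly $\g[L]_{\vars[K]}$. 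This is the step I would state with the most care, since the theorem only assumes soundness and completeness of the base learner for the MAG of its input variables.

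\emph{Inductive step.} When a tripartition is found, \cref{prop:local-undirected-ind-graph} says $\overline{\g}_{\vars[K]}$ is a UIG of $\MAG$ relative to $\vars[K]$. Since $\vars[C]$ separates $\vars[A]$ from $\vars[B]$ in it, \cref{def:undirected-ind-graph} gives $\vars[A]\CI\vars[B]\mid\vars[C]$, using Markov and faithfulness to pass between m-separation and CI. By the induction hypothesis the two recursive calls return $\g[L]_{\vars[A]\cup\vars[C]}$ and $\g[L]_{\vars[B]\cup\vars[C]}$. \textsc{MergeSkeletons} implements exactly Steps 1--2 of \cref{theorem:merged-skeleton}, which applies to subsets $\vars[K]\subsetneq\vars[O]$ by \cref{remark:flexibility-ABC}. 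The merged graph therefore equals $\g[L]_{\vars[K]}$.

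For completeness of the check I would verify the pair cases directly against \cref{theorem:recursive-m-separation,theorem:recursive-m-separation-second}:
\begin{itemize}
\item $X,Y\in\vars[A]\cup\vars[C]$ with at least one in $\vars[A]$: adjacency is inherited from the $\vars[A]\cup\vars[C]$ side.
\item The symmetric case with at least one in $\vars[B]$: adjacency is inherited from the $\vars[B]\cup\vars[C]$ side.
\item $X\in\vars[A]$, $Y\in\vars[B]$: the pair is m-separated by $\vars[C]$, so no edge appears, and none is introduced by the union.
\item $X,Y\in\vars[C]$: keeping the edge only if it is present in both local skeletons is exactly \cref{theorem:recursive-m-separation-second}.
\end{itemize}
Applying the claim at the root gives $\g[L]_{\vars[O]}$, the skeleton of $\MAG$.

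For orientation, I expect the main obstacle to be that FCI-style orientation needs separating sets, used for unshielded colliders, discriminating paths, and R1--R10 of \citet{zhang2008completeness}. I would handle this by collecting, for each removed edge, the separating set found in the subproblem where it was removed; a subset of $\vars[K]\subseteq\vars[O]$ is still a valid separating set in $\MAG$. If needed I would additionally re-test on the global skeleton using Possible-D-SEP, as FCI does. Since v-structure and discriminating-path orientations depend only on whether the middle vertex lies in some (equivalently every) separating set of the endpoint pair in a MAG, any valid separating set suffices. Given the correct skeleton and correct collider information, the soundness and completeness of Zhang's rules then yields exactly the PAG of $[\MAG]$.
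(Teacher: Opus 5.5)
Your proposal is correct and follows essentially the same route as the paper's proof. Each tripartition is shown valid via \cref{prop:local-undirected-ind-graph}, the leaves yield correct local skeletons from the sound and complete base learner, \cref{theorem:merged-skeleton} handles the bottom-up merge, and Zhang's rules are then applied with valid separating sets. Your explicit induction on $|\vars[K]|$, the marginal-MAG argument at the leaves, and the observation that unshielded-collider and discriminating-path decisions do not depend on which separating set is used in a MAG make rigorous what the paper only asserts.

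Two cosmetic points. Pairs split between $\vars[A]$ and $\vars[B]$ never share a subproblem, so their recorded separating set should be the separator $\vars[C]$ of that split, as the paper notes. Also, the marginal MAG over $\vars[K]$ is written $\MAG_{\vars[K]}$ in the paper's notation, since $\MAG[\vars[K]]$ denotes the induced subgraph.
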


\textbf{Complexity Analysis.}
Let $n = |\vars[O]|$ denote the number of observed variables.
The complexity of \textsc{DiCoLa} consists of two components: (1) the cost of identifying decompositions via Markov blanket learning, and (2) the cost of learning local skeletons.
For the first component, using an MB discovery algorithm such as TC \cite{pellet2008using}, the cost of learning the Markov blanket of one variable among $k$ variables is $\mathcal{O}(k)$.
For the second component, if FCI is used as the base learner, the worst-case complexity for a subset of size $k$ is $\mathcal{O}(k^2 2^k)$.
Suppose the problem is recursively decomposed into $m$ leaf problems, and let $k_{\max}$ be the size of the largest leaf problem. The total cost of UIG construction across all recursion levels is bounded by $\mathcal{O}(m n^2)$, since each step involves MB learning.
Hence, the overall complexity is bounded by $\mathcal{O}(m  k_{\max}^2 2^{k_{\max}})$.
Because $k_{\max} \ll n$ for decomposable graphs, the exponential term is considerably reduced compared to the global complexity $\mathcal{O}(n^2 2^n)$. 


\section{Experiments}
\label{sec:experiments}

\begin{figure*}[t!]
    \centering
    
    \includegraphics[width=0.98\linewidth]{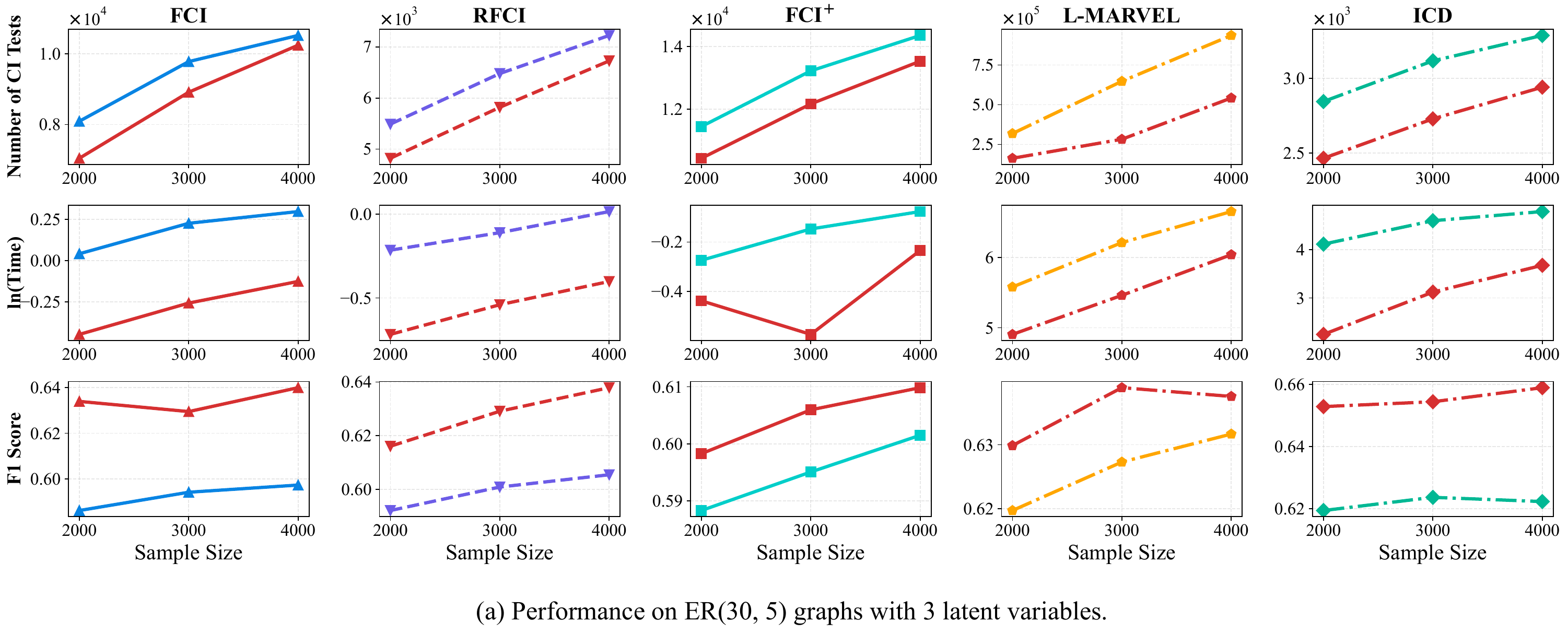}
    \hspace{0.2cm}
    
    \includegraphics[width=0.98\linewidth]{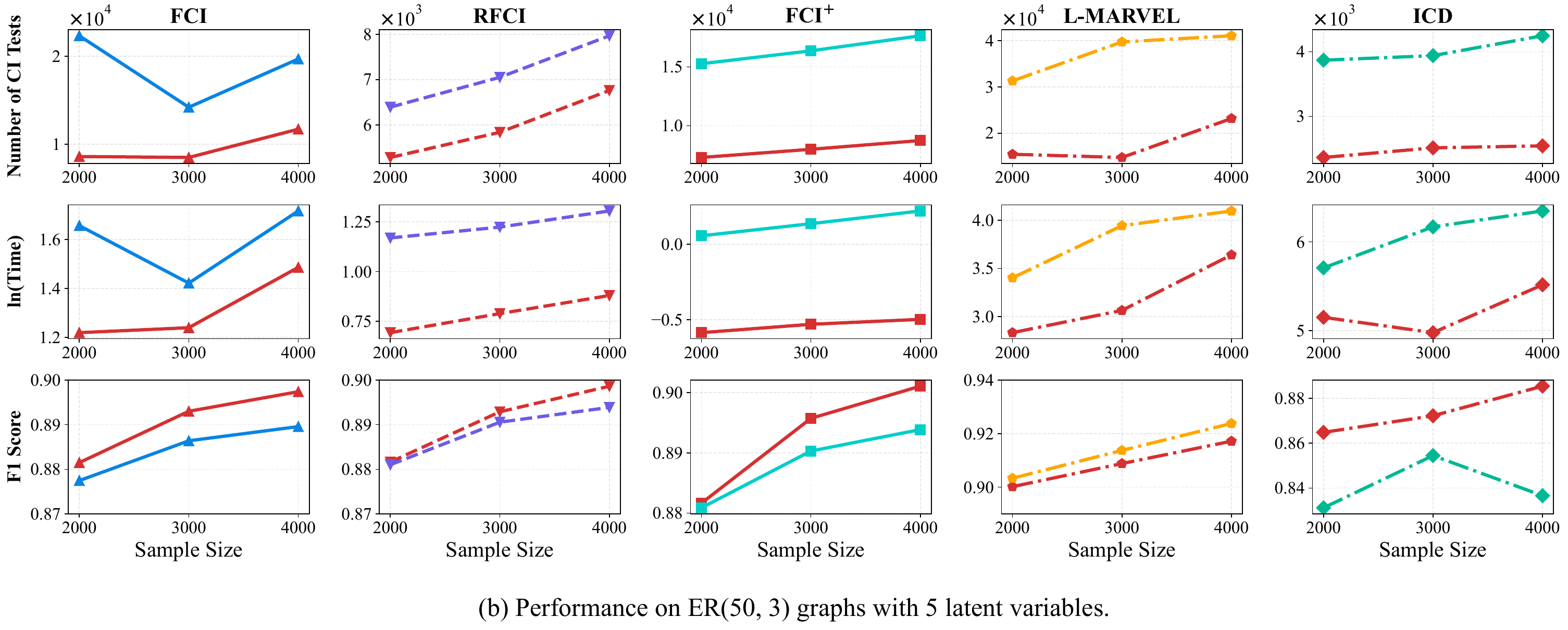}
    \hspace{0.2cm}
    
    \includegraphics[width=0.98\linewidth]{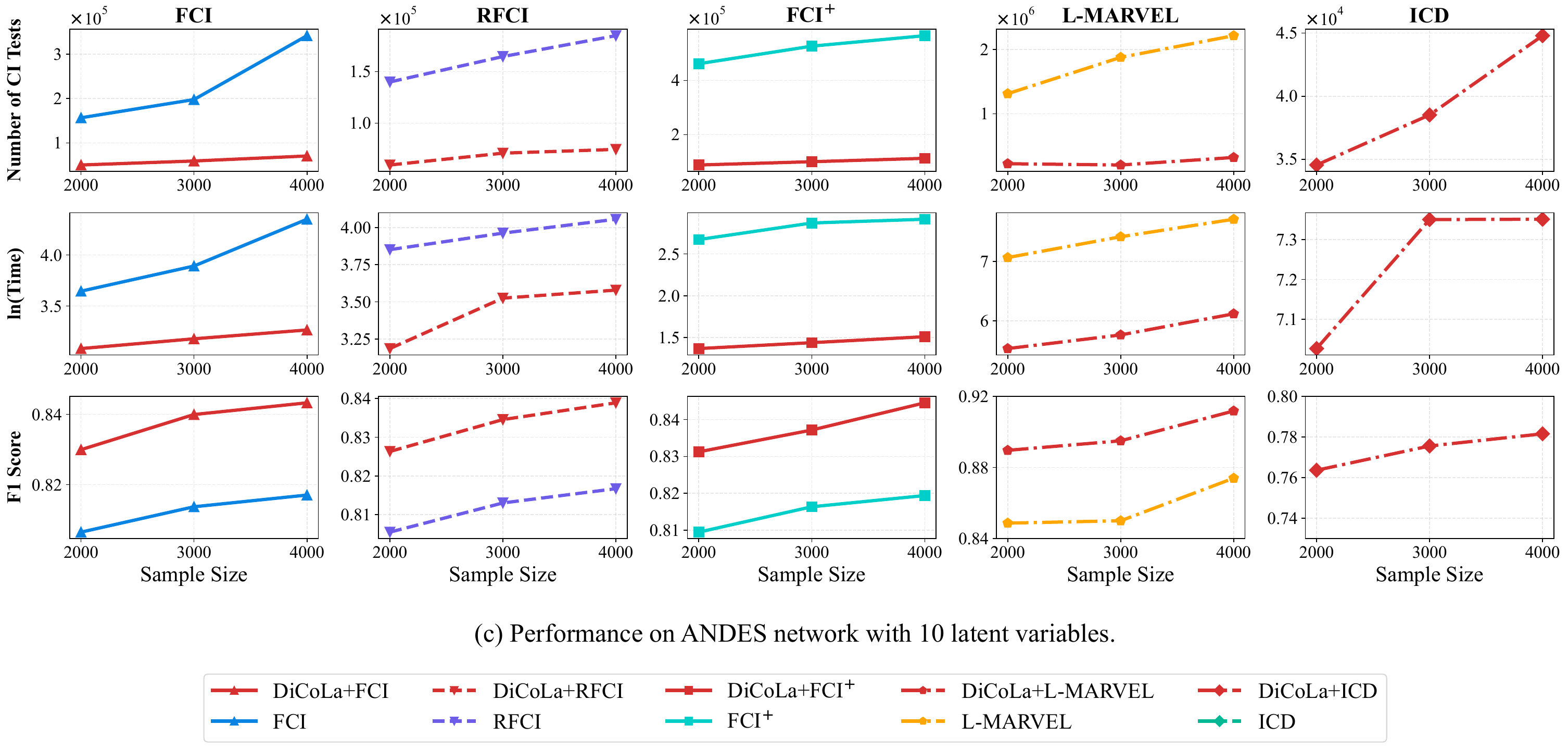}

    \caption{Performance comparison on $\mathrm{ER}(30,5)$ graphs, $\mathrm{ER}(50,3)$ graphs, and the real-world ANDES network. Panels~(a)--(c) correspond to these three settings, respectively.}
    \label{fig:ER-mian-results}
 
\end{figure*}

\subsection{Synthetic Data}
\label{sec:synthetic-Data}
In this section, we present empirical results on synthetic data generated from both random graph structures and real-world Bayesian networks drawn from the Bayesian Network Repository\footnote{\url{https://www.bnlearn.com/bnrepository/}}, a benchmark for causal structure learning.

We consider five causal discovery algorithms that allow latent variables:
\begin{itemize}[leftmargin=14pt,itemsep=2pt,topsep=0pt,parsep=0pt]

    \item \textbf{FCI}:
    The foundational algorithm for causal discovery in the presence of latent confounders \cite{spirtes1995fci}. 
    Implementations are taken from \textit{causal-learn}~\cite{zheng2024causal}.

    \item \textbf{RFCI}:
    A computationally more efficient variant of FCI proposed by \citet{colombo2012rfci}. 
    We adopt its from the \textit{pcalg} R package~\cite{kalisch2012causal}.

    \item \textbf{FCI$^{+}$}:
    An optimized variant of FCI specifically designed to reduce the number of CI tests, proposed by \citet{claassen2013learning}. 
    We adopt its implementation from the \textit{pcalg} R package~\cite{kalisch2012causal}.

    \item \textbf{ICD}:
    The iterative causal discovery algorithm introduced by \citet{rohekar2021iterative}. 
    Source code is obtained from the official GitHub repository (\url{https://github.com/IntelLabs/causality-lab}).

    \item \textbf{L-MARVEL}:
    The recursive causal discovery method of \citet{akbari2021recursive}, designed to handle latent confounding. 
    We use the implementation in the Python package \textit{rcd}~\cite{mokhtarian2025recursive}.

\end{itemize}
For each base algorithm, we compare its standalone performance with its integration within the proposed \textsc{DiCoLa} framework, denoted as \textsc{DiCoLa}+Method (e.g., \textsc{DiCoLa}+FCI)\footnote{Implementation details and full experimental settings are provided in \cref{Appendix:Experiments}.}.

\textbf{Experimental setup and metrics.}
Following the convention in \citet{colombo2012rfci,akbari2021recursive,rohekar2021iterative}, the underlying causal structures are parameterized as linear Gaussian structural causal models. Causal coefficients are sampled uniformly from $\pm(0.5, 1)$, and all noise terms are independently drawn from a standard Gaussian distribution.
We evaluate structural accuracy using Precision, Recall, and F1-score of the recovered edges, following~\citet{akbari2021recursive,mokhtarian2025recursive}. To assess computational efficiency, we record both the number of conditional independence (CI) tests and the execution time (in seconds) required by each method on each dataset. For all methods, the maximum runtime is capped at ten times the number of observed variables, after which execution is automatically terminated.
All reported results are averaged over 50 independently generated datasets to ensure statistical reliability. For each dataset, latent variables are randomly selected from vertices in the underlying causal graph that have at least two children.


\textbf{Random structures.}
We simulate random underlying causal structures from the Erdős--Rényi model ER$(n,d)$~\citep{erd6s1960evolution}, where $n$ denotes the number of variables in $\vars[V]$ and $d$ specifies the expected average degree of each variable. 
To evaluate the performance of different methods under varying graph sizes, sparsity levels, and degrees of latent confounding, we consider four settings: 
\begin{itemize}[leftmargin=14pt,itemsep=2pt,topsep=0pt,parsep=0pt]
    \item For graphs with 30 vertices, we use $\mathrm{ER}(30,3)$ and $\mathrm{ER}(30,5)$ and randomly designate 3 variables as latent. 
    \item For larger graphs, we use $\mathrm{ER}(50,3)$ and consider two levels of latent variables, with 5 and 7 latent variables, respectively.
\end{itemize}

\textbf{Benchmark structures.}
We further evaluate the compared methods on four benchmark Bayesian networks with varying dimensionality:
\begin{itemize}[leftmargin=14pt,itemsep=2pt,topsep=0pt,parsep=0pt]
    \item MILDEW: 35 vertices and 3 latent variables;
    \item BARLEY: 48 vertices and 4 latent variables;
    \item ANDES: 223 vertices and 10 latent variables;
    \item LINK: 724 vertices and 50 latent variables.
\end{itemize}

\textbf{Results.}
Due to space limitations, we present results for three metrics under one representative random-structure setting and one benchmark structure in \cref{fig:ER-mian-results}, with complete results provided in \cref{Appendix:Experiments}. 
Note that certain methods are excluded from some plots due to excessive runtime.
As expected, across all settings, \textsc{DiCoLa} significantly reduces the number of CI tests and runtime for all base methods. 
Regarding structural accuracy, our framework generally improves or matches the F1-scores of the base methods. 
It is noteworthy that while \textsc{DiCoLa}+L-MARVEL shows a slight accuracy trade-off on simpler graphs (e.g., ER$(50, 3)$ with 5 latent variables; see \cref{fig:ER-mian-results}(a)), this trend is reversed or mitigated in more challenging settings with higher dimensionality, denser connectivity, or increased latent confounding (e.g., the ANDES network with 10 latent variables; see \cref{fig:ER-mian-results}(b)).



\subsection{Application to \emph{Arabidopsis thaliana} Gene Expression Data}
\label{sec:real-data}

In this section, we apply \textsc{DiCoLa} in conjunction with the seminal FCI algorithm (\textsc{DiCoLa}+FCI) to the gene expression dataset from \citet{wille2004sparse}. This dataset comprises gene expression measurements from the \emph{Arabidopsis thaliana} collected under 118 different experimental conditions, including variations in light, darkness, and exposure to growth hormones. 
In \emph{Arabidopsis thaliana}, isoprenoids are synthesized via two distinct pathways located in different cellular compartments: the mevalonate (MVA) pathway in the cytoplasm and the non-mevalonate (MEP) pathway in the chloroplast
We focus on 33 genes involved in these pathways or localized in the mitochondrion. While the true causal graph is unknown, biological literature and previous statistical analyses suggest a modular structure where genes within the MEP and MVA pathways are densely connected, with specific loci acting as bridges for cross-talk~\cite{laule2003crosstalk,rodrieguez2004distinct,wille2004sparse,frot2019robust}.

\textbf{Results.} 
\cref{fig:Arabidopsis} shows the adjacency matrix of the PAG recovered by \textsc{DiCoLa}+FCI. Comparing our results with the known metabolic pathways and the graphical models reported in \citet[Figure~3]{wille2004sparse}, we observe several biologically consistent patterns:
\begin{itemize}[leftmargin=10pt,itemsep=2pt,topsep=2pt,parsep=0pt]
    \item 
    \emph{(1) Modular Recovery:} \textsc{DiCoLa}+FCI successfully recovers the block-diagonal structure characteristic of the two distinct biosynthesis pathways. We observe dense connectivity among genes in the plastidial MEP pathway (e.g., \emph{DXR}, \emph{MCT}, \emph{CMK}, \emph{MECPS}), shown in the upper-left quadrant of the adjacency matrix. Similarly, genes associated with the cytosolic MVA pathway (e.g., \emph{MK}, \emph{MPDC1}, \emph{MPDC2}, \emph{IPPI2}) form a coherent cluster in the lower-right section. This aligns with the findings of \citet{wille2004sparse}, who reported modules of closely connected genes within each pathway.
    \item 
    \emph{(2) Inter-pathway Connections:} While the pathways operate in distinct compartments, \textsc{DiCoLa}+FCI identifies sparse potential interactions between them. These findings corroborate that while intra-pathway regulation involves tight co-expression modules, cross-talk is restricted to specific loci, as previously suggested by \citet{wille2004sparse,laule2003crosstalk}.
\end{itemize}

\begin{figure}[t!]
    \centering 
    \includegraphics[width=1.0\linewidth]{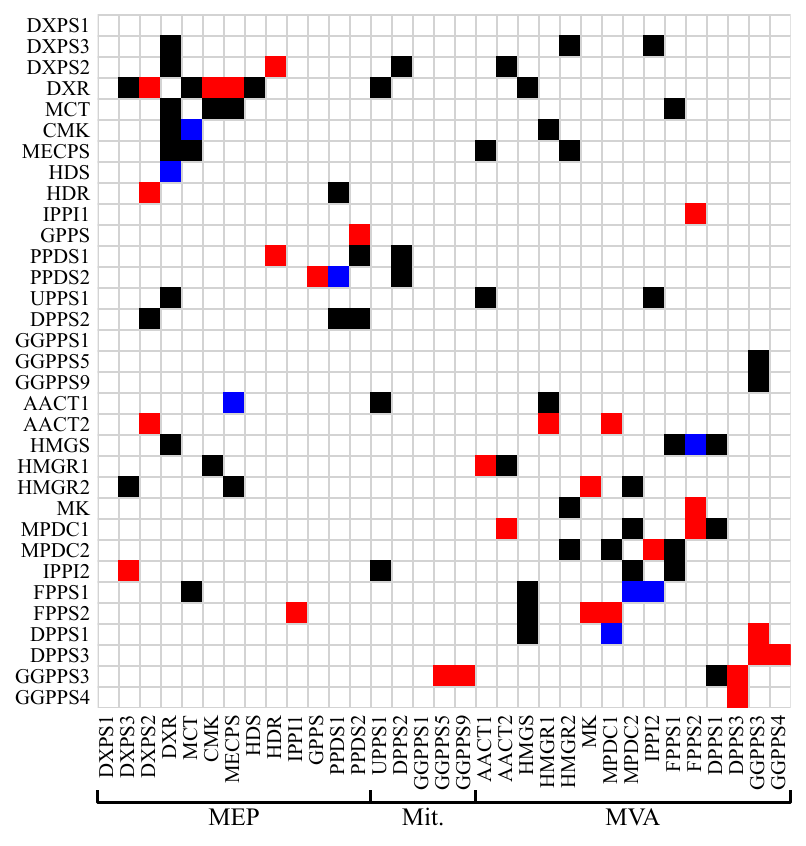}
    \caption{Estimated PAG obtained by applying \textsc{DiCoLa}+FCI to the \emph{Arabidopsis thaliana} dataset. The adjacency matrix visualizes the edge marks directed from the gene labelled by the $i$-th row to the gene labelled by the $j$-th column: black squares indicate arrowheads, blue squares indicate tails, and red squares indicate circles.}
    \label{fig:Arabidopsis}
\end{figure}

\vspace{-4mm}
\section{Conclusions and Discussions}
\label{sec:conclusions}




In this work, we established the theoretical foundations for divide-and-conquer causal structure learning in the presence of latent variables by leveraging m-separation properties of maximal ancestral graphs.
Building on those theories, we proposed \textsc{DiCoLa}, a recursive decomposition framework designed to improve the efficiency of existing causal structure learning methods. We further proved that \textsc{DiCoLa} is sound and complete whenever the base structure learning algorithm is sound and complete. Extensive experiments demonstrate that \textsc{DiCoLa} substantially improves computational efficiency while preserving structural accuracy.


While \textsc{DiCoLa} provides substantial computational gains on sparse and well-decomposable graphs, its acceleration naturally becomes less pronounced as graph density increases. We provide a more detailed discussion of this limitation, together with additional experiments on denser graphs, comparisons with local causal structure learning methods, and relations to methods for learning causal models with latent variables, in \cref{app:limitations-discussion}.
Several directions remain for future work. First, developing more effective strategies for identifying valid decompositions, especially in dense graphs where sparse separators may be difficult to find. Second, extending the framework to settings with selection bias. Third, integrating \textsc{DiCoLa} with score-based and hybrid causal discovery methods, and establishing theoretical guarantees for the soundness and correctness of the resulting procedures, could further broaden its applicability.

\clearpage
\section*{Acknowledgements}

We appreciate the comments from anonymous reviewers, which greatly helped to improve the paper.
This research was supported by the National Key R\&D Program of China (2022YFA1008300), the National Natural Science Foundation of China (62306019, 62472415), and the GuangDong Basic and Applied Basic Research Foundation (2025A1515010103, 2026B1515020017).
Feng Xie was supported by the China Scholarship Council (CSC), the Beijing Key Laboratory of Applied Statistics and Digital Regulation, and the BTBU Digital Business Platform Project by BMEC.

\section*{Impact Statement}
This paper presents methodological advancements in the field of Causal Inference, specifically focusing on structure learning in the presence of latent variables. While the broader application of causal discovery has significant potential to impact domains such as healthcare, economics, and policy-making, the contributions of this work are primarily theoretical and algorithmic. We do not foresee any immediate negative societal consequences or specific ethical concerns that require highlighting beyond standard considerations in data science.


\bibliography{reference}

@book{pearl2000causality,
  title={Causality: Models, Reasoning, and Inference},
  author={Pearl, Judea},
  year={2000},
  publisher={Cambridge University Press}
}

@inproceedings{verma1990equivalence,
  title={Equivalence and synthesis of causal models},
  author={Verma, Thomas and Pearl, Judea},
  booktitle={Proceedings of the Sixth Annual Conference on Uncertainty in Artificial Intelligence},
  pages={255--270},
  year={1990}
}

@article{zhang2008causal,
  title={Causal Reasoning with Ancestral Graphs.},
  author={Zhang, Jiji},
  journal={Journal of Machine Learning Research},
  volume={9},
  number={7},
  year={2008}
}

@article{richardson2003markov,
  title={Markov properties for acyclic directed mixed graphs},
  author={Richardson, Thomas},
  journal={Scandinavian Journal of Statistics},
  volume={30},
  number={1},
  pages={145--157},
  year={2003},
  publisher={Wiley Online Library}
}

@article{pellet2008finding,
  title={Finding latent causes in causal networks: an efficient approach based on Markov blankets},
  author={Pellet, Jean-Philippe and Elisseeff, Andr{\'e}},
  journal={Advances in Neural Information Processing Systems},
  volume={21},
  year={2008}
}

@article{pellet2008using,
  title={Using markov blankets for causal structure learning.},
  author={Pellet, Jean-Philippe and Elisseeff, Andr{\'e}},
  journal={Journal of Machine Learning Research},
  volume={9},
  number={7},
  year={2008}
}

@book{pearl1988Probaili,
author = {Pearl, Judea},
title = {Probabilistic reasoning in intelligent systems: networks of plausible inference},
year = {1988},
publisher = {Morgan Kaufmann Publishers Inc.},
address = {San Francisco, CA, USA}
}

@article{guyon2003introduction,
  title={An introduction to variable and feature selection},
  author={Guyon, Isabelle and Elisseeff, Andr{\'e}},
  journal={Journal of machine learning research},
  volume={3},
  number={Mar},
  pages={1157--1182},
  year={2003}
}

@article{gao2016efficient,
  title={Efficient Markov blanket discovery and its application},
  author={Gao, Tian and Ji, Qiang},
  journal={IEEE transactions on Cybernetics},
  volume={47},
  number={5},
  pages={1169--1179},
  year={2016},
  publisher={IEEE}
}

@article{aliferis2010local,
  title={Local causal and Markov blanket induction for causal discovery and feature selection for classification part I: algorithms and empirical evaluation.},
  author={Aliferis, Constantin F and Statnikov, Alexander and Tsamardinos, Ioannis and Mani, Subramani and Koutsoukos, Xenofon D},
  journal={Journal of Machine Learning Research},
  volume={11},
  number={1},
  year={2010}
}

@article{xie2006decomposition,
   title = {Decomposition of structural learning about directed acyclic graphs},
   author = {Xianchao Xie and Zhi Geng and Qiang Zhao},
   journal = {Artificial Intelligence},
   month = {4},
   pages = {422-439},
   volume = {170},
   year = {2006}
}

@article{xie2008recursive,
  title={A recursive method for structural learning of directed acyclic graphs},
  author={Xie, Xianchao and Geng, Zhi},
  journal = {Journal of Machine Learning Research},
  volume={9},
  pages={459--483},
  year={2008}
}

@inproceedings{cai2013sada,
  title={SADA: A general framework to support robust causation discovery},
  author={Cai, Ruichu and Zhang, Zhenjie and Hao, Zhifeng},
  booktitle={International Conference on Machine Learning},
  pages={208--216},
  year={2013},
  organization={PMLR}
}

@article{liu2017new,
  title={A new hybrid method for learning Bayesian networks: Separation and reunion},
  author={Liu, Hui and Zhou, Shuigeng and Lam, Wai and Guan, Jihong},
  journal={Knowledge-Based Systems},
  volume={121},
  pages={185--197},
  year={2017},
  publisher={Elsevier}
}

@article{cai2017sada,
  title={SADA: A general framework to support robust causation discovery with theoretical guarantee},
  author={Cai, Ruichu and Zhang, Zhenjie and Hao, Zhifeng},
  journal={arXiv preprint arXiv:1707.01283},
  year={2017}
}

@article{cai2017sophisticated,
  title={Sophisticated merging over random partitions: a scalable and robust causal discovery approach},
  author={Cai, Ruichu and Zhang, Zhenjie and Hao, Zhifeng and Winslett, Marianne},
  journal={IEEE Transactions on Neural Networks and Learning Systems},
  volume={29},
  number={8},
  pages={3623--3635},
  year={2017},
  publisher={IEEE}
}

@inproceedings{zhang2019recursively,
  title={Recursively learning causal structures using regression-based conditional independence test},
  author={Zhang, Hao and Zhou, Shuigeng and Yan, Chuanxu and Guan, Jihong and Wang, Xin},
  booktitle={Proceedings of the AAAI Conference on Artificial Intelligence},
  volume={33},
  number={01},
  pages={3108--3115},
  year={2019}
}

@article{zhang2020learning,
  title={Learning causal structures based on divide and conquer},
  author={Zhang, Hao and Zhou, Shuigeng and Yan, Chuanxu and Guan, Jihong and Wang, Xin and Zhang, Ji and Huan, Jun},
  journal={IEEE Transactions on Cybernetics},
  volume={52},
  number={5},
  pages={3232--3243},
  year={2020},
  publisher={IEEE}
}

@inproceedings{rahman2021accelerating,
  title={Accelerating Recursive Partition-Based Causal Structure Learning},
  author={Rahman, Md Musfiqur and Rasheed, Ayman and Khan, Md Mosaddek and Javidian, Mohammad Ali and Jamshidi, Pooyan and Mamun-Or-Rashid, Md},
  booktitle={Proceedings of the 20th International Conference on Autonomous Agents and MultiAgent Systems},
  pages={1028--1036},
  year={2021}
}

@article{zhang2024towards,
  title={Towards effective causal partitioning by edge cutting of adjoint graph},
  author={Zhang, Hao and Ren, Yixin and Xia, Yewei and Zhou, Shuigeng and Guan, Jihong},
  journal={IEEE Transactions on Pattern Analysis and Machine Intelligence},
  year={2024},
  publisher={IEEE}
}

@article{rohekar2021iterative,
  title={Iterative causal discovery in the possible presence of latent confounders and selection bias},
  author={Rohekar, Raanan Y and Nisimov, Shami and Gurwicz, Yaniv and Novik, Gal},
  journal={Advances in Neural Information Processing Systems},
  volume={34},
  pages={2454--2465},
  year={2021}
}

@article{mokhtarian2025recursive,
  title={Recursive causal discovery},
  author={Mokhtarian, Ehsan and Elahi, Sepehr and Akbari, Sina and Kiyavash, Negar},
  journal={Journal of Machine Learning Research},
  volume={26},
  number={61},
  pages={1--65},
  year={2025}
}

@article{zhang2008completeness,
  title={On the completeness of orientation rules for causal discovery in the presence of latent confounders and selection bias},
  author={Zhang, Jiji},
  journal={Artificial Intelligence},
  volume={172},
  number={16-17},
  pages={1873--1896},
  year={2008}
}

@article{richardson2002ancestral,
  title={Ancestral graph Markov models},
  author={Richardson, Thomas and Spirtes, Peter},
  journal={The Annals of Statistics},
  volume={30},
  number={4},
  pages={962--1030},
  year={2002},
  publisher={Institute of Mathematical Statistics}
}

@inproceedings{spirtes1995fci,
  title={Causal inference in the presence of latent variables and selection bias},
  author={Spirtes, Peter and Meek, Christopher and Richardson, Thomas},
  booktitle={Proceedings of the Eleventh conference on Uncertainty in artificial intelligence (UAI)},
  pages={499--506},
  year={1995},
}

@article{colombo2012rfci,
  title={Learning high-dimensional directed acyclic graphs with latent and selection variables},
  author={Colombo, Diego and Maathuis, Marloes H and Kalisch, Markus and Richardson, Thomas S},
  journal={The Annals of Statistics},
  pages={294--321},
  year={2012},
  publisher={JSTOR}
}

@inproceedings{claassen2013learning,
  title={Learning Sparse Causal Models is not NP-hard},
  author={Claassen, Tom and Mooij, Joris M and Heskes, Tom},
  booktitle={Proceedings of the Twenty-Ninth Conference on Uncertainty in Artificial Intelligence (UAI)},
  pages={172},
  year={2013},
  organization={Citeseer}
}

@article{spirtes1991PC,
	title={An algorithm for fast recovery of sparse causal graphs},
	author={Spirtes, Peter and Glymour, Clark},
	journal={Social science computer review},
	volume={9},
	number={1},
	pages={62--72},
	year={1991},
	publisher={Sage Publications Sage CA: Thousand Oaks, CA}
}

@article{harris2013pc,
  title={PC algorithm for nonparanormal graphical models},
  author={Harris, Naftali and Drton, Mathias},
  journal={Journal of Machine Learning Research},
  volume={14},
  number={1},
  pages={3365--3383},
  year={2013},
  publisher={JMLR. org}
}

@article{colombo2014order,
  title={Order-independent constraint-based causal structure learning.},
  author={Colombo, Diego and Maathuis, Marloes H and others},
  journal={Journal of Machine Learning Research},
  volume={15},
  number={1},
  pages={3741--3782},
  year={2014}
}

@article{le2016multipc,
  title={A fast PC algorithm for high dimensional causal discovery with multi-core PCs},
  author={Le, Thuc Duy and Hoang, Tao and Li, Jiuyong and Liu, Lin and Liu, Huawen and Hu, Shu},
  journal={IEEE/ACM transactions on computational biology and bioinformatics},
  volume={16},
  number={5},
  pages={1483--1495},
  year={2016},
  publisher={IEEE}
}

@book{spirtes2000causation,
	title={Causation, Prediction, and Search},
	author={Spirtes, Peter and Glymour, Clark and Scheines, Richard},
	year={2000},
	publisher={MIT press}
}

@book{pearl2009causality,
  title={Causality: Models, Reasoning, and Inference},
  author={ Pearl, Judea},
  edition={2nd},
  publisher={Cambridge University Press},
address={New York},
  year={2009}
}

@book{peters2017elements,
	title={Elements of Causal Inference},
	author={Jonas, Peters and Dominik, Janzing and Bernhard Sch{\"o}lkopf},
	year={2017},
	publisher={MIT Press}
}

@article{glymour2019review,
  title={Review of causal discovery methods based on graphical models},
  author={Glymour, Clark and Zhang, Kun and Spirtes, Peter},
  journal={Frontiers in genetics},
  volume={10},
  pages={524},
  year={2019},
  publisher={Frontiers Media SA}
}

@inproceedings{pearl2018theoretical,
  title={Theoretical Impediments to Machine Learning With Seven Sparks from the Causal Revolution},
  author={Pearl, Judea},
  booktitle={Proceedings of the Eleventh ACM International Conference on Web Search and Data Mining},
  pages={3--3},
  year={2018}
}

@misc{hernan2010causal,
  title={Causal inference},
  author={Hern{\'a}n, Miguel A and Robins, James M},
  year={2010},
  publisher={CRC Boca Raton, FL}
}

@article{smith2011network,
  title={Network modelling methods for FMRI},
  author={Smith, Stephen M and Miller, Karla L and Salimi-Khorshidi, Gholamreza and Webster, Matthew and Beckmann, Christian F and Nichols, Thomas E and Ramsey, Joseph D and Woolrich, Mark W},
  journal={Neuroimage},
  volume={54},
  number={2},
  pages={875--891},
  year={2011},
  publisher={Elsevier}
}

@article{sanchez2019estimating,
  title={Estimating feedforward and feedback effective connections from fMRI time series: Assessments of statistical methods},
  author={Sanchez-Romero, Ruben and Ramsey, Joseph D and Zhang, Kun and Glymour, Madelyn RK and Huang, Biwei and Glymour, Clark},
  journal={Network Neuroscience},
  volume={3},
  number={2},
  pages={274--306},
  year={2019},
  publisher={MIT Press One Rogers Street, Cambridge, MA 02142-1209, USA journals-info~…}
}

@article{akbari2021recursive,
  title={Recursive causal structure learning in the presence of latent variables and selection bias},
  author={Akbari, Sina and Mokhtarian, Ehsan and Ghassami, AmirEmad and Kiyavash, Negar},
  journal={Advances in Neural Information Processing Systems},
  volume={34},
  pages={10119--10130},
  year={2021}
}

@article{kalisch2012causal,
  title={Causal inference using graphical models with the R package pcalg},
  author={Kalisch, Markus and M{\"a}chler, Martin and Colombo, Diego and Maathuis, Marloes H and B{\"u}hlmann, Peter},
  journal={Journal of statistical software},
  volume={47},
  pages={1--26},
  year={2012}
}

@article{spirtes2016causal,
  title={Causal discovery and inference: concepts and recent methodological advances},
  author={Spirtes, Peter and Zhang, Kun},
  booktitle={Applied informatics},
  volume={3},
  number={1},
  pages={1--28},
  year={2016},
  journal={Applied Informatics},
  organization={SpringerOpen}
}

@article{xie2020generalized,
  title={Generalized independent noise condition for estimating latent variable causal graphs},
  author={Xie, Feng and Cai, Ruichu and Huang, Biwei and Glymour, Clark and Hao, Zhifeng and Zhang, Kun},
  journal={Advances in neural information processing systems},
  volume={33},
  pages={14891--14902},
  year={2020}
}

@article{xie2024generalized,
  title={Generalized independent noise condition for estimating causal structure with latent variables},
  author={Xie, Feng and Huang, Biwei and Chen, Zhengming and Cai, Ruichu and Glymour, Clark and Geng, Zhi and Zhang, Kun},
  journal={Journal of Machine Learning Research},
  volume={25},
  number={191},
  pages={1--61},
  year={2024}
}

@inproceedings{aliferis2003hiton,
  title={HITON: a novel Markov Blanket algorithm for optimal variable selection},
  author={Aliferis, Constantin F and Tsamardinos, Ioannis and Statnikov, Alexander},
  booktitle={AMIA annual symposium proceedings},
  volume={2003},
  pages={21},
  year={2003},
  organization={American Medical Informatics Association}
}

@article{heinze2018causal,
  title={Causal structure learning},
  author={Heinze-Deml, Christina and Maathuis, Marloes H and Meinshausen, Nicolai},
  journal={Annual Review of Statistics and Its Application},
  volume={5},
  pages={371--391},
  year={2018},
  publisher={Annual Reviews}
}

@article{kitson2023survey,
  title={A survey of Bayesian Network structure learning},
  author={Kitson, Neville Kenneth and Constantinou, Anthony C and Guo, Zhigao and Liu, Yang and Chobtham, Kiattikun},
  journal={Artificial Intelligence Review},
  pages={1--94},
  year={2023},
  publisher={Springer}
}

@inproceedings{ogarrio2016hybrid,
  title={A hybrid causal search algorithm for latent variable models},
  author={Ogarrio, Juan Miguel and Spirtes, Peter and Ramsey, Joe},
  booktitle={Conference on probabilistic graphical models},
  pages={368--379},
  year={2016},
  organization={PMLR}
}

@incollection{scholkopf2022causality,
  title={Causality for machine learning},
  author={Sch{\"o}lkopf, Bernhard},
  booktitle={Probabilistic and Causal Inference: The Works of Judea Pearl},
  pages={765--804},
  year={2022}
}

@inproceedings{claassen2011logical,
  title={A logical characterization of constraint-based causal discovery},
  author={Claassen, Tom and Heskes, Tom},
  booktitle={Proceedings of the Twenty-Seventh Conference on Uncertainty in Artificial Intelligence},
  pages={135--144},
  year={2011}
}

@inproceedings{mokhtarian2023novel,
  title={Novel ordering-based approaches for causal structure learning in the presence of unobserved variables},
  author={Mokhtarian, Ehsan and Khorasani, Mohmmadsadegh and Etesami, Jalal and Kiyavash, Negar},
  booktitle={Proceedings of the AAAI Conference on Artificial Intelligence},
  volume={37},
  number={10},
  pages={12260--12268},
  year={2023}
}

@article{laule2003crosstalk,
  title={Crosstalk between cytosolic and plastidial pathways of isoprenoid biosynthesis in Arabidopsis thaliana},
  author={Laule, Oliver and F{\"u}rholz, Andreas and Chang, Hur-Song and Zhu, Tong and Wang, Xun and Heifetz, Peter B and Gruissem, Wilhelm and Lange, Markus},
  journal={Proceedings of the National Academy of Sciences},
  volume={100},
  number={11},
  pages={6866--6871},
  year={2003},
  publisher={National Academy of Sciences}
}

@article{rodrieguez2004distinct,
  title={Distinct light-mediated pathways regulate the biosynthesis and exchange of isoprenoid precursors during Arabidopsis seedling development},
  author={Rodri{\'e}guez-Concepcio{\'e}n, Manuel and Fore{\'e}s, Oriol and Marti{\'e}nez-Garci{\'e}a, Jaime F and Gonza{\'e}lez, Vi{\'e}ctor and Phillips, Michael A and Ferrer, Albert and Boronat, Albert},
  journal={The Plant Cell},
  volume={16},
  number={1},
  pages={144--156},
  year={2004},
  publisher={American Society of Plant Biologists}
}

@article{wille2004sparse,
  title={Sparse graphical Gaussian modeling of the isoprenoid gene network in Arabidopsis thaliana},
  author={Wille, Anja and Zimmermann, Philip and Vranov{\'a}, Eva and F{\"u}rholz, Andreas and Laule, Oliver and Bleuler, Stefan and Hennig, Lars and Preli{\'c}, Amela and von Rohr, Peter and Thiele, Lothar and others},
  journal={Genome biology},
  volume={5},
  number={11},
  pages={1--13},
  year={2004},
  publisher={BioMed Central}
}

@article{frot2019robust,
  title={Robust causal structure learning with some hidden variables},
  author={Frot, Benjamin and Nandy, Preetam and Maathuis, Marloes H},
  journal={Journal of the Royal Statistical Society Series B: Statistical Methodology},
  volume={81},
  number={3},
  pages={459--487},
  year={2019},
  publisher={Oxford University Press}
}

@article{zheng2024causal,
  title={Causal-learn: Causal discovery in python},
  author={Zheng, Yujia and Huang, Biwei and Chen, Wei and Ramsey, Joseph and Gong, Mingming and Cai, Ruichu and Shimizu, Shohei and Spirtes, Peter and Zhang, Kun},
  journal={Journal of Machine Learning Research},
  volume={25},
  number={60},
  pages={1--8},
  year={2024}
}

@article{rohekar2018bayesian,
  title={Bayesian structure learning by recursive bootstrap},
  author={Rohekar, Raanan Y and Gurwicz, Yaniv and Nisimov, Shami and Koren, Guy and Novik, Gal},
  journal={Advances in Neural Information Processing Systems},
  volume={31},
  year={2018}
}

@inproceedings{rohekar2023temporal,
  title={From temporal to contemporaneous iterative causal discovery in the presence of latent confounders},
  author={Rohekar, Raanan Yehezkel and Nisimov, Shami and Gurwicz, Yaniv and Novik, Gal},
  booktitle={International Conference on Machine Learning},
  pages={39939--39950},
  year={2023},
  organization={PMLR}
}

@inproceedings{jensen1994optimal,
  title={Optimal junction trees},
  author={Jensen, Finn V and Jensen, Frank},
  booktitle={Uncertainty in Artificial Intelligence},
  pages={360--366},
  year={1994}
}

@article{erd6s1960evolution,
  title={On the evolution of random graphs},
  author={Erdős, Paul and R{\'e}nyi, Alfr{\'e}d},
  journal={Publications of the Mathematical Institute of the Hungarian Academy of Sciences,},
  volume={5},
  pages={17--61},
  year={1960},
  publisher={Citeseer}
}

@inproceedings{jensen1996midas,
  title={MIDAS-an influence diagram for management of mildew in winter wheat},
  author={Jensen, Allan Leck and Jensen, Finn Verner},
  booktitle={Proceedings of the Twelfth international conference on Uncertainty in artificial intelligence},
  pages={349--356},
  year={1996}
}

@inproceedings{hagberg2008exploring,
  title={Exploring Network Structure, Dynamics, and Function using NetworkX},
  author={Hagberg, Aric A and Schult, Daniel A and Swart, Pieter J},
  booktitle = {Proceedings of the 7th Python in Science Conference},
  pages = {11 - 15},
  address = {Pasadena, CA USA},
  pages={11--15},
  year={2008}
}

@article{fisher1915frequency,
  title={Frequency distribution of the values of the correlation coefficient in samples from an indefinitely large population},
  author={Fisher, Ronald A},
  journal={Biometrika},
  volume={10},
  number={4},
  pages={507--521},
  year={1915},
  publisher={JSTOR}
}

@inproceedings{shiragur2024causal,
  title={Causal Discovery with Fewer Conditional Independence Tests},
  author={Shiragur, Kirankumar and Zhang, Jiaqi and Uhler, Caroline},
  booktitle={International Conference on Machine Learning},
  pages={45060--45078},
  year={2024},
  organization={PMLR}
}

@inproceedings{mokhtarian2022learning,
  title={Learning Bayesian networks in the presence of structural side information},
  author={Mokhtarian, Ehsan and Akbari, Sina and Jamshidi, Fateme and Etesami, Jalal and Kiyavash, Negar},
  booktitle={Proceedings of the AAAI Conference on Artificial Intelligence},
  volume={36},
  number={7},
  pages={7814--7822},
  year={2022}
}

@inproceedings{mokhtarian2021recursive,
  title={A recursive Markov boundary-based approach to causal structure learning},
  author={Mokhtarian, Ehsan and Akbari, Sina and Ghassami, AmirEmad and Kiyavash, Negar},
  booktitle={The KDD'21 Workshop on Causal Discovery},
  pages={26--54},
  year={2021},
  organization={PMLR}
}

@inproceedings{zhang2024membership,
  title={Membership testing in markov equivalence classes via independence queries},
  author={Zhang, Jiaqi and Shiragur, Kirankumar and Uhler, Caroline},
  booktitle={International Conference on Artificial Intelligence and Statistics},
  pages={3925--3933},
  year={2024},
  organization={PMLR}
}

@article{tsirlis2018scoring,
  title={On scoring maximal ancestral graphs with the max--min hill climbing algorithm},
  author={Tsirlis, Konstantinos and Lagani, Vincenzo and Triantafillou, Sofia and Tsamardinos, Ioannis},
  journal={International Journal of Approximate Reasoning},
  volume={102},
  pages={74--85},
  year={2018},
  publisher={Elsevier}
}

@book{lauritzen1996graphical,
  title={Graphical models},
  author={Lauritzen, Steffen L},
  year={1996},
  publisher={Oxford University Press}
}

@article{chen2023causal,
  title={Causal structural learning via local graphs},
  author={Chen, Wenyu and Drton, Mathias and Shojaie, Ali},
  journal={SIAM journal on mathematics of data science},
  volume={5},
  number={2},
  pages={280--305},
  year={2023},
  publisher={SIAM}
}

@article{guo2022error,
  title={Error-aware Markov blanket learning for causal feature selection},
  author={Guo, Xianjie and Yu, Kui and Cao, Fuyuan and Li, Peipei and Wang, Hao},
  journal={Information Sciences},
  volume={589},
  pages={849--877},
  year={2022},
  publisher={Elsevier}
}

@article{ling2022light,
  title={A light causal feature selection approach to high-dimensional data},
  author={Ling, Zhaolong and Li, Ying and Zhang, Yiwen and Yu, Kui and Zhou, Peng and Li, Bo and Wu, Xindong},
  journal={IEEE Transactions on Knowledge and Data Engineering},
  volume={35},
  number={8},
  pages={7639--7650},
  year={2022},
  publisher={IEEE}
}

@inproceedings{xie2024local,
  title={Local Causal Structure Learning in the Presence of Latent Variables},
  author={Xie, Feng and Li, Zheng and Wu, Peng and Zeng, Yan and Liu, Chunchen and Geng, Zhi},
  booktitle={International Conference on Machine Learning},
  pages={54511--54530},
  year={2024},
  organization={PMLR}
}

@inproceedings{squires2022causal,
  title={Causal structure discovery between clusters of nodes induced by latent factors},
  author={Squires, Chandler and Yun, Annie and Nichani, Eshaan and Agrawal, Raj and Uhler, Caroline},
  booktitle={Conference on Causal Learning and Reasoning},
  pages={669--687},
  year={2022},
  organization={PMLR}
}

@inproceedings{dong2024versatile,
  title={A versatile causal discovery framework to allow causally-related hidden variables},
  author={Dong, Xinshuai and Huang, Biwei and Ng, Ignavier and Song, Xiangchen and Zheng, Yujia and Jin, Songyao and Legaspi, Roberto and Spirtes, Peter and Zhang, Kun},
  booktitle={International Conference on Learning Representations},
  volume={2024},
  pages={43084--43118},
  year={2024}
}

@inproceedings{tsamardinos2003time,
  title={Time and sample efficient discovery of Markov blankets and direct causal relations},
  author={Tsamardinos, Ioannis and Aliferis, Constantin F and Statnikov, Alexander},
  booktitle={Proceedings of the ninth ACM SIGKDD international conference on Knowledge discovery and data mining},
  pages={673--678},
  year={2003}
}
\bibliographystyle{icml2026}

\newpage
\appendix
\crefalias{section}{appendix}
\crefalias{subsection}{appendix}
\crefalias{subsubsection}{appendix}
\onecolumn

\section*{Appendix Contents}

\newcommand{\appitem}[3]{
  \noindent
  \hyperref[#1]{
  \textbf{#2}\quad#3}
  \dotfill
  \pageref{#1}
}

\newcommand{\appsubitem}[3]{
  \noindent
  \hyperref[#1]{
  \hspace*{14pt}      % ← 这里控制缩进
  \textbf{#2}\quad#3}
  \dotfill
  \pageref{#1}
}

\medskip

\appitem{Appendix:Preliminaries}{A}{Supplementary Preliminaries}

\appsubitem{Appendix:Terminology}{A.1}{Supplementary Terminology}

\appsubitem{Appendix:Markov-blanket}{A.2}{Markov blanket}

\medskip
\appitem{Appendix:proof}{B}{Proofs}

\appsubitem{Appendix:lemmas}{B.1}{Preliminary Lemmas}

\appsubitem{Appendix:proof-recursive-m-separation}{B.2}{Proof of~\cref{theorem:recursive-m-separation}}

\appsubitem{Appendix:recursive-m-separation-second}{B.3}{Proof of~\cref{theorem:recursive-m-separation-second}}

\appsubitem{Appendix:merged-skeleton}{B.4}{Proof of~\cref{theorem:merged-skeleton}}

\appsubitem{Appendix:proof-propositions}{B.5}{Proof of~\cref{prop:augmented-graph,prop:augmented-graph-mb,prop:local-undirected-ind-graph}}

\appsubitem{Appendix:sound-complete}{B.6}{Proof of~\cref{theorem:sound-complete}}





\medskip
\appitem{Appendix:Experiments}{C}{Supplementary Experiments}



\appitem{app:limitations-discussion}{D}{Limitations and Discussion}

\appsubitem{app:dense-graph-discussion}{D.1}{Limitations under Dense Structures}

\appsubitem{app:local-learning-discussion}{D.2}{Relation to Local Causal Structure Learning-Based Methods}

\appsubitem{app:learning-causal-models-latent-variables}{D.3}{Relation to Learning Causal Models with Latent Variables}

\newpage

\section{Detailed Preliminaries}
\label{Appendix:Preliminaries}

\begin{center}
  \begin{table}[htp]
    \centering
    \small
    \renewcommand{\arraystretch}{1.2}
    \caption{The list of main symbols used in this paper}
    \begin{tabular}{p{6cm}p{10cm}}
    \toprule
    \textbf{Symbol} & \textbf{Description}\\ 
    \midrule
    $\DAG$                 & A directed acyclic graph (DAG)\\
    $\MAG$       & A maximal ancestral graph (MAG)\\
    $\PAG$       & A partial ancestral graph (PAG)\\
    $\overline{\g}_{\vars[K]}$       & An undirected independence graph of $\MAG$ relative to $\vars[K]$\\
    $\g[L]_{\vars[K]}$       & The local skeleton over $\vars[K]$ relative to $\MAG$\\
    $\mathbf{V}$      & The set of all variables \\
    $\mathbf{O}$    & The set of observed variables  \\
    $\mathbf{L}$    & The set of latent variables  \\
    $|\mathbf{K}|$    & The number of variables in $\mathbf{K}$  \\
    $\Pa(X,\g)$ & The set of parents of $X$ in the graph $\g$. \\
    $\MB_{\vars[V]}(X)$ & The Markov blanket of $X$ relative to the variable set $\vars[V]$. \\
    $\An(X,\g)$ & The set of ancestors of $X$ in the graph $\g$. \\
    $\Anplus(X,\g)$ & The set of ancestors of $X$, including $X$ itself, in the graph $\g$. \\
    $\g[G][\mathbf{W}]$ & The induced subgraph of $\g$ on $\mathbf{W}$\\
    \bottomrule
    \end{tabular}
    \label{table:list-symbols}
    \end{table}  
\end{center}

\subsection{Supplementary Terminology}
\label{Appendix:Terminology}

\textbf{Graphs.} 
A graph $\g=(\mathbf{V}, \mathbf{E})$ consists of a set of vertices $\mathbf{V} = \{V_1, \ldots, V_n\}$ and a set of edges $\mathbf{E}$. We consider simple graphs, meaning no self-loops or multiple edges between two vertices.
The two ends of an edge are called \emph{marks}, which can be a tail (`$\--$'), an arrowhead (`$>$'), or a circle (`$\circ$').
For convenience, we use an asterisk (`$*$') to denote any allowed edge mark.
An edge is \emph{\textbf{into}} (\emph{\textbf{out of}}) $V_i$ if it has an arrowhead (tail) at $V_i$.
Given a graph $\g = (\mathbf{V}, \mathbf{E})$ and a subset $\mathbf{W} \subseteq \mathbf{V}$, the \textbf{\emph{induced subgraph}} of $\g$ on $\mathbf{W}$, denoted by $\g[G][\mathbf{W}]$, is the graph whose vertex set is $\mathbf{W}$ and whose edge set consists of all edges in $\g$ with both endpoints in $\mathbf{W}$.
A graph $\g' = (\mathbf{V}', \mathbf{E}')$ is a \textbf{\emph{subgraph}} of $\g$ if $\mathbf{V}' \subseteq \mathbf{V}$ and $\mathbf{E}' \subseteq \mathbf{E}$.
For two vertices $X$ and $Y$ in $\g$, $X$ and $Y$ are \textbf{\emph{adjacent}} if there is an edge between them.
If $X\rightarrow Y$/$X\leftarrow Y$/$X\leftrightarrow Y$ in $\g$, then $X$ is a \textbf{\emph{parent/child/spouse}} of $Y$.

\textbf{Paths.} 
A \textbf{\emph{path}} $\pi$ from $X$ to $Y$ in $\g$ is a sequence of distinct variables $\pi=\langle X=V_0, \ldots, Y=V_n\rangle$ such that for $0 \le i \le n-1$, $V_i$ and $V_{i+1}$ are adjacent in $\g$. 
\textit{The length of a path} equals the number of edges on the path.
Given a path $\pi = \langle V_0, V_1, \ldots, V_k \rangle$, the \textbf{\emph{subpath}} of $\pi$ between vertices $V_i$ and $V_j$ $(0 \leq i < j \leq k)$ is defined as the sequence of consecutive vertices from $V_i$ to $V_j$ along $\pi$, denoted by $\pi(V_i, V_j)$. 
For example, if $\pi = \langle V_1, V_2, V_3, V_4, V_5 \rangle$,  
then $\pi(V_2, V_4) = \langle V_2, V_3, V_4 \rangle$.

\textbf{Undirected Graph.}
A graph is called an \textbf{\emph{undirected graph}} if all of its edges are undirected ($\--$).
In an undirected graph, a vertex $X$ is said to be \textbf{\emph{separated}} from a vertex $Y$ by a set of vertices $\mathbf{Z}$ if every path between $X$ and $Y$ contains at least one vertex in $\mathbf{Z}$.
More generally, two vertex sets $\mathbf{X}$ and $\mathbf{Y}$ are said to be separated by $\mathbf{Z}$ if, for every pair $(X, Y)$ with $X \in \vars[\mathbf{X}]$ and $Y \in \vars[\mathbf{Y}]$, the vertices $X$ and $Y$ are separated by $\mathbf{Z}$.
Given a MAG $\MAG$ and a vertex set $\vars[K]$, the \textbf{\emph{(local) skeleton}} $\g[L]_{\vars[K]}$ is the undirected graph over $\vars[K]$ that contains an edge between $X$ and $Y$ if and only if no subset $\mathbf{Z} \subseteq \vars[K]$ m-separates $X$ and $Y$ in $\MAG$.
When $\vars[K] = \vars[O]$, the local skeleton coincides with the skeleton of $\MAG$.

\textbf{Set.}
A \textbf{\emph{tripartition}} of a set $\vars[V]$ is a collection $(\vars[V]_1, \vars[V]_2, \vars[V]_3)$ of three pairwise disjoint subsets whose union is $\vars[V]$.
If there exists a directed path from $X$ to $Y$, then $X$ is an \textbf{\emph{ancestor}} of $Y$ and $Y$ is a \textbf{\emph{descendant}} of $X$.
For a vertex $X$ in a MAG $\MAG$, let $\An(X, \MAG)$ denote the set of ancestors of $X$ in $\MAG$, and define $\Anplus(X, \MAG) = \An(X, \MAG) \cup \{X\}$.
For a vertex set $\vars[X]$, we similarly denote its set of ancestors by $\An(\vars[X], \MAG)$, and define $\Anplus(\vars[X], \MAG) = \An(\vars[X], \MAG) \cup \vars[X]$.

\begin{definition}[\textbf{Causal Markov Condition}~\cite{spirtes2000causation}]
\label{def:markov-condition}
    Let $\g$ be a causal graph with vertex set $\mathbf{V}$, and let $P(\mathbf{V})$ denote a probability distribution over $\mathbf{V}$ generated by the causal structure represented by $\g$. 
    We say that $P(\mathbf{V})$ satisfies the \emph{Causal Markov Condition} with respect to $\g$ if, for any triplet of disjoint subsets $\mathbf{X}, \mathbf{Y}, \mathbf{Z} \subseteq \mathbf{V}$, 
    whenever $\mathbf{X}$ and $\mathbf{Y}$ are m-separated by $\mathbf{Z}$ in $\g$, then $\mathbf{X}$ and $\mathbf{Y}$ are conditionally independent given $\mathbf{Z}$ in $P(\mathbf{V})$. 
\end{definition}

\begin{definition}[\textbf{Causal Faithfulness Condition}~\cite{spirtes2000causation}]
\label{def:faithfulness-condition}
    Let $\g$ be a causal graph with vertex set $\mathbf{V}$, and let $P(\mathbf{V})$ denote a probability distribution over $\mathbf{V}$ generated by the causal structure represented by $\g$. 
    We say that $P(\mathbf{V})$ satisfies the \emph{Causal Faithfulness Condition} with respect to $\g$ if, for any triplet of disjoint subsets $\mathbf{X}, \mathbf{Y}, \mathbf{Z} \subseteq \mathbf{V}$, 
    whenever $\mathbf{X}$ and $\mathbf{Y}$ are conditionally independent given $\mathbf{Z}$ in $P(\mathbf{V})$, then $\mathbf{X}$ and $\mathbf{Y}$ are m-separated by $\mathbf{Z}$ in $\g$. 
\end{definition}

Under these two conditions, the conditional independence relations among the observed variables correspond exactly to m-separation relations in the causal graph. 
Accordingly, throughout this work we use the notation $\CI$ interchangeably to denote conditional independence in the distribution and m-separation in MAGs.

\subsection{Markov Blanket}
\label{Appendix:Markov-blanket}

The concept of the \emph{Markov blanket} was first coined by~\citet{pearl1988Probaili} and has become a widely used technique for reducing the number of variables or features, thereby enabling more efficient and robust model construction~\cite{guyon2003introduction, pellet2008using, gao2016efficient}.
Intuitively, the Markov blanket of a variable $X$ consists of all variables that contain information about $X$ that cannot be obtained from any other variable~\cite{aliferis2010local}.
Throughout this work, we adopt the convention that the term \emph{Markov blanket} refers to the \emph{minimal} such set\footnote{Some authors use the term ``Markov blanket'' without minimality and reserve ``Markov boundary'' for the minimal set.}.
The formal definition of a Markov blanket is given below.

\begin{definition}[\textbf{Markov Blanket}]
\label{def:Markov-Blanket}
The Markov blanket of a variable $X$ relative to a set of variables $\vars[V]$, denoted by $\MB_{\vars[V]}(X)$, is the minimal subset of $\vars[V]\setminus\{X\}$ such that $X$ is conditionally independent of all remaining variables given $\MB_{\vars[V]}(X)$, formally,
\[
X \CI V \mid \MB_{\vars[V]}(X), \quad \forall\, V \in \vars[V]\setminus\bigl(\MB_{\vars[V]}(X)\cup\{X\}\bigr).
\]
\end{definition}

Under the causal Faithfulness assumption, the Markov blanket admits a graphical characterization that depends on the underlying causal graph.

\begin{property}[Markov Blanket in DAGs~\cite{pearl1988Probaili,pearl2000causality,pearl2009causality}]
\label{def:app-MB-DAG}
Assuming faithfulness, in a directed acyclic graph (DAG), the Markov blanket of a vertex $X$ is unique and consists of the parents of $X$, the children of $X$, and the parents of the children of $X$.
\end{property}

\begin{property}[Markov Blanket in MAGs~\cite{richardson2003markov,pellet2008finding}]
\label{def:app-MB-MAG}
Assuming faithfulness, in a maximal ancestral graph (MAG), the Markov blanket of a vertex $X$ consists of the parents, children, and parents of the children of $X$, together with the district of $X$ and the districts of the children of $X$, as well as the parents of all vertices in these districts.
Here, the \emph{district} of a vertex $V$ is defined as the set of vertices reachable from $V$ using only bidirected edges.
\end{property}

\section{Proofs}
\label{Appendix:proof}
In this section, we provide detailed proofs for the theoretical results presented in the main text. We begin by establishing several fundamental lemmas, followed by the proofs of the main theorems and propositions.

\subsection{Preliminary Lemmas}
\label{Appendix:lemmas}

\begin{lemma}
    \label{lemma:augmented-graph-separated}
    Let $\MAG$ be a MAG over $\vars[O]$. For any three disjoint sets of vertices $\vars[X], \vars[Y], \vars[Z] \subseteq \vars[O]$, and let $\vars[S] = \vars[X] \cup \vars[Y] \cup \vars[Z]$, $(\MAG[\Anplus(\vars[S], \MAG)])^{a}$ be the augmented graph of the induced subgraph $\MAG[\Anplus(\vars[S], \MAG)]$. Then,
    $\vars[X]$ and $\vars[Y]$ are m-separated by $\vars[Z]$ in $\MAG$ if and only if $\vars[X]$ and $\vars[Y]$ are separated by $\vars[Z]$ in $(\MAG[\Anplus(\vars[S], \MAG)])^{a}$.
\end{lemma}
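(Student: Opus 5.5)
This is the MAG analogue of the moralization criterion for d-separation (Lauritzen; Richardson and Spirtes 2002, Theorem 3.18). I would prove the two directions separately, writing $\g[H]=\MAG[\Anplus(\vars[S],\MAG)]$ and $(\g[H])^a$ for its augmented graph. Two structural facts will be used throughout. First, $\g[H]$ is again ancestral. Second, m-separation between $\vars[X]$ and $\vars[Y]$ given $\vars[Z]$ only involves paths whose vertices lie in $\Anplus(\vars[S],\MAG)$. The second fact holds because of a standard lemma: if $\vars[X]$ and $\vars[Y]$ are m-connected given $\vars[Z]$, then there is an m-connecting path all of whose vertices are in $\Anplus(\vars[S],\MAG)$. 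Indeed, every collider on a connecting path is an ancestor of $\vars[Z]$. Every non-collider either has an outgoing edge along the path and so leads, via a directed subpath, to an endpoint or to a collider, or it is an ancestor of an endpoint.

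\emph{If} $\vars[X]$ and $\vars[Y]$ are m-connected given $\vars[Z]$, I would show they are not separated in $(\g[H])^a$. Take an m-connecting path $\pi$ inside $\Anplus(\vars[S],\MAG)$, as given by the lemma above. Split $\pi$ at its non-colliders; none of these lie in $\vars[Z]$. Each maximal segment between consecutive non-colliders (or endpoints) has only colliders in its interior, so it is a collider path in $\g[H]$. Hence consecutive non-colliders and endpoints are adjacent in $(\g[H])^a$. This yields a path in $(\g[H])^a$ from $\vars[X]$ to $\vars[Y]$ that avoids $\vars[Z]$, so $\vars[Z]$ does not separate them there.

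\emph{Conversely}, suppose $(\g[H])^a$ contains a path $X=W_0,W_1,\dots,W_k=Y$ with no $W_i\in\vars[Z]$. Each edge $W_{i}W_{i+1}$ comes from a collider path in $\g[H]$. All interior colliders on these collider paths lie in $\Anplus(\vars[S])$, so each is an ancestor of $\vars[X]\cup\vars[Y]\cup\vars[Z]$. Concatenating these collider paths gives a walk, and I must extract an actual m-connecting path. I would follow the argument of Richardson and Spirtes (Theorem 3.18, via Lemma 3.?? on "connecting sequences"). The standard device is that in an ancestral graph, a sequence of vertices joined by collider paths, whose colliders are in $\An(\vars[X]\cup\vars[Y]\cup\vars[Z])$ and whose non-endpoint joints avoid $\vars[Z]$, implies that $\vars[X]$ and $\vars[Y]$ are m-connected given $\vars[Z]$. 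Concretely, choose a minimal such sequence. Then handle colliders that are ancestors of $\vars[X]$ or $\vars[Y]$ but not of $\vars[Z]$ by rerouting: follow the directed path from that collider to $X'\in\vars[X]$ (or $Y'\in\vars[Y]$) and restart the argument from $X'$. Minimality (taking a shortest counterexample) prevents this rerouting from cycling.

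The main obstacle is this converse. It requires turning walks into paths while controlling collider and non-collider status at the junctions, and handling colliders that are ancestral only to endpoints rather than to $\vars[Z]$. I would cite Richardson and Spirtes (2002) for the walk-to-path lemma in ancestral graphs rather than re-derive it. The only genuinely new check is that their "moralized anterior graph" coincides with $(\g[H])^a$ when there is no selection bias: there are no undirected edges, anterior sets equal ancestor sets, and edges in the augmented graph are exactly collider connections in $\g[H]$.
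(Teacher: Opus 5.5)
Your proposal is correct and follows essentially the same route as the paper, whose entire proof is a citation of Theorem 3.18 of Richardson and Spirtes (2002). Your direct argument for the direction ``m-connected $\Rightarrow$ not separated in $(\MAG[\Anplus(\vars[S],\MAG)])^a$'' is sound, and so is your check that, without undirected edges, their augmented graph of the anterior set is exactly this graph; the placeholder ``Lemma 3.??'' is only cosmetic, since you defer the converse to the cited theorem just as the paper does.
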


\begin{proof}[Proof of~\cref{lemma:augmented-graph-separated}]
    This is a direct consequence of Theorem 3.18 in~\citet{richardson2002ancestral}.
\end{proof}


\begin{lemma}
    \label{lemma:ancestral-m-separates}
    Let $\vars[S]$ be a subset of vertices in a MAG $\MAG$. For any two vertices $X$ and $Y$ in $\vars[S]$, $X$ and $Y$ are m-separated by a subset of $\vars[S]\setminus\{X,Y\}$ in $\MAG$ if and only if they are m-separated by $\An(\{X,Y\},\MAG)\cap \vars[S]$.
\end{lemma}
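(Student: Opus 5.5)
The plan is to argue entirely through the augmented-graph criterion of \cref{lemma:augmented-graph-separated}. This turns m-separation into ordinary vertex separation in an undirected graph, and there the comparison of conditioning sets is transparent. Throughout, write $\vars[W] = (\An(\{X,Y\},\MAG)\cap \vars[S])\setminus\{X,Y\}$. We remove $X$ and $Y$ explicitly because one may be an ancestor of the other, and a separating set must exclude the endpoints. This $\vars[W]$ is the set meant in the statement.

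The direction ``$\Leftarrow$'' is immediate: $\vars[W]\subseteq \vars[S]\setminus\{X,Y\}$, so $\vars[W]$ itself is a separating subset of $\vars[S]\setminus\{X,Y\}$.

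For ``$\Rightarrow$'', suppose some $\vars[Z]\subseteq \vars[S]\setminus\{X,Y\}$ m-separates $X$ and $Y$. Argue by contradiction and assume $X$ and $Y$ are m-connected given $\vars[W]$.

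First, since $\vars[W]\subseteq \An(\{X,Y\},\MAG)$, we have $\Anplus(\{X,Y\}\cup\vars[W],\MAG)=\Anplus(\{X,Y\},\MAG)=:\vars[T]$. By \cref{lemma:augmented-graph-separated}, there is a path $\rho$ from $X$ to $Y$ in $(\MAG[\vars[T]])^a$ whose interior avoids $\vars[W]$. Each interior vertex of $\rho$ lies in $\vars[T]\setminus\vars[W]$ and differs from $X$ and $Y$, so it lies in $\An(\{X,Y\},\MAG)$ but not in $\vars[S]$. Because $\vars[Z]\subseteq\vars[S]$, the path $\rho$ avoids $\vars[Z]$ as well.

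Next, set $\vars[T]'=\Anplus(\{X,Y\}\cup\vars[Z],\MAG)\supseteq\vars[T]$. Every edge of $(\MAG[\vars[T]])^a$ is also an edge of $(\MAG[\vars[T]'])^a$. Indeed, an edge of the former witnesses a collider path (or an adjacency) in $\MAG[\vars[T]]$. Induced subgraphs keep all edges and their marks among the retained vertices, so the same collider path exists in $\MAG[\vars[T]']$. Hence $\rho$ is a path in $(\MAG[\vars[T]'])^a$ avoiding $\vars[Z]$. By \cref{lemma:augmented-graph-separated} again, $X$ and $Y$ are then m-connected given $\vars[Z]$, which contradicts the assumption and proves the lemma.

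The main point requiring care is this monotonicity of the augmented graph under enlarging the ancestral vertex set, together with checking that the collider status of vertices on the witnessing path is unchanged in the larger induced subgraph. It holds because edge marks are inherited verbatim by induced subgraphs. The remaining subtlety is the bookkeeping that excludes $X,Y$ from $\vars[W]$, which the argument above handles.
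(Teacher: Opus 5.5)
Your proof is correct and follows essentially the same route as the paper. You pass to the augmented graph of $\MAG[\Anplus(\{X,Y\},\MAG)]$ via \cref{lemma:augmented-graph-separated}, observe that the interior of the witnessing path lies in $\An(\{X,Y\},\MAG)\setminus\vars[S]$, and transport that path to the augmented graph for an arbitrary $\vars[Z]\subseteq\vars[S]\setminus\{X,Y\}$. Your explicit removal of $X,Y$ from the conditioning set, and your justification that the augmented graph only gains edges when the induced vertex set is enlarged, are small refinements of steps the paper leaves implicit.
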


\begin{proof}[Proof of~\cref{lemma:ancestral-m-separates}]
    Let 
    \[
    \vars[S]^{\prime} = \An(\{X,Y\},\MAG)\cap \vars[S].
    \]
    Suppose $X$ and $Y$ are not m-separated by $\vars[S]^{\prime}$ in $\MAG$. 
    Since 
    \[
    \Anplus(\{X,Y\}  \cup \vars[S]^{\prime},\MAG) = 
    \Anplus(\{X,Y\},\MAG)   \cup \Anplus(\vars[S]^{\prime},\MAG) =
    \Anplus(\{X,Y\},\MAG),
    \] \cref{lemma:augmented-graph-separated} implies that $X$ and $Y$ are not separated by $\vars[S]^{\prime}$ in $(\MAG[\Anplus(\{X,Y\},\MAG)])^a$.
    Thus, there exists a path $\pi$ between $X$ and $Y$ in $(\MAG[\Anplus(\{X,Y\},\MAG)])^a$ such that no non-endpoint vertex of $\pi$ lies in $\vars[S]^{\prime}$.
    Every non-endpoint vertex of $\pi$ is contained in $\An(\{X,Y\},\MAG)$, and $\vars[S]^{\prime} = \An(\{X,Y\},\MAG)\cap \vars[S]$. Hence, no non-endpoint vertex of $\pi$ lies in $\vars[S]$.
    Now consider an arbitrary subset $\vars[Z]\subseteq \vars[S]\setminus\{X,Y\}$. Because the same path $\pi$ also exists in each augmented graph $(\MAG[\Anplus(\{X,Y\} \cup \vars[Z],\MAG)])^a$, the vertices $X$ and $Y$ are not separated by $\vars[Z]$ in any such graph.
    Applying \cref{lemma:augmented-graph-separated} again, we conclude that $X$ and $Y$ are not m-separated by $\vars[Z]$ in $\MAG$.
    The other direction is straightforward.
    This completes the proof.
\end{proof}

\begin{lemma}
    \label{lemma:m-connecting-path}
    If $\pi$ is a path m-connecting $X$ and $Y$ given $\vars[Z]$ in a MAG $\MAG$, then every vertex on $\pi$ is in $\Anplus(\{X,Y\}\cup \vars[Z],\MAG)$.
\end{lemma}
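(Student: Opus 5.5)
We argue by induction on the position of a vertex along $\pi$, working outward from the endpoints. Write $\pi=\langle X=V_0,V_1,\ldots,V_n=Y\rangle$ and set $\vars[W]=\Anplus(\{X,Y\}\cup\vars[Z],\MAG)$. The endpoints lie in $\vars[W]$ trivially. Every collider on $\pi$ has, by condition (ii) of m-connection, a descendant in $\vars[Z]$. Hence every collider is an ancestor of $\vars[Z]$ (or lies in $\vars[Z]$) and so belongs to $\vars[W]$. It remains to handle non-colliders.

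Take a non-collider $V_i$ on $\pi$. At least one of the two edges of $\pi$ incident to $V_i$ has a tail at $V_i$. Because $\MAG$ is a MAG without selection bias, so that it has only directed and bi-directed edges, such an edge is of the form $V_i\rightarrow V_{i-1}$ or $V_i\rightarrow V_{i+1}$. Say it is $V_i\rightarrow V_{i+1}$; the other case is symmetric. We follow $\pi$ from $V_i$ toward $Y$. As long as the current vertex $V_j$ is a non-collider entered by an arrowhead from $V_{j-1}\rightarrow V_j$, the edge $V_j$–$V_{j+1}$ must have a tail at $V_j$, so it is $V_j\rightarrow V_{j+1}$. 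The walk therefore extends as a directed path $V_i\rightarrow V_{i+1}\rightarrow\cdots\rightarrow V_k$, and it stops at the first $V_k$ that is either the endpoint $Y$ or a collider on $\pi$. In the first case $V_i\in\An(Y,\MAG)$. In the second case $V_k$ is a collider, so by the first paragraph it is an ancestor of (or equal to) a vertex of $\vars[Z]$, and transitivity of ancestry gives $V_i\in\An(\vars[Z],\MAG)$. In either case $V_i\in\vars[W]$.

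The only real subtlety, and the step I expect to need care, is the propagation argument just used. Continuing a directed chain through a non-collider entered by an arrowhead is exactly what the definition of collider forces. The chain cannot revisit vertices, since $\pi$ is a path and we only move in one direction along it, so it terminates. Note that we never need acyclicity here beyond the well-definedness of ancestor sets; ancestrality is not otherwise used. This completes the plan. It is the standard argument (cf.\ \citet{richardson2002ancestral}), and it applies verbatim to directed mixed graphs.
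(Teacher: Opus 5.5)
Your argument is correct and is essentially the paper's own: the paper simply cites Lemma 3.13 of \citet{richardson2002ancestral}, and your propagation argument is the standard proof of that result. In that argument, colliders lie in $\Anplus(\vars[Z],\MAG)$, and each non-collider starts a directed chain along $\pi$ that ends at an endpoint or at a collider. Your explicit use of the no-selection-bias assumption (only $\rightarrow$ and $\leftrightarrow$ edges) is what lets the conclusion be stated with ancestors rather than the anterior sets needed when undirected edges are allowed.
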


\begin{proof}[Proof of~\cref{lemma:m-connecting-path}]
    This is a direct consequence of Lemma 3.13 in~\citet{richardson2002ancestral}
\end{proof}

\begin{lemma}
    \label{lemma:ancestral-separator}
    Suppose that $\pi$ is a path that connects two non-adjacent vertices $X$ and $Y$ in a MAG $\MAG$. If $\pi$ is not contained completely in $\Anplus(\{X,Y\},\MAG)$, then $\pi$ is m-separated by any subset of $\An(\{X,Y\},\MAG)$.
\end{lemma}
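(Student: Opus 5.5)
The plan is a direct contradiction argument built on \cref{lemma:m-connecting-path}. Fix a path $\pi$ between the non-adjacent vertices $X$ and $Y$ such that $\pi$ contains some vertex $W \notin \Anplus(\{X,Y\},\MAG)$. Let $\vars[Z]$ be an arbitrary subset of $\An(\{X,Y\},\MAG)$. Since m-separation is only defined for conditioning sets excluding the endpoints, I will read $\vars[Z]$ as a subset of $\An(\{X,Y\},\MAG)\setminus\{X,Y\}$. This matters only when one of $X,Y$ is an ancestor of the other, and it is exactly the form in which the lemma will later be used together with \cref{lemma:ancestral-m-separates}.

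Suppose, for contradiction, that $\pi$ is m-connecting between $X$ and $Y$ given $\vars[Z]$. By \cref{lemma:m-connecting-path}, every vertex of $\pi$ lies in $\Anplus(\{X,Y\}\cup\vars[Z],\MAG)$. The key step is the closure identity
\[
\Anplus(\{X,Y\}\cup\vars[Z],\MAG)=\Anplus(\{X,Y\},\MAG)\cup\Anplus(\vars[Z],\MAG)=\Anplus(\{X,Y\},\MAG),
\]
which is the same identity already used in the proof of \cref{lemma:ancestral-m-separates}. It holds because $\vars[Z]\subseteq\An(\{X,Y\},\MAG)$ and ancestry is transitive: an ancestor of an ancestor of $X$ or $Y$ is itself an ancestor of $X$ or $Y$. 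It would follow that every vertex of $\pi$, in particular $W$, lies in $\Anplus(\{X,Y\},\MAG)$. This contradicts the choice of $W$, so $\pi$ is not m-connecting given $\vars[Z]$, i.e., $\pi$ is blocked by $\vars[Z]$. Since $\vars[Z]$ was arbitrary, this gives the claim.

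I do not expect a genuine obstacle. The only points needing care are the transitivity step in the displayed identity and the convention that $X,Y\notin\vars[Z]$. Non-adjacency of $X$ and $Y$ is not actually used in the argument; it only reflects the setting in which the lemma is applied (choosing separating sets for non-adjacent pairs).
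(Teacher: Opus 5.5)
Your proof is correct and follows the paper's argument: take a vertex of $\pi$ outside $\Anplus(\{X,Y\},\MAG)$, use transitivity of ancestry to get $\Anplus(\{X,Y\}\cup\vars[Z],\MAG)=\Anplus(\{X,Y\},\MAG)$, and derive the contradiction from \cref{lemma:m-connecting-path}. Your two side remarks are both accurate: $\vars[Z]$ must exclude $X$ and $Y$ for m-separation to be defined, and the non-adjacency hypothesis is never used.
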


\begin{proof}[Proof of~\cref{lemma:ancestral-separator}]
    Since $\pi$ is not contained completely in $\Anplus(\{X,Y\},\MAG)$ and has length $|\pi|>1$, there exists a vertex $V$ on $\pi$ such that
    \[
        V \notin \Anplus(\{X,Y\},\MAG).
    \]
    Let $\mathbf{Z} \subseteq \An(\{X,Y\},\MAG)$ be arbitrary. We will show that $\pi$ is not m-connecting $X$ and $Y$ given $\mathbf{Z}$. 

    First note that in a MAG $\MAG$, $\An(\cdot,\MAG)$ is closed under taking ancestors, so in particular
    \[
        \An(\An(\{X,Y\},\MAG),\MAG) 
        = \An(\{X,Y\},\MAG).
    \]
    Because $\mathbf{Z} \subseteq \An(\{X,Y\},\MAG)$, monotonicity of $\An(\cdot,\MAG)$ implies
    \[
        \An(\mathbf{Z},\MAG) 
        \subseteq 
        \An(\An(\{X,Y\},\MAG),\MAG)
        =\An(\{X,Y\},\MAG).
    \]
    Therefore,
    \[
        \Anplus(\{X,Y\} \cup \mathbf{Z},\MAG)
        = \Anplus(\{X,Y\},\MAG).
    \]
    By construction, the vertex $V$ satisfies
    \[
        V \notin \Anplus(\{X,Y\},\MAG)
        = \Anplus(\{X,Y\} \cup \mathbf{Z},\MAG).
    \]
    Consequently, by~\cref{lemma:m-connecting-path}, $\pi$ cannot be m-connecting $X$ and $Y$ given $\mathbf{Z}$. Thus, $\pi$ is m-separated by any subset of $\An(\{X,Y\},\MAG)$.
\end{proof}

\subsection{Proof of~\cref{theorem:recursive-m-separation}}
\label{Appendix:proof-recursive-m-separation}

\begin{proof}[Proof of~\cref{theorem:recursive-m-separation}]
    Suppose $X$ and $Y$ are m-separated by a subset of $\vars[A]\cup\vars[B]\cup\vars[C]$. By~\cref{lemma:ancestral-m-separates}, they are m-separated by 
    \[
        \vars[Z] = \An(\{X,Y\},\MAG) \cap (\vars[A]\cup\vars[B]\cup\vars[C]).
    \]
    Define 
    \[
        \vars[S] = \An(\{X,Y\},\MAG) \cap (\vars[A]\cup\vars[C]).
    \]
    We will show that $X$ and $Y$ are m-separated by $\vars[S]$.
    Consider an arbitrary path $\pi$ between $X$ and $Y$. 

    \textbf{Case 1:}
    If $\pi$ is not contained completely in $\Anplus(\{X,Y\},\MAG)$, then by~\cref{lemma:ancestral-separator}, $\pi$ is m-separated by any subset of $\An(\{X,Y\},\MAG)$, including $\vars[S]$. 

    \textbf{Case 2:}
    Then all vertices on $\pi$ lie in $\Anplus(\{X,Y\},\MAG)$. Suppose $\pi$ is not m-separated by $\vars[S]$. Since $\vars[Z]$ m-separates $\pi$ and $\vars[S] \subseteq \vars[Z]$, there must be at least one vertex $V \in \{\pi \cap (\vars[Z]\setminus\vars[S])\} \subseteq \vars[B]$. Let $V$ be the closest such vertex to $X$ along $\pi$. Then the subpath $\pi(X,V)$ is also not m-separated by $\vars[S]$. Since all non-endpoint vertices on $\pi(X,V)$ are not in $\vars[B]$, by $\pi(X,V)\subseteq \pi \subseteq \An(\{X,Y\},\MAG)$ and the definition of $\vars[S]$, all non-endpoint vertices on $\pi(X,V)$ belong to $\vars[S]$. Thus $\pi(X,V)$ m-connects $X$ and $V$ given $\vars[S]$, and therefore also given $\vars[C]\cup\vars[S]$. 
    However, the condition $\vars[A] \CI \vars[B] \mid \vars[C]$, implies $\{X\}\cup (\vars[S]\cap\vars[A]) \CI B \mid \vars[C]$ for any $B \in \vars[B]$. Furthermore, we have $X \CI V \mid \vars[C]\cup (\vars[S]\cap\vars[A])$, since $\vars[S]\cap\vars[A] \subseteq \vars[S]$, then $X \CI V \mid \vars[C]\cup \vars[S]$. Which contradicts the existence of the m-connecting subpath $\pi(X,V)$ given $\vars[C]\cup\vars[S]$. Hence $\pi$ must be m-separated by $\vars[S]$.
    
    In conclusion, there is a subset of $\vars[A]\cup\vars[C]$ that m-separates $X$ and $Y$.
    The other direction is straightforward.
    The proof is complete.

\end{proof}


\subsection{Proof of~\cref{theorem:recursive-m-separation-second}}
\label{Appendix:recursive-m-separation-second}

\begin{lemma}\label{lemma:sequence-m-separate}
Let $X$ and $Y$ be two non-adjacent vertices in a MAG. 
Then $X$ and $Y$ are m-separated by a vertex set $\mathbf{Z}$ if and only if every sequence 
$p = \langle X, \ldots, Y\rangle$ connecting $X$ and $Y$ satisfies at least one of the following conditions:
\begin{itemize}[leftmargin=14pt,itemsep=0pt,topsep=0pt,parsep=0pt]
    \item[(i)] $p$ contains a non-collider that is in $\mathbf{Z}$; or
    \item[(ii)] $p$ contains a collider that is not in $\mathbf{Z}$ and whose descendants are also not in $\mathbf{Z}$.
\end{itemize}
\end{lemma}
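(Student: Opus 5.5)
The plan is to reduce the statement to \cref{lemma:augmented-graph-separated}, treating a ``sequence'' as a walk, i.e., vertices may repeat.

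\textbf{The ``if'' direction.} This is immediate. Every path between $X$ and $Y$ is a particular sequence, so by hypothesis every path either contains a non-collider in $\mathbf{Z}$ or contains a collider with no descendant in $\mathbf{Z}$ (the collider itself included). Hence no path is m-connecting given $\mathbf{Z}$, which is exactly m-separation of $X$ and $Y$ by $\mathbf{Z}$.

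\textbf{The ``only if'' direction: setup.} I argue by contraposition. Suppose some sequence $p=\langle X,\ldots,Y\rangle$ violates both (i) and (ii). Then every non-collider of $p$ lies outside $\mathbf{Z}$, and every collider of $p$ lies in $\Anplus(\mathbf{Z},\MAG)$. The goal is to show that $X$ and $Y$ are not separated by $\mathbf{Z}$ in $(\MAG[\Anplus(\{X,Y\}\cup\mathbf{Z},\MAG)])^a$. \Cref{lemma:augmented-graph-separated} then gives that $X$ and $Y$ are not m-separated by $\mathbf{Z}$. I also assume $X,Y\notin\mathbf{Z}$, as in the definition of m-separation.

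\textbf{Step 1: $p$ lies in the ancestral set.} Colliders are already in $\Anplus(\mathbf{Z},\MAG)$. For a non-collider $V$ on $p$, at least one incident edge of $p$ has a tail at $V$, so that edge is $V\rightarrow W$. Keep following $p$ in that direction. As long as the next vertex is again a non-collider entered by an arrowhead, its other edge must leave it with a tail, so the directed path continues. This can only stop at an endpoint of $p$ or at a collider. Therefore $V$ is an ancestor of $X$, of $Y$, or of a collider, and in every case $V\in\Anplus(\{X,Y\}\cup\mathbf{Z},\MAG)$.

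\textbf{Step 2: cut $p$ into collider segments.} Split $p$ at its endpoints and its non-collider occurrences. This gives $X=U_0,U_1,\ldots,U_k=Y$, none of which lies in $\mathbf{Z}$. Between consecutive cut points $U_{i}$ and $U_{i+1}$, every interior vertex is a collider, so the segment is a collider walk inside the induced ancestral subgraph. I then extract a genuine collider path from each walk. If an interior vertex $V$ repeats, delete the closed loop between its occurrences. The edge entering the first occurrence and the edge leaving the last occurrence both carry arrowheads at $V$, because both occurrences were colliders, so $V$ is still a collider after the deletion. If an endpoint reappears, trim the walk at that point. Repeating this yields a collider path, so $U_i$ and $U_{i+1}$ are collider connected, hence adjacent or equal in the augmented graph.

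\textbf{Conclusion and main obstacle.} Concatenating over $i$ gives an $X$--$Y$ walk in the augmented graph whose vertices $U_i$ all avoid $\mathbf{Z}$. Such a walk contains a path avoiding $\mathbf{Z}$, so $X$ and $Y$ are not separated by $\mathbf{Z}$ there, and the contrapositive follows. I expect the main obstacle to be the loop-deletion step of Step 2, where repeated vertices must be removed without destroying collider status. The fallback is to note that any equality $U_i=U_{i+1}$ after trimming simply shortens the augmented-graph walk.
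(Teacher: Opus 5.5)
Your proof is correct, but it takes a different route from the paper's. For the nontrivial direction, the paper argues directly in $\MAG$. It takes a shortest sequence violating both (i) and (ii), asserts that this sequence must be a path, and reads it off as an m-connecting path.

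You never construct an m-connecting path. Instead you show that the offending walk lies in $\Anplus(\{X,Y\}\cup\mathbf{Z},\MAG)$, cut it at its non-colliders into collider walks, and shorten each of these to a collider path. This yields non-separation by $\mathbf{Z}$ in the augmented graph, and \cref{lemma:augmented-graph-separated} (Richardson--Spirtes, Theorem 3.18) finishes the argument.

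What your route buys is that every loop you splice out lies inside a collider-only segment, where collider status is trivially preserved. The paper's ``straightforward'' shortening hides a genuinely delicate case. Suppose a repeated vertex $V$ occurs twice, each time as a non-collider, entered by an arrowhead from outside the loop and leaving into the loop with a tail. Splicing out the loop turns $V$ into a collider, and since $V\notin\mathbf{Z}$, one must still show $V\in\An(\mathbf{Z},\MAG)$. That requires following the directed chain $V\rightarrow W_1\rightarrow\cdots$ along the loop and using the absence of directed cycles in $\MAG$ to force it into a collider. This is essentially the chain argument from your Step 1.

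So the paper's route is self-contained and constructive, since it produces the m-connecting path explicitly. Yours relies on the Richardson--Spirtes separation theorem, but in exchange every step is elementary and easy to check.
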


\begin{proof}[Proof of~\cref{lemma:sequence-m-separate}]
Suppose there exists a sequence connecting $X$ and $Y$ that violates both conditions (i) and (ii).  
Consider a shortest such sequence. It is straightforward to show that this sequence is a path.  
By construction, this path is not blocked by $\mathbf{Z}$ and therefore is not m-separated.  
Hence, $X$ and $Y$ are not m-separated by $\mathbf{Z}$.
The converse direction follows directly from the definition of m-separation.
\end{proof}

\begin{proof}[Proof of~\cref{theorem:recursive-m-separation-second}]
    If $X$ and $Y$ are m-separated by a subset of $\vars[A]\cup\vars[C]$ or by a subset of $\vars[B]\cup\vars[C]$, then they are m-separated by a subset of $\vars[A]\cup\vars[B]\cup\vars[C]$. We prove the other direction.

    Suppose $X$ and $Y$ are m-separated by a subset of $\vars[A]\cup\vars[B]\cup\vars[C]$. By~\cref{lemma:ancestral-m-separates}, they are m-separated by $\vars[Z] = \An(\{X,Y\}, \MAG) \cap \{\vars[A]\cup\vars[B]\cup\vars[C]\}$. 
    Without loss of generality, assume that $X$ is not an ancestor of $Y$ in $\MAG$, otherwise we can swap the roles of $X$ and $Y$ in the following proof.
    Let $\vars[S]_1= \An(\{X,Y\}, \MAG) \cap \{\vars[A]\cup\vars[C]\}$ and $\vars[S]_2= \An(\{X,Y\}, \MAG) \cap \{\vars[B]\cup\vars[C]\}$.
    We will show that $X$ and $Y$ are m-separated by either $\vars[S]_1$ or $\vars[S]_2$.
    That is, there is not a path $\pi_1$ connecting $X$ and $Y$ in $\{\vars[A]\cup\vars[C]\}$ not m-separated by $\vars[S]_1$ and not a path $\pi_2$ connecting $X$ and $Y$ in $\{\vars[B]\cup\vars[C]\}$ not m-separated by $\vars[S]_2$.

    If $\pi_i$ is not contained completely in $\Anplus(\{X,Y\},\MAG)$, then by~\cref{lemma:ancestral-separator}, $\pi_i$ is m-separated by any subset of $\An(\{X,Y\},\MAG)$, including $\vars[S]_1$ and $\vars[S]_2$.

    If all vertices on $\pi_i$ lie in $\Anplus(\{X,Y\},\MAG)$, suppose that $\pi_1$ cannot be m-separated by $\vars[S]_1$ and $\pi_2$ cannot be m-separated by $\vars[S]_2$.
    Since $\vars[Z] = \vars[S]_1\cup \vars[S]_2$ can m-separate $\pi_1$ and $\pi_2$, there must be one vertex $V_1 \in \{\pi_1 \cap (\vars[Z]\setminus\vars[S]_1)\} \subseteq \vars[B]$ and one vertex $V_2 \in \{\pi_2 \cap (\vars[Z]\setminus\vars[S]_2)\} \subseteq \vars[A]$. Let $V_1$ and $V_2$ be the closest such vertices to $X$ along $\pi_1$ and $\pi_2$, respectively. Then the subpaths $\pi_1(X,V_1)$ and $\pi_2(X,V_2)$ are also not m-separated by $\vars[S]_1$ and $\vars[S]_2$, respectively. Connecting these two paths at $X$, forms a sequence $p = \langle V_1, \ldots, X, \ldots, V_2 \rangle$ between $V_1\in \vars[B]$ and $V_2\in\vars[A]$.

    For any non-collider vertex on $p$, since $\vars[S]_1$ not m-separates $\pi(X,V_1)$, thus this vertex cannot be in $\vars[S]_1$ and therefore not in $\vars[C]$. Similarly, the non-collider vertices on $\pi(X,V_2)$ cannot be in $\vars[C]$. Hence, no non-collider vertex on $p$ is in $\vars[C]$.
    For any collider vertex on $\pi(X,V_1)$/$\pi(X,V_2)$, since $\vars[S]_1$/$\vars[S]_2$ not m-separates $\pi(X,V_1)$/$\pi(X,V_2)$, thus this vertex or its descendant must be in $\vars[S]_1$/$\vars[S]_2$. Since these vertices are in $\An(\{X,Y\},\MAG)$, then every such vertex's descendants must include $X$ or $Y$, which are in $\vars[C]$. Hence, all collider vertcies on $\pi(X,V_1)$/$\pi(X,V_2)$ itself or its descendants are in $\vars[C]$.

    Now we show that vertex $X$ is also a collider on $p$.
    Let $Q_1$ be the vertex adjacent to $X$ on $\pi_1(X,V_1)$ and $Q_2$ be the vertex adjacent to $X$ on $\pi_2(X,V_2)$. 
    And $Q_1$/$Q_2$ must be an ancestor of $X$ or $Y$, since $\pi_1(X,V_1)$/$\pi_2(X,V_2)$ is contained completely in $\Anplus(\{X,Y\},\MAG)$.
    If the edge between $X$ and $Q_1$ is out of $X$, by assuming no selection bias, it will be $X\rightarrow Q_1$, then $Q_1$ must be an ancestor of $Y$. Which contradicts the assumption that $X$ is not an ancestor of $Y$. Thus, the edge between $X$ and $Q_1$ must be into $X$. Similarly, the edge between $X$ and $Q_2$ must be into $X$. Therefore, $X$ is a collider on $p$. And since $X\in \vars[C]$, then $X$ or its descendants are in $\vars[C]$. Hence, all collider vertcies on $p$ itself or its descendants are in $\vars[C]$ but no non-collider vertex on $p$ is in $\vars[C]$.
    By~\cref{lemma:sequence-m-separate}, $V_1$ and $V_2$ are not m-separated by $\vars[C]$, which contradicts the condition $\vars[A] \CI \vars[B] \mid \vars[C]$.
    Thus, either $\pi_1$ is m-separated by $\vars[S]_1$ or $\pi_2$ is m-separated by $\vars[S]_2$.
    This completes the proof.

\end{proof}

\subsection{Proof of~\cref{theorem:merged-skeleton}}
\label{Appendix:merged-skeleton}

\begin{proof}[Proof of~\cref{theorem:merged-skeleton}]
    Suppose $(X,Y)$ is non-adjacent in $\g[L]_{\vars[K]}$. Then by definition, $X$ and $Y$ are m-separated by a subset of $\vars[K]$. If $X\in \vars[A]$/$X\in \vars[B]$, and $Y\in\vars[A]\cup\vars[C]$/$Y\in\vars[B]\cup\vars[C]$, by~\cref{theorem:recursive-m-separation}, $X$ and $Y$ are m-separated by a subset of $\vars[A]\cup\vars[C]$/$\vars[B]\cup\vars[C]$. If $X,Y\in \vars[C]$, by~\cref{theorem:recursive-m-separation-second}, $X$ and $Y$ are m-separated by a subset of $\vars[A]\cup\vars[C]$ or by a subset of $\vars[B]\cup\vars[C]$. Therefore, $(X,Y)$ is non-adjacent in either $\g[L]_{\vars[A]\cup\vars[C]}$ or $\g[L]_{\vars[B]\cup\vars[C]}$. 
    Suppose that $(X,Y)$ are adjacent in $\g[L]_{\vars[K]}$. Then, by~\cref{theorem:recursive-m-separation,theorem:recursive-m-separation-second}, they are also adjacent in either $\g[L]_{\vars[A]\cup\vars[C]}$ or $\g[L]_{\vars[B]\cup\vars[C]}$, or in both.
    This completes the proof.
\end{proof}

\subsection{Proof of~\cref{prop:augmented-graph,prop:augmented-graph-mb,prop:local-undirected-ind-graph}}
\label{Appendix:proof-propositions}

\begin{lemma}
\label{lemma:undirected-separated}
    Let $\overline{\g} = (\vars, \e)$ be an undirected graph with vertex set $\vars$ and edge set $\e$.
    Let $\overline{\g'} = (\vars[S], \e')$ be a subgraph of $\overline{\g}$, where $\vars[S] \subseteq \vars$, $\e' \subseteq \e$, and every edge in $\e'$ has both endpoints in $\vars[S]$.
    For any three pairwise disjoint vertex sets $\vars[X], \vars[Y], \vars[Z] \subseteq \vars[S]$, if $\vars[X]$ and $\vars[Y]$ are separated by $\vars[Z]$ in $\overline{\g}$, then $\vars[X]$ and $\vars[Y]$ are also separated by $\vars[Z]$ in $\overline{\g'}$.
\end{lemma}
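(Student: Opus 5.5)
The plan is to argue by contraposition on paths. Suppose $\vars[X]$ and $\vars[Y]$ are \emph{not} separated by $\vars[Z]$ in $\overline{\g'}$. Then there are $X \in \vars[X]$, $Y \in \vars[Y]$ and a path $\pi = \langle X = V_0, V_1, \ldots, V_n = Y\rangle$ in $\overline{\g'}$ none of whose vertices lies in $\vars[Z]$. We will show that the same path witnesses non-separation in $\overline{\g}$.

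First, $\pi$ is also a path in $\overline{\g}$. Each consecutive pair $\langle V_i, V_{i+1}\rangle$ is an edge in $\e'$. Since $\e' \subseteq \e$, it is also an edge of $\overline{\g}$. All vertices of $\pi$ lie in $\vars[S] \subseteq \vars$, and they are distinct, so $\pi$ is a valid path in $\overline{\g}$.

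Second, $\pi$ avoids $\vars[Z]$. This is a property of the vertex sequence alone and does not depend on the ambient graph. So $\pi$ is a path in $\overline{\g}$ between $X$ and $Y$ with no vertex in $\vars[Z]$, meaning $X$ and $Y$ are not separated by $\vars[Z]$ in $\overline{\g}$. This contradicts the hypothesis, and the lemma follows.

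There is no real obstacle here. The only point needing care is that the definition of separation for sets quantifies over all pairs $(X,Y)$. The contrapositive must therefore fix a single offending pair, which is done at the start. We also rely on the fact that $\vars[X],\vars[Y],\vars[Z]\subseteq \vars[S]$, so that the statement about $\overline{\g'}$ is well posed. In words, separation in an undirected graph is monotone under deleting edges and vertices, provided the three sets remain inside the retained vertex set.
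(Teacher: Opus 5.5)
Your proposal is correct and matches the paper's proof. Both assume $\vars[X]$ and $\vars[Y]$ are not separated by $\vars[Z]$ in $\overline{\g'}$, take a path there that avoids $\vars[Z]$, and note that it is also a path of $\overline{\g}$ because $\e'\subseteq\e$, which contradicts separation in $\overline{\g}$.
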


\begin{proof}[Proof of~\cref{lemma:undirected-separated}]
    Suppose that $\vars[X]$ and $\vars[Y]$ are not separated by $\vars[Z]$ in the subgraph $\overline{\g'}$.
    Then there exists a path $\pi$ in $\overline{\g'}$ connecting a vertex in $\vars[X]$ to a vertex in $\vars[Y]$ such that no vertex on $\pi$ belongs to $\vars[Z]$.
    Since $\overline{\g'}$ is a subgraph of $\overline{\g}$, the same path $\pi$ also exists in $\overline{\g}$.
    This contradicts the assumption that $\vars[X]$ and $\vars[Y]$ are separated by $\vars[Z]$ in $\overline{\g}$.
    Therefore, $\vars[X]$ and $\vars[Y]$ must be separated by $\vars[Z]$ in the subgraph $\overline{\g'}$.
\end{proof}

\begin{lemma}
    \label{lemma:subgraph-augmented-induced}
    Let $\MAG$ be a MAG over $\vars[O]$, and let $\vars[S] \subseteq \vars[O]$ be a subset of vertices.
    Denote by $(\MAG[\vars[S]])^{a}$ the augmented graph of the induced subgraph $\MAG[\vars[S]]$, and by $(\MAG)^{a}[\vars[S]]$ the induced subgraph of the global augmented graph $(\MAG)^{a}$ restricted to $\vars[S]$.
    Then,
    \[
    (\MAG[\vars[S]])^{a} \subseteq (\MAG)^{a}[\vars[S]].
    \]
    That is, every edge in $(\MAG[\vars[S]])^{a}$ is also present in $(\MAG)^{a}[\vars[S]]$.
\end{lemma}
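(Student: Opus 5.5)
The plan is to take an arbitrary edge $X - Y$ of $(\MAG[\vars[S]])^{a}$, with $X,Y\in\vars[S]$, and show directly that $X$ and $Y$ are adjacent in $(\MAG)^{a}$. Since both endpoints lie in $\vars[S]$, that edge then survives in the induced subgraph $(\MAG)^{a}[\vars[S]]$. By \cref{def:augmented-graph}, the edge means that $X$ and $Y$ are collider connected in $\MAG[\vars[S]]$. So there is a path $\pi=\langle X=V_0,V_1,\ldots,V_k=Y\rangle$ in $\MAG[\vars[S]]$ that is either a single edge ($k=1$) or has every non-endpoint $V_i$ a collider on $\pi$.

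The key step is that $\pi$ lifts verbatim to $\MAG$. The induced subgraph $\MAG[\vars[S]]$ keeps exactly the edges of $\MAG$ with both endpoints in $\vars[S]$, and it keeps their edge marks unchanged. So every consecutive pair $V_i,V_{i+1}$ is adjacent in $\MAG$ by the same edge type (directed or bi-directed, with the same orientation). Hence $\pi$ is also a path in $\MAG$, and its vertices are still distinct. Whether $V_i$ is a collider depends only on the marks at $V_i$ of the two edges of $\pi$ incident to it, and these marks are identical in the two graphs. Therefore every non-endpoint vertex that is a collider on $\pi$ in $\MAG[\vars[S]]$ is also a collider on $\pi$ in $\MAG$. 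In the degenerate case $k=1$, adjacency in $\MAG[\vars[S]]$ immediately gives adjacency in $\MAG$.

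It follows that $X$ and $Y$ are collider connected in $\MAG$, so they are adjacent in $(\MAG)^{a}$, and therefore in $(\MAG)^{a}[\vars[S]]$. This gives the claimed edge inclusion; the vertex sets of the two graphs agree, both being $\vars[S]$.

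I do not expect a genuine obstacle. The only point needing care is that collider status on a path is a purely local property of edge marks. This must be kept separate from ancestral or descendant relations, which can change when passing to an induced subgraph. Since collider connectivity involves no ancestral condition, the monotonicity is immediate. The inclusion may be strict, because collider paths through vertices outside $\vars[S]$ are lost in $\MAG[\vars[S]]$.
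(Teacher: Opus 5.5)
Your proposal is correct and follows the paper's proof. In both, an edge of $(\MAG[\vars[S]])^{a}$ comes from an adjacency or a collider path in $\MAG[\vars[S]]$; that path persists unchanged in $\MAG$, so $X$ and $Y$ are collider connected in $\MAG$ and the edge survives in $(\MAG)^{a}[\vars[S]]$. Your explicit note that induced subgraphs preserve edge marks, and hence collider status, spells out a step the paper leaves implicit.
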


\begin{proof}
    Suppose that there exists an edge $\langle X, Y \rangle$ in $(\MAG[\vars[S]])^{a}$.
    By the definition of the augmented graph, in the induced subgraph $\MAG[\vars[S]]$, the vertices $X$ and $Y$ are either
    (i) adjacent, or
    (ii) connected by a collider path $\pi = \langle X=V_0, V_1, \dots, V_n = Y\rangle$, where $V_i \in \vars[S]$ for all $0 \le i \le n$.
    \begin{itemize}
    \item If $X$ and $Y$ are adjacent in $\MAG[\vars[S]]$, then they are also adjacent in the global graph $\MAG$.
    \item If $X$ and $Y$ are connected by a collider path $\pi$ in $\MAG[\vars[S]]$, then the same collider path $\pi$ also exists in the global graph $\MAG$.
    \end{itemize}
    In both cases, $X$ and $Y$ are collider-connected in $\MAG$, which implies that the edge $\langle X, Y \rangle$ is present in the global augmented graph $(\MAG)^{a}$.
    Since $X, Y \in \vars[S]$, this edge is preserved in the induced subgraph $(\MAG)^{a}[\vars[S]]$.
    Hence, $(\MAG[\vars[S]])^{a} \subseteq (\MAG)^{a}[\vars[S]]$.
\end{proof}

\begin{remark}
    It is important to note that the converse of~\cref{lemma:subgraph-augmented-induced} does not generally hold, i.e.,
    \[
    (\MAG)^{a}[\vars[S]] \not\subseteq (\MAG[\vars[S]])^{a}.
    \]
    
    To see this, consider the simple MAG $\MAG = X \rightarrow Z \leftarrow Y$ and let $\vars[S] = \{X, Y\}$.
    The global augmented graph $(\MAG)^{a}$ contains the undirected edges $X - Z - Y$ as well as $X - Y$, and hence $(\MAG)^{a}[\vars[S]]$ consists of the single edge $X - Y$.
    In contrast, the induced subgraph $\MAG[\vars[S]]$ contains no edges, and therefore its augmented graph $(\MAG[\vars[S]])^{a}$ also has no edges.
    
    Thus, the edge $X - Y$ appears in $(\MAG)^{a}[\vars[S]]$ but not in $(\MAG[\vars[S]])^{a}$, implying that
    \[
    (\MAG)^{a}[\vars[S]] \neq (\MAG[\vars[S]])^{a}.
    \]
    Nevertheless, this discrepancy does not affect our subsequent results.
\end{remark}

\begin{proof}[Proof of~\cref{prop:augmented-graph}]
    We first show that $(\MAG)^{a}$ is an undirected independence graph of $\MAG$.
    Let $\vars[X], \vars[Y], \vars[Z] \subseteq \vars[O]$ be three pairwise disjoint vertex sets such that $\vars[X]$ and $\vars[Y]$ are separated by $\vars[Z]$ in $(\MAG)^{a}$.
    Define $\vars[S] = \vars[X] \cup \vars[Y] \cup \vars[Z]$.
    By~\cref{lemma:subgraph-augmented-induced}, the augmented graph $(\MAG[\Anplus(\vars[S], \MAG)])^{a}$ is a subgraph of $(\MAG)^{a}$.
    Applying~\cref{lemma:undirected-separated}, it follows that $\vars[X]$ and $\vars[Y]$ are also separated by $\vars[Z]$ in $(\MAG[\Anplus(\vars[S], \MAG)])^{a}$.
    Then, by~\cref{lemma:augmented-graph-separated}, $\vars[X]$ and $\vars[Y]$ are m-separated by $\vars[Z]$ in $\MAG$.
    Therefore, $(\MAG)^{a}$ is an undirected independence graph of $\MAG$.
    
    Next, we establish the minimality of $(\MAG)^{a}$.
    Suppose that removing an edge from $(\MAG)^{a}$ yields a graph that is still an undirected independence graph of $\MAG$.
    Let $\langle X, Y \rangle$ denote the removed edge, and let $\vars[Z] = \vars[O] \setminus \{X, Y\}$.
    Since $X$ and $Y$ are collider-connected in $\MAG$, they are not m-separated given $\vars[Z]$.
    However, after removing the edge $\langle X, Y \rangle$, the vertices $X$ and $Y$ become separated by $\vars[Z]$ in the resulting undirected graph, which contradicts the assumption that the graph remains an undirected independence graph of $\MAG$.
    Hence, $(\MAG)^{a}$ is the minimal undirected independence graph of $\MAG$. 
\end{proof}

\begin{proof}[Proof of~\cref{prop:augmented-graph-mb}]
    By definition of the augmented graph, $Y$ is adjacent to $X$ in $(\MAG)^{a}$ if and only if $X$ and $Y$ are collider-connected in $\MAG$.
    By~\cref{def:app-MB-MAG}, collider-connectedness between $X$ and $Y$ is equivalent to $Y$ belonging to the Markov blanket of $X$ in $\MAG$.
    The claim therefore follows.
\end{proof}

\begin{lemma}\cite{mokhtarian2025recursive,richardson2002ancestral}
    \label{lemma:local-MAG-m-separates}
    Let $\MAG$ be a MAG over $\vars[O]$, and let $\vars[S] \subseteq \vars[O]$, and $\MAG_{\vars[S]}$ be the marginal MAG over $\vars[S]$. For any three disjoint sets of vertices $\vars[X], \vars[Y], \vars[Z] \subseteq \vars[S]$, $\vars[X]$ and $\vars[Y]$ are m-separated by $\vars[Z]$ in $\MAG_{\vars[S]}$ if and only if $\vars[X]$ and $\vars[Y]$ are m-separated by $\vars[Z]$ in $\MAG$.
\end{lemma}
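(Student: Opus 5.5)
This result is cited from \citet{mokhtarian2025recursive,richardson2002ancestral}, so the plan is to reduce it to the standard marginalization theory for ancestral graphs. The key fact is Richardson and Spirtes' construction of the marginal MAG $\MAG_{\vars[S]}$. Treat $\vars[O]\setminus\vars[S]$ as additional latent variables and apply the latent projection (Theorem 4.18 and the marginalization results of \citet{richardson2002ancestral}). Two vertices $X,Y\in\vars[S]$ are made adjacent in $\MAG_{\vars[S]}$ if and only if there is an inducing path between them relative to $\vars[O]\setminus\vars[S]$ in $\MAG$. The edge marks are set by ancestral relations: a tail at $X$ if $X\in\An(Y,\MAG)$, and an arrowhead otherwise.

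First, I would show that $\MAG_{\vars[S]}$ is a MAG. It is ancestral because ancestral relations among $\vars[S]$ are inherited from $\MAG$. It is maximal by the inducing-path characterization of maximality.

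Second, I would prove the equivalence of m-separations. Fix disjoint $\vars[X],\vars[Y],\vars[Z]\subseteq\vars[S]$.
\begin{itemize}
\item \emph{From $\MAG$ to $\MAG_{\vars[S]}$.} Suppose a path is m-connecting given $\vars[Z]$ in $\MAG_{\vars[S]}$. Replace each of its edges by the corresponding inducing path in $\MAG$, and concatenate to obtain a sequence connecting the endpoints. The endpoint marks are preserved, via the ancestor relations encoded in the marks. Collider status at vertices of $\vars[S]$ is therefore inherited, and the inner vertices of the inducing paths are colliders that are ancestors of the endpoints, hence of $\vars[X]\cup\vars[Y]\cup\vars[Z]$. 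Applying \cref{lemma:sequence-m-separate} to this sequence then yields m-connection in $\MAG$.
\item \emph{From $\MAG_{\vars[S]}$ to $\MAG$.} Suppose a path $\pi$ is m-connecting given $\vars[Z]$ in $\MAG$. Its vertices lie in $\Anplus(\vars[X]\cup\vars[Y]\cup\vars[Z],\MAG)$ by \cref{lemma:m-connecting-path}. Segment $\pi$ at its vertices in $\vars[S]$ that are either non-colliders or colliders in $\Anplus(\vars[Z])$. Each segment whose interior lies outside $\vars[S]$, or whose interior colliders are ancestors of $\vars[Z]$, can be rerouted. Following Richardson--Spirtes (Lemma 3.13 / Theorem 3.18), one converts it into an inducing path, or into an m-connecting path via the augmented-graph criterion \cref{lemma:augmented-graph-separated}. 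This gives an m-connecting path in $\MAG_{\vars[S]}$.
\end{itemize}

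The cleaner route for the second direction avoids the path surgery and uses \cref{lemma:augmented-graph-separated} on both graphs. The plan is to show that, for $\vars[T]=\vars[X]\cup\vars[Y]\cup\vars[Z]$, separation in $(\MAG[\Anplus(\vars[T],\MAG)])^a$ and in $(\MAG_{\vars[S]}[\Anplus(\vars[T],\MAG_{\vars[S]})])^a$ coincide on $\vars[T]$.

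The main obstacle is exactly this equality. The ancestor sets differ: one contains vertices outside $\vars[S]$, the other does not. One must show that collider-connectivity through the eliminated vertices is faithfully captured by the new edges of $\MAG_{\vars[S]}$. I would handle it by marginalizing one vertex at a time, by induction on $|\vars[O]\setminus\vars[S]|$, so that only a single removed vertex $W$ must be tracked. Its neighbors in the augmented graph become pairwise collider-connected exactly when the projection adds the corresponding edges.
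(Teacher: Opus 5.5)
The paper gives no argument of its own for this lemma. It is imported from \citet{richardson2002ancestral}, namely the marginalization result you cite as Theorem 4.18, specialised to no selection variables with $\vars[O]\setminus\vars[S]$ treated as latent, and from \citet{mokhtarian2025recursive}. Your opening reduction is therefore the paper's, and that citation alone settles the statement. The self-contained derivation you build on top of it would not go through as written, in either direction.

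\textbf{First bullet (lifting an m-connecting path of $\MAG_{\vars[S]}$ to $\MAG$).} Mark preservation holds only for arrowheads. If $V\notin\An(W,\MAG)$, every inducing path between $V$ and $W$ is into $V$. But a tail $V\rightarrow W$ in $\MAG_{\vars[S]}$ only says $V\in\An(W,\MAG)$, and every inducing path may still be into $V$.

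Moreover, the interior colliders of an inducing path are only ancestors of its endpoints, hence of $\vars[X]\cup\vars[Y]\cup\vars[Z]$. \Cref{lemma:sequence-m-separate} needs every collider in $\Anplus(\vars[Z],\MAG)$, so it does not apply.

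Concretely, take $U\rightarrow V\rightarrow S_1\rightarrow W$ and $V\leftrightarrow L\rightarrow W$, and marginalize $L$. Then $\MAG_{\vars[S]}$ contains $V\rightarrow W$ and $U\rightarrow W$, whose only inducing paths are $V\leftrightarrow L\rightarrow W$ and $U\rightarrow V\leftrightarrow L\rightarrow W$. With $\vars[Z]=\emptyset$, the m-connecting paths $U\rightarrow V\rightarrow W$ and $U\rightarrow W$ of $\MAG_{\vars[S]}$ both become sequences with the collider $V$, which has no descendant in $\vars[Z]$.

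The repair has three steps:
\begin{enumerate}
\item Check that every vertex of the concatenated sequence lies in $\Anplus(\vars[X]\cup\vars[Y]\cup\vars[Z],\MAG)$.
\item Check that every non-collider avoids $\vars[Z]$. Each non-collider is either in $\vars[O]\setminus\vars[S]$ or was already a non-collider on the original path, because arrowheads are preserved.
\item Split the sequence at its non-colliders into collider segments, and conclude with \cref{lemma:augmented-graph-separated}.
\end{enumerate}

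\textbf{Second bullet (projecting an m-connecting path of $\MAG$ to $\MAG_{\vars[S]}$).} The segments between consecutive $\vars[S]$-vertices are generally not inducing paths. Their colliders are ancestors of $\vars[Z]$, not of the segment endpoints. For example, $x\rightarrow L\leftarrow y$ with $L\rightarrow z$ and $\vars[Z]=\{z\}$ projects to $x\rightarrow z\leftarrow y$. Building the new path is the real content, and Lemma 3.13 and Theorem 3.18 do not supply it.

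Your augmented-graph route can work, but its crux is only asserted. Put $\vars[T]=\vars[X]\cup\vars[Y]\cup\vars[Z]$, $\vars[A]=\Anplus(\vars[T],\MAG)$, and remove a single vertex $w$. You must show two things:
\begin{itemize}
\item If $w\notin\vars[A]$, the projection adds no edge inside $\vars[A]$.
\item If $w\in\vars[A]\setminus\vars[T]$, the augmented graph of the projection on $\vars[A]\setminus\{w\}$ has the same separations among subsets of $\vars[T]$ as $(\MAG[\vars[A]])^a$ with $w$ eliminated (its neighbours joined). This needs a genuine argument, for instance rerouting collider paths through a child of $w$ in $\vars[A]$.
\end{itemize}
The induction also needs two facts you do not mention: iterated one-vertex projections compose to $\MAG_{\vars[S]}$, and they preserve ancestor relations among retained vertices. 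Without these, that direction is a plan rather than a proof.
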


\begin{proof}[Proof of \cref{prop:local-undirected-ind-graph}]
    By construction, the output $\overline{\g}_{\vars[K]}$ of \cref{alg:constructuig} is derived from the Markov blankets of variables within $\vars[K]$. Thus, we have $\overline{\g}_{\vars[K]} = (\MAG_{\vars[K]})^a$ by \cref{prop:augmented-graph-mb}.
    We first prove that $(\MAG_{\vars[K]})^a$ is an undirected independence graph of $\MAG$ relative to $\vars[K]$. For any three disjoint sets of vertices $\vars[X], \vars[Y], \vars[Z] \subseteq \vars[K]$, if $\vars[X]$ and $\vars[Y]$ are separated by $\vars[Z]$ in $(\MAG_{\vars[K]})^a$, then $\vars[X]$ and $\vars[Y]$ are m-separated by $\vars[Z]$ in $\MAG_{\vars[K]}$ by~\cref{prop:augmented-graph}. By \cref{lemma:local-MAG-m-separates}, $\vars[X]$ and $\vars[Y]$ are also m-separated by $\vars[Z]$ in $\MAG$. Thus, $(\MAG_{\vars[K]})^a$ is an undirected independence graph of $\MAG$ relative to $\vars[K]$.

    Then, we prove that $(\MAG_{\vars[K]})^a$ is a minimal undirected independence graph of $\MAG$ relative to $\vars[K]$. Assume the resulting graph that removed an edge $\langle X, Y \rangle$ in $(\MAG_{\vars[K]})^a$ is still an undirected independence graph of $\MAG$ relative to $\vars[K]$. Let $\vars[Z] = \vars[K]\setminus\{X, Y\}$. In the resulting graph, $X$ and $Y$ are separated by $\vars[Z]$. However, in $\MAG_{\vars[K]}$, $X$ and $Y$ are not m-separated by $\vars[Z]$ since $X$ is in the Markov blanket of $Y$ in $\MAG_{\vars[K]}$ by \cref{prop:augmented-graph-mb}, \ie, they are collider connected in $\MAG_{\vars[K]}$. By \cref{lemma:local-MAG-m-separates}, $X$ and $Y$ are also not m-separated by $\vars[Z]$ in $\MAG$. This contradicts the assumption that the resulting graph is still an undirected independence graph of $\MAG$ relative to $\vars[K]$. Therefore, $(\MAG_{\vars[K]})^a$ is a minimal undirected independence graph of $\MAG$ relative to $\vars[K]$.
\end{proof}

\subsection{Proof of~\cref{theorem:sound-complete}}
\label{Appendix:sound-complete}

\begin{proof}[Proof of~\cref{theorem:sound-complete}]
The proof proceeds in two steps:
(1) we show that \textsc{DiCoLa} recovers the correct global skeleton over $\vars[O]$, and
(2) we show that the subsequent orientation phase yields the correct PAG representing the Markov equivalence class of the underlying MAG.

\textbf{\emph{Soundness and Completeness of the Skeleton.}}
We establish the soundness and completeness of the skeleton by analyzing each stage of \textsc{DiCoLa}.

\emph{Top-down decomposition.}
By~\cref{prop:local-undirected-ind-graph}, all tripartitions $(\vars[A], \vars[B], \vars[C])$ returned by \textsc{FindDecomposition} across the decomposition tree satisfy $\vars[A] \CI \vars[B] \mid \vars[C]$ in the underlying MAG.
Therefore, each recursive decomposition performed by \textsc{DiCoLa} is sound.

\emph{Local skeleton learning.}
Consider a leaf problem of the decomposition tree corresponding to a subset $\vars[K] \subseteq \vars[O]$.
By construction, no valid tripartition exists for $\vars[K]$, and \textsc{DiCoLa} directly invokes the base structure learning algorithm (e.g., FCI) on $\vars[K]$.
Since the base algorithm is assumed to be sound and complete, it correctly recovers the local skeleton over $\vars[K]$.
That is, an undirected edge between $X, Y \in \vars[K]$ is present in the local skeleton if and only if there exists no subset $\vars[Z] \subseteq \vars[K]$ such that $X \CI Y \mid \vars[Z]$.

\emph{Bottom-up merging.}
Since the local skeletons at the leaf problems are correct, it follows from~\cref{theorem:merged-skeleton} that the merged skeleton at each internal problem is also correct.
In particular, for a parent problem with variable set $\vars[A] \cup \vars[B] \cup \vars[C]$, the merged skeleton $\g[L]_{\vars[A] \cup \vars[B] \cup \vars[C]}$ contains an undirected edge between $X$ and $Y$ if and only if there exists no subset $\vars[Z] \subseteq \vars[A] \cup \vars[B] \cup \vars[C]$ such that $X \CI Y \mid \vars[Z]$.

By merging the local skeletons at the leaf problems, the final merged skeleton at the root $\vars[O]$ coincides with the true skeleton of the underlying MAG.

\textbf{\emph{Soundness and Completeness of Orientation.}}
After the skeleton recovery phase, \textsc{DiCoLa} has obtained the correct global skeleton $\g[L]_{\vars[O]}$.
Moreover, by~\cref{theorem:recursive-m-separation,theorem:recursive-m-separation-second,theorem:merged-skeleton}, all non-adjacent vertex pairs identified during local skeleton learning are associated with correct separating sets.
In addition, by~\cref{prop:local-undirected-ind-graph}, all non-adjacent vertex pairs identified during the decomposition phase also have correct separating sets.

Given that (i) the global skeleton $\g[L]_{\vars[O]}$ is correct, and (ii) every non-adjacent pair of vertices in $\g[L]_{\vars[O]}$ is associated with a correct separating set, the standard orientation rules for PAGs \citep{zhang2008completeness} are guaranteed to correctly identify all invariant edge marks.
Consequently, \textsc{DiCoLa} outputs the PAG corresponding to the Markov equivalence class of the underlying MAG.
\end{proof}







    
    

\section{Supplementary Experiments}
\label{Appendix:Experiments}
All experiments are conducted on a machine equipped with an Intel Core i9--14900K CPU and 64\,GB of RAM.
For Markov Blanket discovery, we employ the Total Conditioning (TC) algorithm~\cite{pellet2008using} as implemented in the \textit{rcd} package~\cite{mokhtarian2025recursive}~\footnote{The framework can also be combined with more recent MB discovery methods designed for high-dimensional settings, such as STMB~\cite{gao2016efficient}, EAMB~\cite{guo2022error}, and CFS-MI~\cite{ling2022light}, as well as methods that use smaller conditioning sets, such as MMMB~\cite{tsamardinos2003time} and HITON-MB~\cite{aliferis2003hiton}.}. Subsequently, to identify vertex separators within the UIG, we utilize the junction tree routines provided by the \emph{NetworkX} library~\cite{hagberg2008exploring}.
For conditional independence testing, we apply the Fisher Z-transformation~\cite{fisher1915frequency} with a significance level of $\alpha = 0.01$ across all methods.
Our source code is included in the supplementary materials.

All experimental results are summarized in Table~\ref{tab:results_summary_appendix}, with corresponding figures presented below.
Marginalizing latent variables may substantially densify the induced MAGs relative to the underlying generating DAGs~\cite{richardson2002ancestral,zhang2008completeness}; therefore, we also report the average degrees of the induced MAGs in \cref{tab:results_summary_appendix}. 
More details of those real-world structures can be found at \url{https://www.bnlearn.com/bnrepository/}.

\begin{table}[hpt!]
\centering
\caption{Summary of experimental settings and corresponding results.}
\label{tab:results_summary_appendix}
\begin{tabular}{lccccc}
\toprule
\multicolumn{6}{c}{\textbf{Random Structures}} \\
\addlinespace[3pt]
\textbf{Underlying Structure} & $|\vars[V]|$ & $|\vars[L]|$ & \textbf{Avg. Degree in DAGs} & 
\textbf{Avg. Degree in Induced MAGs} &
\textbf{Corresponding Results} \\
\midrule
ER$(30,3)$ graphs & 30 & 3 & 3.00 & 3.81 & \cref{fig:ER30-3-3-appendix} \\
ER$(30,5)$ graphs & 30 & 3 & 5.00 & 7.28 & \cref{fig:ER30-5-3-appendix} \\
ER$(50,3)$ graphs & 50 & 5 & 3.00 & 3.83 & \cref{fig:ER50-3-5-appendix} \\
ER$(50,3)$ graphs & 50 & 7 & 3.00 & 4.31 & \cref{fig:ER50-3-7-appendix} \\
\addlinespace
\midrule
\multicolumn{6}{c}{\textbf{Real-World Structures}} \\
\addlinespace[3pt]
\textbf{Underlying Structure} & $|\vars[V]|$ & $|\vars[L]|$ & \textbf{Avg. Degree in DAGs} & 
\textbf{Avg. Degree in Induced MAGs} &
\textbf{Corresponding Results} \\
\midrule
MILDEW network & 36 & 3 & 2.63 
& 2.73 & \cref{fig:mildew-appendix} \\
BARLEY network & 48 & 4 & 3.50 
& 4.03 & \cref{fig:barley-appendix} \\
ANDES network & 223 & 10 & 3.03 
& 3.53 & \cref{fig:andes-appendix} \\
LINK network & 724 & 50 & 3.11 
& 3.21 & \cref{fig:link-appendix} \\
\bottomrule
\end{tabular}
\end{table}

\begin{figure}[hpt!]
    \centering
    \includegraphics[width=1.0\linewidth]{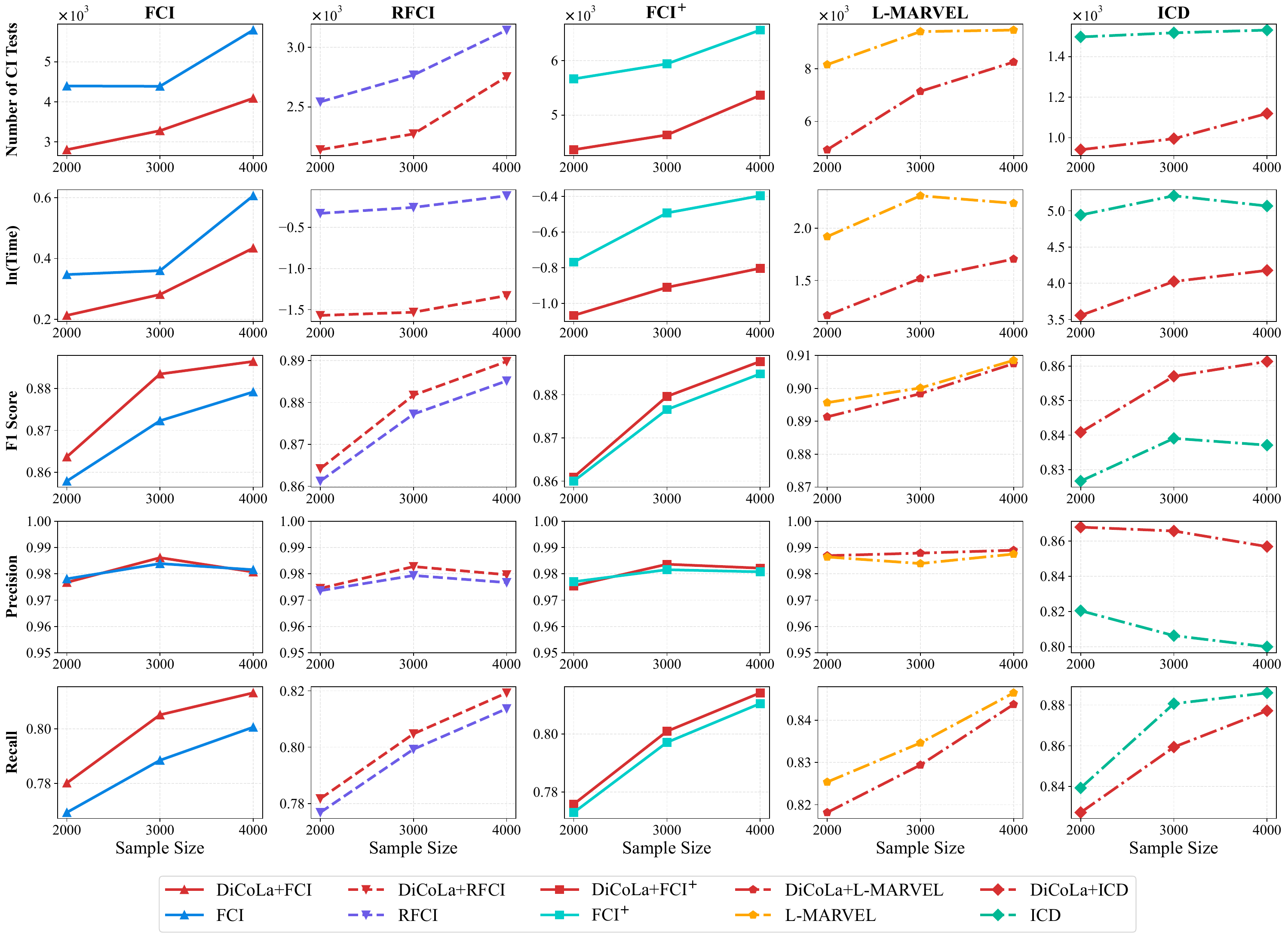}
    \caption{Performance on ER(30, 3) graphs with 3 latent variables.}
    \label{fig:ER30-3-3-appendix}
\end{figure}
\begin{figure}[hpt!]
    \centering
    \includegraphics[width=1.0\linewidth]{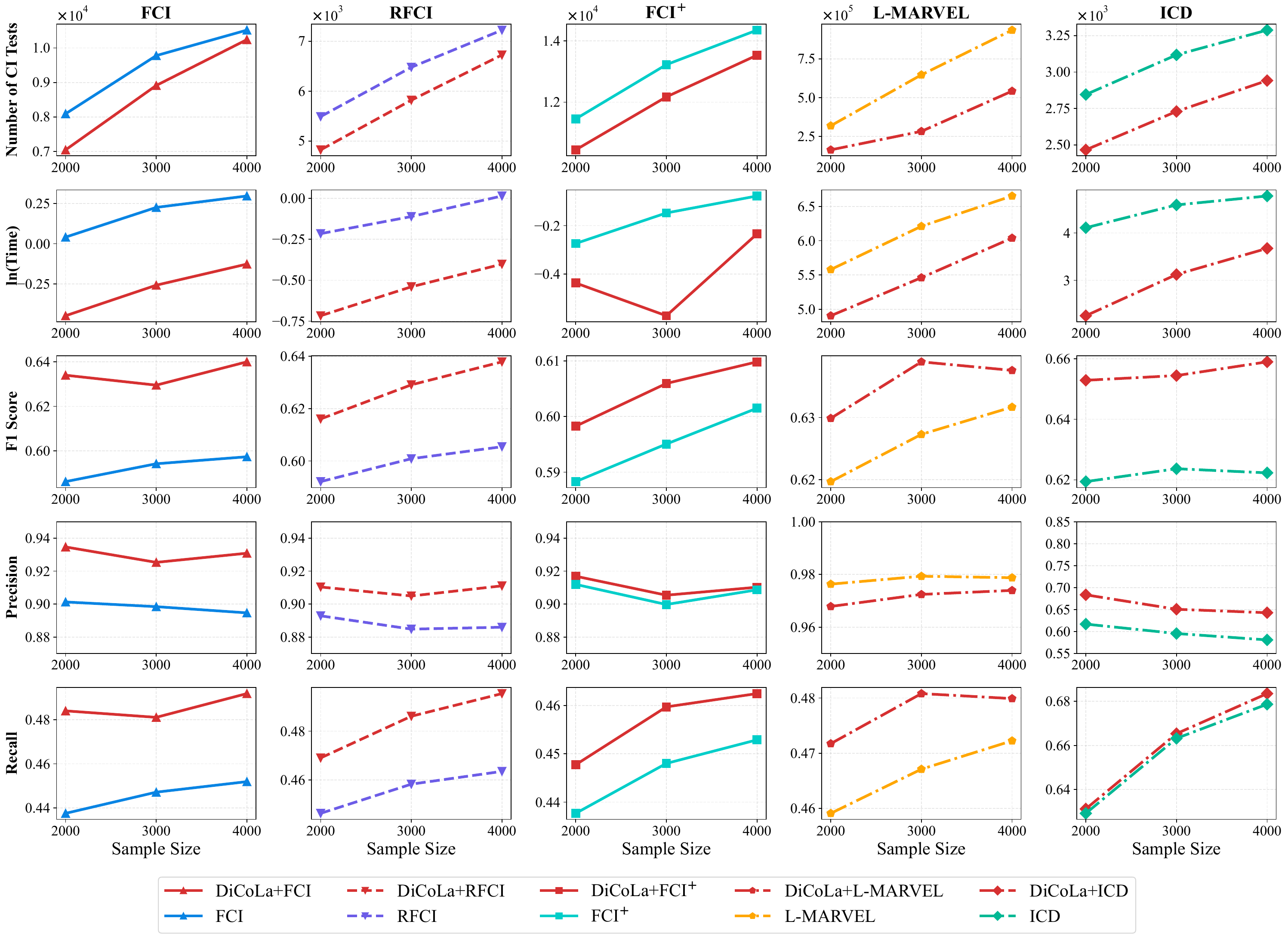}
    \caption{Performance on ER(30, 5) graphs with 3 latent variables.}
    \label{fig:ER30-5-3-appendix}
\end{figure}

\begin{figure}[hpt!]
    \centering
    \includegraphics[width=1.0\linewidth]{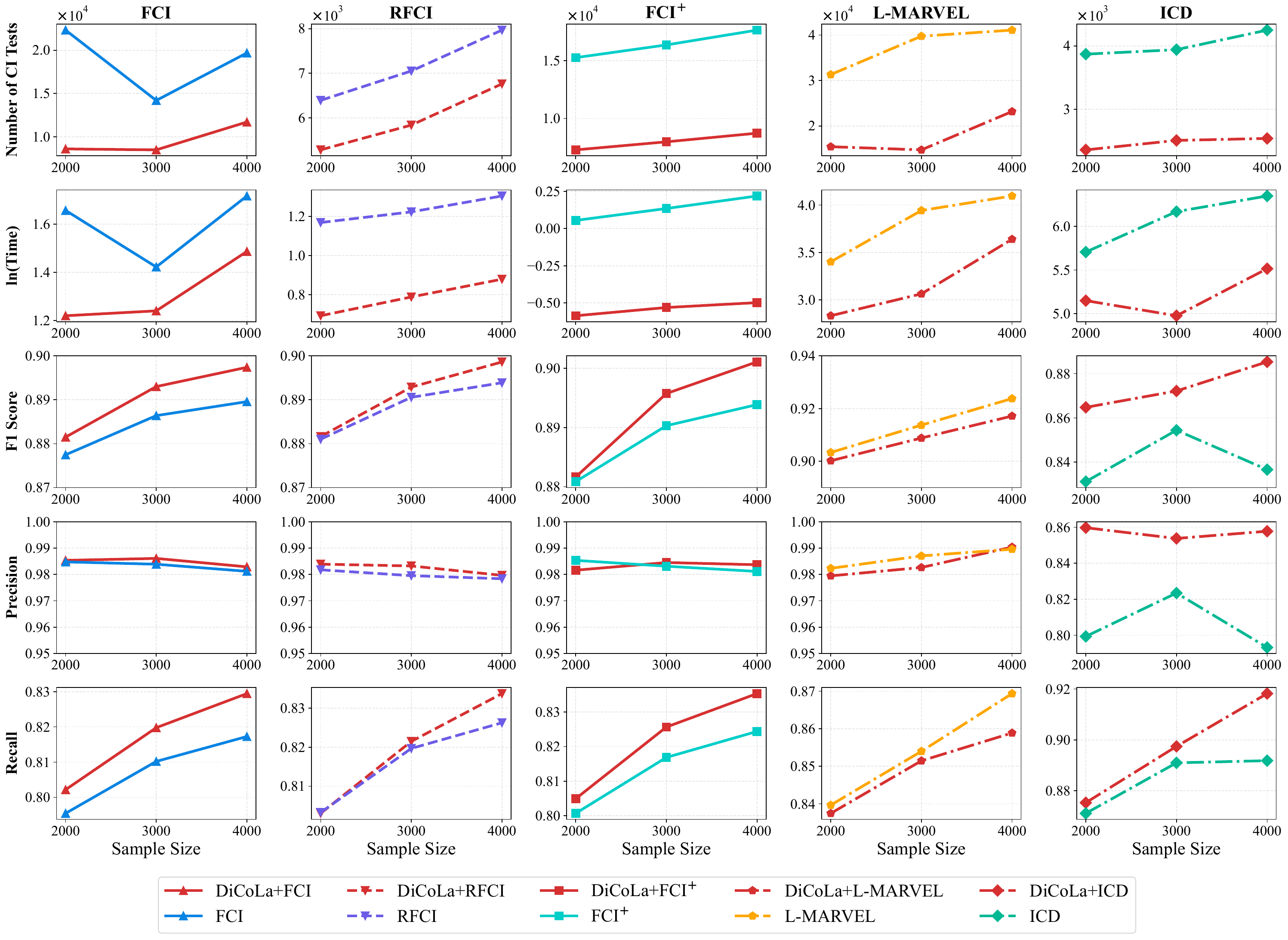}
    \caption{Performance on ER(50, 3) graphs with 5 latent variables.}
    \label{fig:ER50-3-5-appendix}
\end{figure}
\begin{figure}[hpt!]
    \centering
    \includegraphics[width=1.0\linewidth]{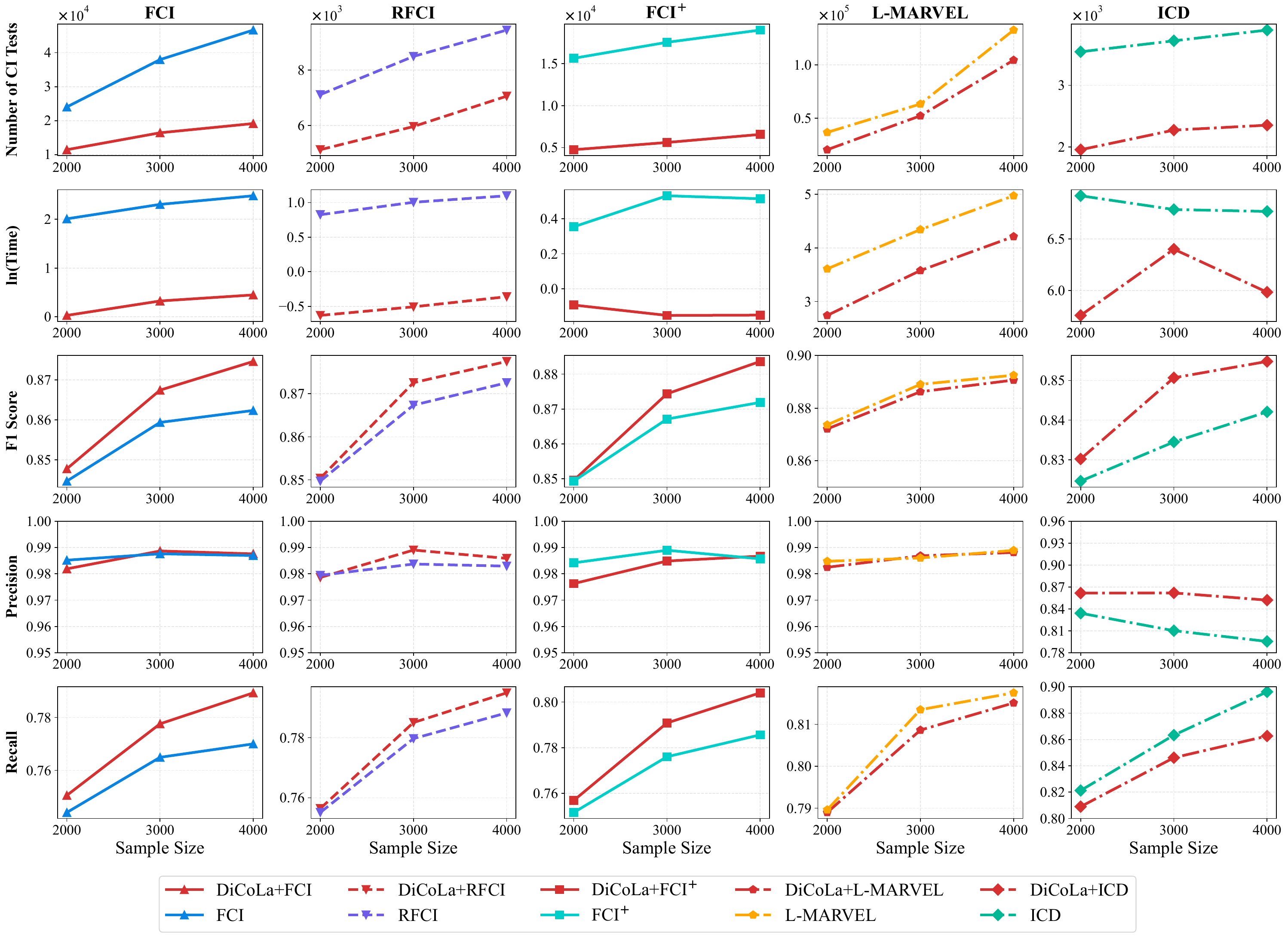}
    \caption{Performance on ER(50, 3) graphs with 7 latent variables.}
    \label{fig:ER50-3-7-appendix}
\end{figure}
\begin{figure}[hpt!]
    \centering
    \includegraphics[width=1.0\linewidth]{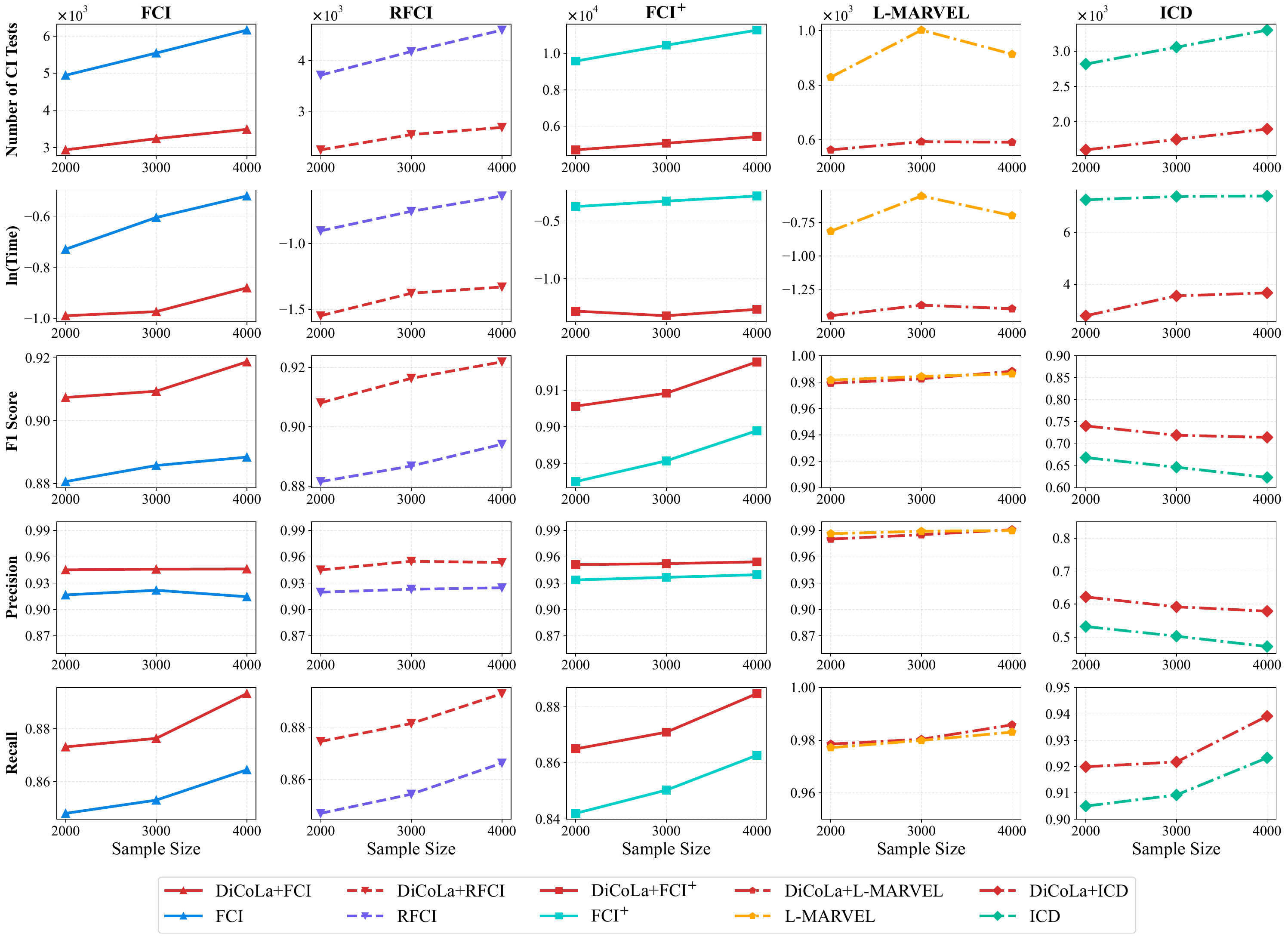}
    \caption{Performance on MILDEW network with 3 latent variables.}
    \label{fig:mildew-appendix}
\end{figure}
\begin{figure}[hpt!]
    \centering
    \includegraphics[width=1.0\linewidth]{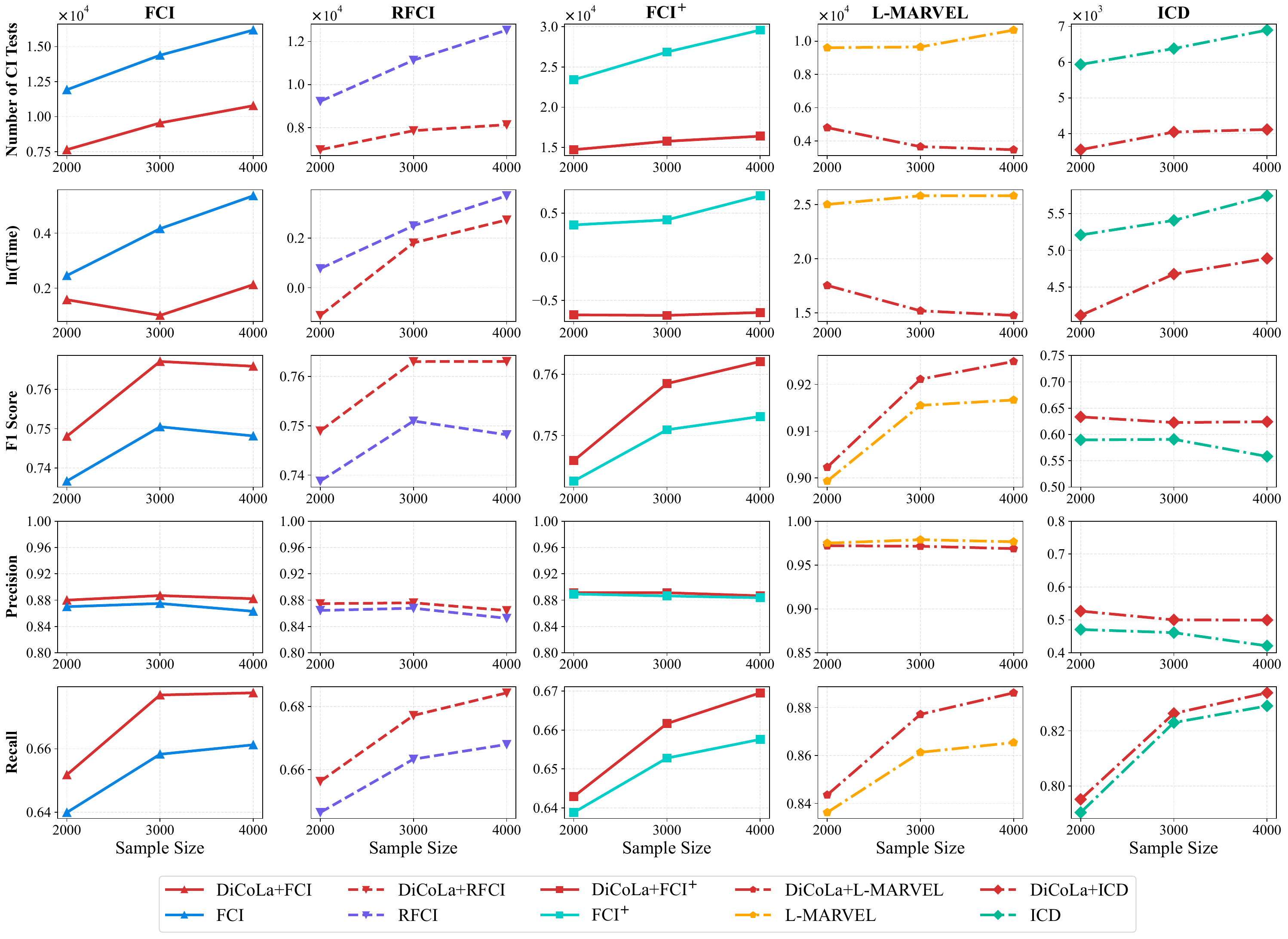}
    \caption{Performance on BARLEY network with 4 latent variables.}
    \label{fig:barley-appendix}
\end{figure}
\begin{figure}[hpt!]
    \centering
    \includegraphics[width=1.0\linewidth]{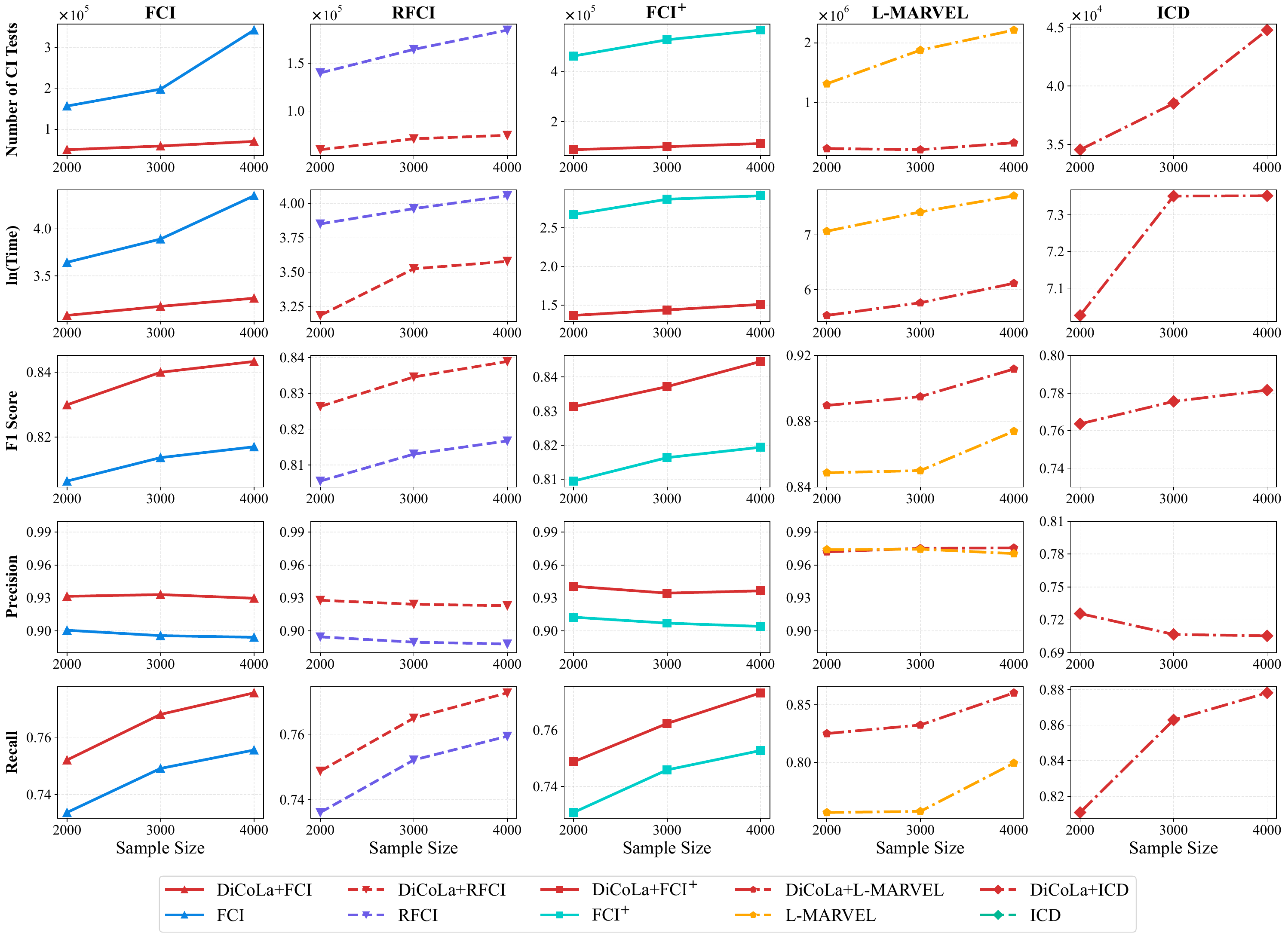}
    \caption{Performance on ANDES network with 10 latent variables.}
    \label{fig:andes-appendix}
\end{figure}
\begin{figure}[hpt!]
    \centering
    \includegraphics[width=1.0\linewidth]{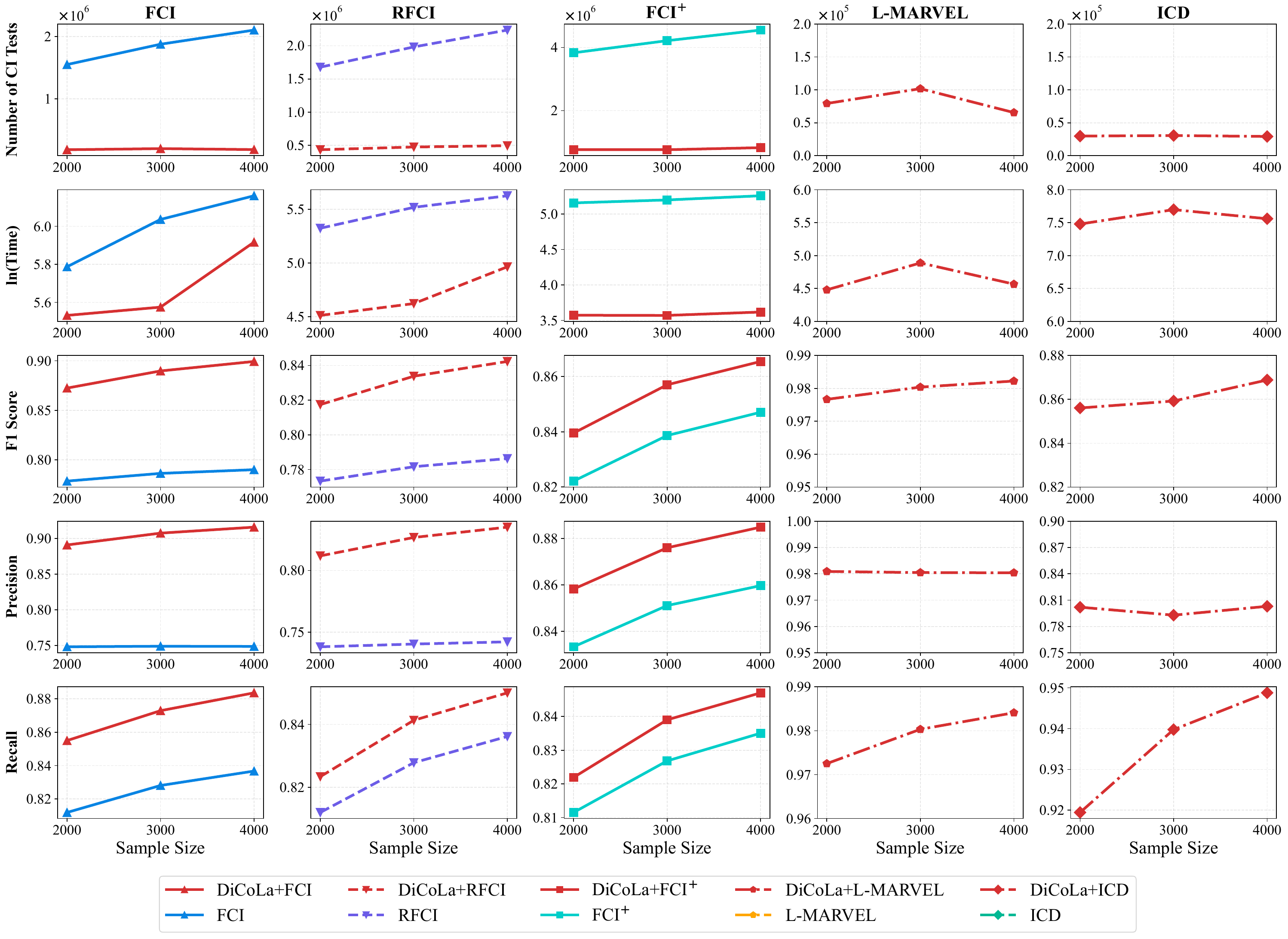}
    \caption{Performance on LINK network with 50 latent variables.}
    \label{fig:link-appendix}
\end{figure}

\clearpage

\section{Limitations and Discussion}
\label{app:limitations-discussion}

We discuss the scope and limitations of \textsc{DiCoLa} from three perspectives. 
First, \cref{app:dense-graph-discussion} examines its performance on dense graphs. 
Second, \cref{app:local-learning-discussion} discusses its relation to local causal structure learning-based methods, which address a different task. 
Third, \cref{app:learning-causal-models-latent-variables} clarifies its connection to methods for learning causal models with latent variables under additional assumptions.

\subsection{Limitations under Dense Structures}
\label{app:dense-graph-discussion}

The efficiency gain of \textsc{DiCoLa} is most pronounced on sparse or well-decomposable graphs. As discussed in the complexity analysis in \cref{sec:DiCoLa-framework}, this gain stems from reducing the size of the largest leaf subproblem, denoted by $k_{\max}$, relative to the full variable set size $n$. When useful separators exist, recursive decomposition splits the original problem into substantially smaller subproblems, leading to fewer conditional independence tests and lower runtime. In denser graphs, such separators become harder to identify, and the resulting subproblems become larger, making $k_{\max}$ closer to $n$. Thus, the computational benefit of \textsc{DiCoLa} gradually decreases, and in the extreme case where no valid tripartition can be found, \textsc{DiCoLa} simply falls back to the base structure learning algorithm on the full variable set.

\begin{table}[hpt!]
\centering
\caption{Average degrees of the induced MAGs for $\mathrm{ER}(40,d)$ graphs with 2 latent variables.}
\label{tab:degree}
\begin{tabular}{lcccc}
\toprule
\textbf{Underlying Structure} & $|\vars[V]|$ & $|\vars[L]|$ & \textbf{Avg. Degree in DAGs} & 
\textbf{Avg. Degree in Induced MAGs}  \\
\midrule
ER$(40,3)$ graphs & 40 & 2 & 3.00 & 3.33  \\
ER$(40,4)$ graphs & 40 & 2 & 4.00 & 4.83 \\
ER$(40,5)$ graphs & 40 & 2 & 5.00 & 6.20  \\
ER$(40,6)$ graphs & 40 & 2 & 6.00 & 7.83  \\
\bottomrule
\end{tabular}
\end{table}

\begin{figure}[hpt!]
    \centering
    \begin{subfigure}{1.0\linewidth}
        \centering
        \includegraphics[width=\linewidth]{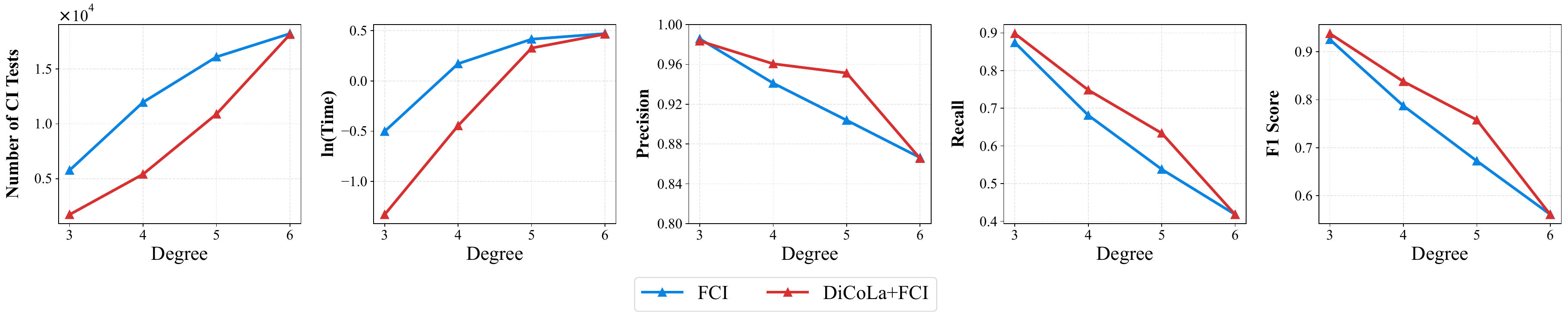}
        \label{fig:fci-degree-appendix}
    \end{subfigure}
    \hfill
    \begin{subfigure}{1.0\linewidth}
        \centering
        \includegraphics[width=\linewidth]{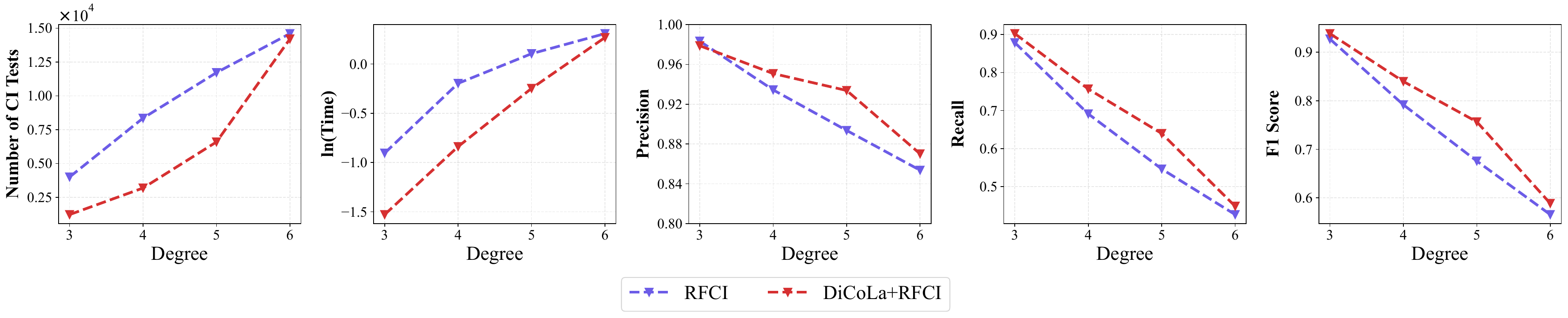}
        \label{fig:rfci-degree-appendix}
    \end{subfigure}
    \caption{Performance on $\mathrm{ER}(40,d)$ graphs with 2 latent variables.}
    \label{fig:degree-appendix}
\end{figure}

Importantly, this limitation affects the efficiency gain rather than the correctness of the framework. The soundness and completeness guarantees of \textsc{DiCoLa} remain valid whenever the guarantees of the base learner hold.

To empirically examine this behavior, we conduct additional experiments using two classical constraint-based methods, FCI and RFCI, as base learners. Specifically, we evaluate them on $\mathrm{ER}(40,d)$ graphs with 2 latent variables and sample size 2000, where the average degree $d$ of the underlying DAG varies from 3 to 6. Notably, marginalizing latent variables can make the induced MAGs noticeably denser than the generating DAGs~\cite{richardson2002ancestral}; the average degrees of the induced MAGs are reported in \cref{tab:degree}. The results in \cref{fig:degree-appendix} show a clear trend: as the induced MAG becomes denser, the computational gain of \textsc{DiCoLa} gradually decreases, while its structural accuracy remains comparable to that of the corresponding base learner. For example, when using FCI as the base learner, the reduction in the number of CI tests decreases from $70\%$ to $0.4\%$ as $d$ increases from 3 to 6. Similarly, when using RFCI as the base learner, the reduction decreases from $69\%$ to $2.7\%$. These results confirm that \textsc{DiCoLa} is most beneficial on sparse or decomposable graphs and gradually approaches the behavior of the base learner on denser graphs.

\subsection{Relation to Local Causal Structure Learning-Based Methods}
\label{app:local-learning-discussion}

We further discuss the relation between \textsc{DiCoLa} and local causal structure learning methods in the presence of latent variables. Local methods, such as MMB-by-MMB~\cite{xie2024local}, are designed to identify the local causal structure around a given target variable, e.g., its direct causes and direct effects. In contrast, \textsc{DiCoLa} is a recursive decomposition framework for global causal structure learning: it aims to recover the causal structure over the full observed variable set by decomposing the original problem into smaller subproblems and then integrating their outputs. Therefore, although the learned global structure can be used to extract the local structure of a target variable, \textsc{DiCoLa} is not specifically optimized for the local discovery task.

To empirically examine this difference, we compare \textsc{DiCoLa}+FCI with FCI and MMB-by-MMB on the task of identifying the direct causes and direct effects of a target variable. The experiments are conducted on $\mathrm{ER}(50,3)$ graphs with 5 latent variables under different sample sizes. For each graph, we randomly choose target variables among vertices with relatively large adjacency sets. As shown in \cref{tab:local-learning-comparison}, \textsc{DiCoLa}+FCI substantially reduces the number of conditional independence tests and runtime compared with FCI, while maintaining comparable or slightly better structural accuracy. MMB-by-MMB achieves the best performance in this local discovery task, which is expected since it is specifically designed for local causal structure learning. In contrast, \textsc{DiCoLa} targets global structure learning and uses recursive decomposition to accelerate global constraint-based causal discovery. These results suggest that local methods are preferable when only the neighborhood of a prespecified target variable is needed, while adapting the decomposition principle of \textsc{DiCoLa} more directly to local causal structure learning is an interesting direction for future work.

\begin{table}[hpt!]
\centering
\caption{Comparison with local causal structure learning on $\mathrm{ER}(50,3)$ graphs with 5 latent variables.}
\label{tab:local-learning-comparison}
\setlength{\tabcolsep}{4pt}
\begin{tabular}{llccccc}
\toprule
\textbf{Sample Size} 
& \textbf{Algorithm} 
& \textbf{CI Tests} $\downarrow$ 
& \textbf{Time} $\downarrow$ 
& \textbf{Precision} $\uparrow$ 
& \textbf{Recall} $\uparrow$ 
& \textbf{F1} $\uparrow$ \\
\midrule
\multirow{3}{*}{2000}
& \textsc{DiCoLa}+FCI & 2664.46 & 0.25 & 0.71 & 0.73 & 0.72 \\
& MMB-by-MMB          & \textbf{909.48} & \textbf{0.07} & \textbf{0.76} & \textbf{0.77} & \textbf{0.76} \\
& FCI                 & 8623.34 & 0.54 & 0.70 & 0.73 & 0.71 \\
\midrule
\multirow{3}{*}{3000}
& \textsc{DiCoLa}+FCI & 3164.88 & 0.29 & 0.75 & 0.76 & 0.75 \\
& MMB-by-MMB          & \textbf{1311.88} & \textbf{0.10} & \textbf{0.77} & \textbf{0.78} & \textbf{0.77} \\
& FCI                 & 10089.20 & 0.62 & 0.72 & 0.73 & 0.72 \\
\midrule
\multirow{3}{*}{4000}
& \textsc{DiCoLa}+FCI & 3421.24 & 0.30 & 0.73 & 0.79 & 0.75 \\
& MMB-by-MMB          & \textbf{1763.56} & \textbf{0.14} & \textbf{0.78} & \textbf{0.79} & \textbf{0.78} \\
& FCI                 & 11742.28 & 0.72 & 0.71 & 0.74 & 0.72 \\
\bottomrule
\end{tabular}
\begin{minipage}{0.70\textwidth}
\footnotesize
Note: $\uparrow$ indicates that a higher value is better, while $\downarrow$ indicates that a lower value is better. The best result in each group is highlighted in bold.
\end{minipage}
\end{table}

We also note that local-graph-based acceleration methods, such as lFCI~\cite{chen2023causal}, are related to \textsc{DiCoLa} in that both aim to improve the efficiency of causal structure learning in the presence of latent variables. However, the two approaches exploit different sources of efficiency. lFCI accelerates FCI by restricting the search for separating sets to local graphs induced by short paths between variable pairs. It relies on additional locality assumptions, such as the underlying graph admits small local separators and that the relevant conditional dependencies can be determined locally; for maximally informative PAG recovery, it further requires a local discriminating-path assumption. In contrast, \textsc{DiCoLa} improves efficiency through recursive decomposition of the global problem and does not impose such local-graph assumptions on the underlying MAG.

\subsection{Relation to Learning Causal Models with Latent Variables}
\label{app:learning-causal-models-latent-variables}

We further clarify the relation between \textsc{DiCoLa} and methods for learning causal models with latent variables. \textsc{DiCoLa} follows the standard nonparametric constraint-based setting with latent variables, where the goal is to learn the causal structure among observed variables. It does not aim to explicitly recover the latent variables themselves.

This setup differs from methods that explicitly model or identify latent structures, or causal structures involving latent variables, under additional assumptions. For example, \citet{squires2022causal} focus on causal structure discovery between clusters induced by latent factors, under a parametric framework and additional structural assumptions, such as unique latent-parent clusters and a bipartite observed-latent graph. \citet{xie2020generalized,xie2024generalized} consider latent variable causal discovery in linear non-Gaussian models via the generalized independent noise condition. \citet{dong2024versatile} further consider a rank-based framework for latent linear causal models with causally related hidden variables, relying on covariance-rank information together with assumptions such as linearity and rank faithfulness.

These methods can handle richer latent-variable structures, but their guarantees rely on specialized assumptions. In contrast, \textsc{DiCoLa} is designed for general causal structure learning among observed variables in the presence of latent variables, without relying on these assumptions. Combining recursive decomposition with additional latent-variable modeling assumptions is an interesting direction for future work.


\end{document}